\documentclass{article} 
\usepackage{iclr2023_conference,times}


\usepackage{hyperref}
\usepackage{url}
\usepackage{macros}

\title{Model-based Causal Bayesian Optimization}


\author{
        \hspace{-1em} Scott Sussex \\
        \hspace{-1em} ETH Z{ü}rich \\     
        \hspace{-1em} \texttt{scott.sussex@inf.ethz.ch}
        \\
   \And      
        \hspace{-1em} Anastasiia Makarova \\
        \hspace{-1em} ETH Z{ü}rich \\ 
  \And 
      \hspace{-1em} Andreas Krause \\
      \hspace{-1em} ETH Z{ü}rich \\
}

%

\iclrfinalcopy 
\begin{document}

\maketitle

\begin{abstract}

\looseness-1 How should we intervene on an unknown structural causal model to maximize a downstream variable of interest? This optimization of the output of a system of interconnected variables, also known as causal Bayesian optimization (CBO), has important applications in medicine, ecology, and manufacturing. Standard Bayesian optimization algorithms fail to effectively leverage the underlying causal structure. Existing CBO approaches assume noiseless measurements and do not come with guarantees. 
We propose {\em model-based causal Bayesian optimization (\alg)}, an algorithm that learns a full system model instead of only modeling intervention-reward pairs. \alg propagates epistemic uncertainty about the causal mechanisms through the graph and
trades off exploration and exploitation via the optimism principle. We bound its cumulative regret, and obtain the first non-asymptotic bounds for CBO.
Unlike in standard Bayesian optimization, our acquisition function cannot be evaluated in closed form, so we show how the reparameterization trick can be used to apply gradient-based optimizers.  Empirically we find that \alg compares favorably with existing state-of-the-art approaches.

\end{abstract}
\section{Introduction}

Many applications, such as drug and material discovery, robotics, agriculture, and automated machine learning, require optimizing an unknown function that is expensive to evaluate. 
{\em Bayesian optimization (BO)} is an efficient framework for sequential optimization of such objectives \citep{movckus1975}. The key idea in BO is to quantify uncertainty in the unknown function via a probabilistic model, and then use this to navigate a trade-off between selecting inputs where the function output is favourable (exploitation) and selecting inputs to learn more about the function in areas of uncertainty (exploration).
While most standard BO methods focus on a black-box setup (Figure~\ref{fig:overview} a), in practice, we often have more structure on the unknown function that can be used to improve sample efficiency. \looseness-1

In this paper, we {\em exploit structural knowledge in the form of a causal graph} specified by a directed acyclic graph (DAG).
In particular, we assume that actions can be modeled as interventions on a structural causal model (SCM) \citep{pearlCausality2009} that contains the reward (function output) as a variable (Figure~\ref{fig:overview} b). While we assume the graph structure to be known, we consider the {\em functional relations} in the SCM as unknown. All variables in the SCM are observed along with the reward after each action.  This Causal BO setting has important potential applications, such as optimizing medical and ecological interventions \citep{agliettiCausalBayesianOptimization2020}. For illustrative purposes, consider the example of an agronomist trying to find the optimal Nitrogen fertilizer schedule for maximizing crop yield, described in Figure~\ref{fig:overview}. There, the concentration of Nitrogen in the soil causes its concentration in the soil at the later timesteps.  
\looseness-1

To exploit the causal graph structure for optimization, we propose {\em model-based causal Bayesian optimization} (\alg). \alg explicitly models the full SCM and the accompanying uncertainty of all SCM components. This allows our algorithm to select interventions based on an optimistic strategy similar to that used by the upper confidence bound algorithm \citep{srinivas10}. We show that this strategy leads to the first CBO algorithm with a cumulative regret guarantee. For a practical algorithm, maximizing the upper confidence bound in our setting is computationally more difficult, because uncertainty in all system components must be propagated through the entire estimated SCM to the reward variable. We show that an application of the reparameterization trick allows \alg to be practically implemented with common gradient-based optimizers. Empirically, \alg achieves competitive performance on existing CBO benchmarks and a related setting called function network BO \citep{astudilloBayesianOptimizationFunction2021}.\looseness-1

\paragraph{Contributions}
\begin{itemize}
    \item We introduce \alg, a model-based algorithm for causal Bayesian optimization than can be applied with very generic classes of interventions. 
    \item Using \alg we prove the first sublinear cumulative regret bound for CBO. We show how the bound scales depending on the graph structure. We demonstrate that CBO can lead to a potentially \emph{exponential} improvement in cumulative regret, with respect to the number of actions, compared to standard BO. 
    \item By an application of the reparameterization technique, we show how our algorithm can be efficiently implemented with popular gradient-based optimizers.
    \item \looseness -1 We evaluate \alg on existing CBO benchmarks and the related setting of function network BO. Our results show that \alg performs favorably compared to methods designed specifically for these tasks. 
\end{itemize}

\begin{figure*}[t]
\minipage{0.5\textwidth}
\centering
\textbf{Bayesian Optimization}
\vskip 0.41in
\begin{tikzpicture}[scale = 0.9, obs/.style={circle, draw=black!100, fill=black!0, thick, minimum size=7mm},
dotarget/.style={circle, dashed, draw=black!100, fill=black!0, thick, minimum size=7mm},
square/.style={rectangle, draw=black!100, fill=black!0, thick, minimum size=2mm},
]
\node[square] (a0) at (1.5, 0) {$\bm a_2$};
\node[square] (a1) at (0, 0) {$\bm a_1$};
\node[square] (a2) at (-1.5, 0) {$\bm a_0$};

\node[obs] (d1) at (0, 1.5) {$Y$};

\draw[->-, thick] (a0) -- (d1);
\draw[->-, thick] (a1) -- (d1);
\draw[->-, thick] (a2) -- (d1);

\end{tikzpicture}
\centering
\vskip 0.41in
\textbf{(a)}
\endminipage\hfill \vline
\minipage{0.5\textwidth}
\centering
\textbf{Causal Bayesian Optimization}
\vskip 0.1in
\begin{tikzpicture}[scale = 0.9, obs/.style={circle, draw=black!100, fill=black!0, thick, minimum size=7mm},
dotarget/.style={circle, dashed, draw=black!100, fill=black!0, thick, minimum size=7mm},
square/.style={rectangle, draw=black!100, fill=black!0, thick, minimum size=2mm},
]

\node[square] (a0) at (0, 1.5) {$\bm a_1$};
\node[square] (a1) at (0, 0) {$\bm a_2$};
\node[square] (a2) at (0, -1.5) {$\bm a_3$};
\node[obs] (d0) at (3, 1.5) {$X_0$};
\node[obs] (d1) at (1.5, 1.5) {$X_1$};
\node[obs] (d2) at (1.5, 0) {$X_2$};
\node[obs] (d3) at (1.5, -1.5) {$X_3$};
\node[obs] (d4) at (3, 0) {$Y$};

\draw[->-, thick] (a0) -- (d1);
\draw[->-, thick] (a1) -- (d2);
\draw[->-, thick] (a2) -- (d3);
\draw[->-, thick] (d0) -- (d1);
\draw[->-, thick] (d1) -- (d2);
\draw[->-, thick] (d2) -- (d3);
\draw[->-, thick] (d3) -- (d4);
\draw[->-, thick] (d2) -- (d4);
\draw[->-, thick] (d1) -- (d4);
\end{tikzpicture}
\centering
\vskip 0.1in
\textbf{(b)}
\endminipage\hfill \vline

\begin{center}
\caption{\looseness -1 A visual comparison between the modelling assumptions of BO vs CBO. Circular nodes represent observed variables, squares represent action inputs and $Y$ is the reward. Algorithms select $\a$ before observing $\bX$ and $Y$. \textbf{(a)} In standard BO, the DAG has the structure shown regardless of the problem structure. \textbf{(b)} The DAG corresponding to our stylised agronomy example, where we aim to maximize crop yield $Y$. CBO takes this DAG as input for designing actions. $X_0$ is an unmodifiable starting property of the soil, and $X_1\dots X_3$ are the measured amounts of Nitrogen in the soil at different timesteps. Each observation is modelled with its own Gaussian process. $\a_1 \dots \a_3$ are possible interventions involving adding Nitrogen fertilizer to the soil.}
\label{fig:overview}
\end{center}
\vskip -0.2in
\end{figure*}
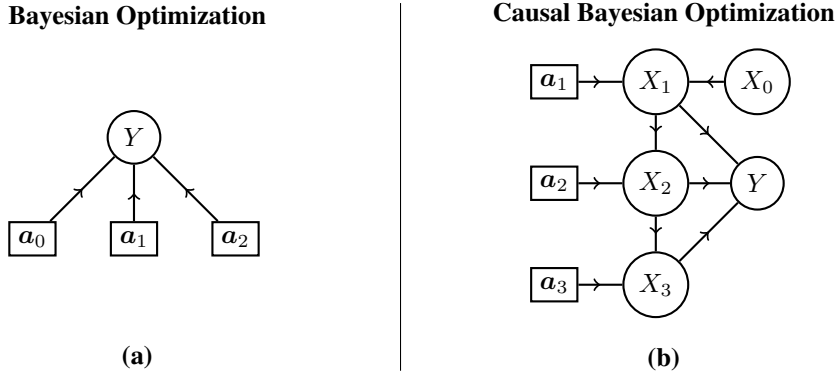

\section{Background and Problem Statement}
\label{sec:setup}
We consider the problem of an agent interacting with an SCM for $T$ rounds in order to maximize the value of a particular target variable. We  start with introducing SCMs and the kinds of interventions an agent can perform on an SCM.  In the following, we denote with $[m]$ the set of integers $\{0, \dots, m\}$. \looseness-1

\paragraph{Structural Causal Models}
An SCM is described by a tuple $\langle \G,  Y, \bX, \fs, \snoiserv \rangle$ of the following elements: $\G$ is a known DAG; $Y$ is the reward variable of interest; $\bX = {\{X_i\}_{i=0}^{m-1}}$ is a set of observed random variables; the set $\fs = \{\fofi\}_{i=0}^m$ defines the functional relations between these variables; and $\snoiserv = \{\snoiserv_i \}_{i=0}^{m}$ is a set of independent noise variables with zero-mean and known distribution. \looseness-1
 
We use the notation $Y$ and $X_m$ interchangeably and assume the elements of $\bX$ are topologically ordered, i.e., $X_0$ is a root and $X_m$ is a leaf.  We use the notation $\pa_i \subset \{0, \dots, m\}$ for the indices of the parents of the $i$th node, and $\bZi = \{ X_j\}_{j \in pa_i}$ for the parents of the $i$th node. We sometimes use $X_i$ to refer to both the $i$th node and the $i$th random variable. \looseness-1\looseness-1

Each $X_i$ is generated according to the function $\fofi: \calZ_i \rightarrow \calX_i$, taking the parent nodes $\bZi$ of $X_i$ as input: $\si =\fofi(\zi) + \snoise_i$, where lowercase denotes a realization of the corresponding random variable. The reward is a scalar $x_m \in \R$. An observation $X_i$ is defined over a compact set $\si \in \calX_i \subset \R^{d}$, and its parents are defined over $\calZ_i = \prod_{j \in pa_i}  \calX_j$ for $i\in [m-1]$. \looseness-1

\paragraph{Interventions}

\looseness -1 At each interaction round, the agent performs an \emph{intervention} on the SCM. In this work, we consider two types of intervention models.

We consider a soft intervention model \citep{eberhardt2007interventions} where interventions are parameterized by controllable \emph{action variables}. Let $\calA_i \subset \R^q$ denote the compact space of an action $\bm{a_i}$ and $\calA$ be the space of all actions $\bm a = {\{ \bm a_i\}_{i=0}^{m}}$. We represent the actions as additional nodes in $\G$ (see \cref{fig:overview}): $\ai$ is a parent of only $X_i$, and hence an additional input to $\fofi$. Since $\fofi$ is unknown, in our soft intervention model, the agent does not know apriori the functional effect of $\ai$ on $\bXi$. A simple example of a soft intervention is a shift intervention $\si= \fofi(\zi, \ai) + \snoise_i = g_i(\zi) + \ai + \snoise_i$ for some function $g_i$. A shift intervention might occur in our example of adding Nitrogen fertilizer to soil and then measuring the total soil Nitrogen concentration. \looseness-1

While our theoretical results will focus on data obtained via soft interventions, our experiments also consider two other data sources. First, we consider a hard intervention model: hard interventions (often referred to as $do$-interventions) modify the targeted variable to a specific distribution independently of the variable's parents. For example, a doctor sets the dosage of a patient's medication, which fixes the dosage to a specific value \citep{agliettiCausalBayesianOptimization2020}.  
Second, a special case under both intervention models is the collection of observational data, which is when no intervention is performed on the system. In the soft intervention model, not intervening on node $i$ is equivalent to setting $\a_i = \bm 0$. An example would be not applying any Nitrogen fertilizer to the soil. In practice, the agent may have access to some previous observational data before its first interaction with the system. In the following, we introduce the problem setup under the soft intervention model and then adapt it to the hard intervention model. \looseness-1

\paragraph{Constraints on interventions} 
In many applications, we may not be able to intervene on all nodes simultaneously. For example, a farmer may only have the capacity to apply fertilizer at two out of three possible time windows in \cref{fig:overview}. This results in an action space with cardinality constraints, written as \looseness-1 
\begin{equation}
    \calA = \left\{\a = \{\ai\}_{i=0}^m: \sum_{i=0}^m \bm 1_{[\a_i \neq \bm 0]}\leq c, c \geq 1 \right\}.
     \label{eq:cardinality_constraints} 
\end{equation}\looseness-1 

\looseness-1
\paragraph{Problem statement}
We consider the problem of an agent sequentially interacting with an SCM, with known DAG $\G$ and a fixed but unknown set of functions $\fs = \{\fofi\}_{i=1}^m$ with $\fofi: \calZ_i \times \A_i \rightarrow \calX_i$. 
At round $t$ we select actions $\at = \{\ait\}_{i=0}^m$ and obtain observations $\st = \{\sit\}_{i=0}^m$, where we add an additional subscript to denote the round of interaction. The action $\ait$ and the observation $\sit$ are related by
\begin{align}
\label{eq:groud_truth}
& \sit = \fofi(\zit, \ait) + \snoise_{i, t}, \ \ \forall i \in [m].
\end{align}
If $i$ corresponds to a root node, the parent vector $\zit$ denotes an empty vector, and the output of $\fofi$ only depends on the action $\ait$. 
Since we cannot intervene on the target variable $X_m$, we fix $\bm a_m = \bm 0$.
The reward is given by \looseness-1
\begin{align}
\label{eq:groud_truth_target}
& y_t = f_m(\bm z_{m, t}, \bm a_{m, t}) + \snoise_{m, t},
\end{align}
which implicitly depends on the whole intervention vector $\at$. We define the action that maximizes the expected reward by
\looseness-1
\begin{align}
    \bm a^* = \argmax_{ \bm a \in \cal A} \E{y | \bm a},
    \label{eq:problem_statement}
\end{align}
where, unless otherwise stated, expectations are taken over noise $\snoise$. 

\looseness-1
\paragraph{Performance metric} Our agent's goal is to design a sequence of interventions $\{\at\}_{t=0}^{T}$ that achieves a high average expected reward. We hence study the \emph{cumulative regret} \citep{lattimore2020bandit} over a time horizon $T$ : \looseness-1
\begin{equation}
    \label{eq:cumulative_regret}
    R_T = \sum_{t=1}^T \Bigl[\E{y | \bm a ^*} - \E{y | \at}\Bigr].
\end{equation}
A sublinear growth rate of $R_T$ with $T$ implies vanishing average regret: $R_T /T \rightarrow 0$ as $T \rightarrow \infty$. As an alternative to cumulative regret, one can also study the simple regret $\E{y | \bm a ^*} - \E{y | \a_{T}}$. The most appropriate metric depends on the application. In the Nitrogen fertilizer example, cumulative regret might be preferable because we care about obtaining high crop yields across all years, not just in one final year.  \looseness-1

\textbf{Index notation} Let $\s_{i, t} = \left[ \s_{i, t, 1}, \dots, \s_{i, t, d} \right]^T$ denote a vector where $\s_{i, t, l}$ indicates indexing the component $l \in [d]$ of the $t$th timepoint of the observations at the node $i \in [m]$. For functions with vector output, e.g., $\fofi: \calZ_i \rightarrow \X_i$, we sometimes consider notation with additional input to $\fofi$ that indicates the output dimension: $\fofi(\z, \a) = \left[ \fofil(\z,\a, 1), \dots, \fofil(\z, \a, d) \right]^T$. 

\textbf{Regularity assumptions} We consider standard smoothness assumptions for the unknown functions $\fofi:\mathcal{S} \rightarrow \X_i$ defined over a compact domain $\mathcal{S}$ \citep{srinivas10}. In particular, for each node $i \in [m]$, we assume that $\fofi(\cdot)$ belongs to a reproducing kernel Hilbert space (RKHS) $\mathcal{H}_{k_i}$, a space of smooth functions defined on $\calS = \calZ_i \times \calA_i$.
This means that $\fofil \in \mathcal{H}_{k_i}$ is induced by a kernel function $k_i: \tilde{\calS} \times  \tilde{\calS} \rightarrow \mathbb{R}$ where $\tilde{\calS} = \calS \times [d]$\footnote{For vector-valued functions coming from an RKHS we consider a scalar-valued function where the output index is part of the function input, as described in \citet{curiEfficientModelBasedReinforcement2020} Appendix F.}. We also assume that $k_i(s,s') \leq \bm 1$ for every $s, s' \in \tilde{\calS}$\footnote{This is known as the bounded variance property, and it holds for many common kernels.}. Moreover, the RKHS norm of $\fofi(\cdot)$ is assumed to be bounded $\|\fofi\|_{k_i} \leq \mathcal{B}_i$ for some fixed constant $\mathcal{B}_i>0$.  Finally, to ensure the compactness of the domains $\Z_i$, we assume that the noise $\snoise$ is bounded, i.e., $\snoise_i \in \left[-1,1\right]^{d}$.\footnote{This assumption can be relaxed to $\snoise_i$ being sub-Gaussian using similar techniques to \citet{curiEfficientModelBasedReinforcement2020} Appendix I. Though sub-Gaussian noise includes distributions with unbounded support, \citet{curiEfficientModelBasedReinforcement2020} provide high probability bounds on the domain of $\Zi$.} \looseness-1

\paragraph{Problem statement under hard interventions}
Under a hard intervention model, instead of selecting an action $\a$ in \cref{eq:problem_statement}, the agent must select both a set of intervention targets $I \in \mathcal{I} \subset \mathcal{P}([m-1])$ and their values $\a_I \in \calA_I \in \calA$. For hard intervention we can rewrite \cref{eq:groud_truth} as

\begin{equation}
\label{eq:hard-interventions}
\si = 
\begin{dcases}
\ai & \text{if} \ i \in I \\
\fofi(\zi) + \snoise_i & \text{otherwise,}
\end{dcases} \qquad\quad \forall\, i \in [m],
\end{equation}

 where $\fofi$ is unknown and employs the same regularity assumptions. Further constraints similar to \cref{eq:cardinality_constraints} can be placed on either the intervention nodes $I$ or action values $\bm a_I$. Finally, observational data corresponds to the empty intervention set $I = \emptyset$.
\subsection{Related Work}
\label{sec:related}
Optimal decision-making in SCMs has been the subject of several recent works, for example, in the bandit setting \citep{lattimore2016,bareinboim2015}. \citet{agliettiCausalBayesianOptimization2020} introduce causal BO (CBO) focusing on hard interventions. CBO considers unobserved confounding and uses the do-calculus to estimate $Y$ given $I, \a_I$ using both observational data and interventional data with the same intervention targets $I$. 
\citet{agliettiMultitaskCausalLearning2020} extend CBO to make use of data obtained from hard interventions with different intervention targets. While both methods use do-calculus to estimate causal effects, they do not learn the full system model.  \citet{Branchini2022} and \citet{Alabed_2022} extend these works to explore the CBO setting in the case of unknown DAG. \looseness-1 

Function network BO (FNBO) \citep{astudilloBayesianOptimizationFunction2021} is similar to the CBO setup with soft interventions and the proposed algorithm uses an expected improvement acquisition function to select actions. MCBO generalizes the FNBO setup to a richer class of problems since the causal model formalism allows for modelling hard interventions. Moreover, in contrast to MCBO, FNBO assumes the system is noiseless which might be a restrictive assumption in practice.
\citet{kusakawaBayesianOptimizationCascadetype2021} study stochastic function networks in the special case of a chain graph. Similar to MCBO, they develop a UCB-based acquisition function and provide an accompanying cumulative regret guarantee. However, they do not employ a reparameterization trick to show how to optimize the acquisition function. \citet{kusakawaBayesianOptimizationCascadetype2021} do not perform any empirical study of the proposed UCB-based method, but evaluate  expected improvement and credible interval methods also developed for the chain graph setting. Both FNBO and CBO are part of a wider research direction on using intermediate observations from the computation of the unknown function to improve the sample efficiency of BO algorithms \citep{astudillo2021thinking}.  \looseness-1

In concurrent work, \citet{varici2022causal} consider a similar setup and prove a cumulative regret guarantee for a UCB-based algorithm with soft interventions. However, they focus only on linear SCMs and a binary action space for each intervention target, while MCBO applies to non-linear SCMs and a continuous action space.\looseness-1

The function class studied in this paper is similar to that of deep Gaussian processes (GPs) \citep{damianou_deep_2013} in that \alg models $Y$ as a composition of GPs given $\a$. Deep GPs, however, do not make use of intermediate system variables and do not compose GPs according to a causal graph structure. \looseness-1

Our use of the reparameterization trick to practically implement an upper confidence bound acquisition function \citep{srinivas10} in \alg is inspired by \citet{curiEfficientModelBasedReinforcement2020}, who apply ideas from BO to design an algorithm for sample efficient reinforcement learning.

\section{Algorithm}

In this section, we propose the MCBO algorithm, describing the probabilistic model and acquisition function used. We first introduce \alg under the soft intervention setup and then describe how to adapt it to hard interventions.  \looseness-1

\paragraph{Statistical model} 

We use Gaussian processes (GPs) for learning the RKHS functions $\bm{f}_0, \dots,  \bm{f}_m$ from observations. 
Our regularity assumptions permit the construction of confidence bounds using these GP models with priors associated with the RKHS kernels. We refer to  \citet{rasmussen2003gaussian} for more background on the relation between GPs and RKHS functions.  For all $i \in [m]$, let $\bmu_{i,0}$ and $\bsigma_{i,0}^2$ denote the prior mean and variance functions for each $\fofi$, respectively. Since $\snoise_i$ is bounded, it is also subgaussian and we denote the variance by $\noisescalei^2$. The corresponding posterior GP mean and variance, denoted by $\bmu_{i,t}$ and $\bsigma^2_{i,t}$ respectively, are computed based on the previous evaluations $\dataset_{t} = \{\ztt, \att, \stt\}$. In particular, for each function $\fofil(\cdot, \cdot, l)$ defined by the given kernel $k_i$ and output component $l$: \looseness=-1 
\begin{align}
    &\mu_{i,t}(\zi,\ai, l) =\bm k_{i,t}(\zi,\ai, l)^{\top}(\bm K_{t} + \noisescalei^2 \bm I)^{-1} \text{vec}(\sitt) \label{eq:gp_mean} \ , \\
 \label{eq:gp_var}
    & \sigma^2_{i,t}(\zi, \ai, l) = k_i((\zi,\ai, l);(\zi, \ai, l)) - \bm k_{i, t}(\zi,\ai,l)^\top(\bm K_{t} + \noisescalei^2 \bm I)^{-1}\bm k_{i, t}(\zi,\ai, l)  \ ,
\end{align}
 where $\bm I$ denotes the identity matrix, $\text{vec}(\sitt) = \left[\s_{i, 1, 1}, \s_{i, 1, 2}, \dots , \s_{i, t, d} \right]^{\top}$ and for $(t_1, l), (t_2, l') \in \left[(1,1), (1,2), \dots, (t, d)\right]$: 
 \begin{align}
 & {\left[ \bm K_{t} \right]_{(t_1, l), (t_2, l')} = k_i((\z_{i, t_1, l}, \a_{i, t_1, l}, l);  (\z_{i, t_2, l'}, \a_{i, t_2, l'}, l'))},  \nonumber\\
 & \bm k_{i,t}(\zi,\ai, l)^{\top} = [k_i((\z_{i, 1, 1}, \a_{i, 1, 1}, 1); (\zi, \ai, l)), \dots, k_i((\z_{i, t, d}, \a_{i, t,  d}, d); (\zi, \ai, l)) ]^{\top}. \nonumber
\end{align}
 We write $\bmu_{i, t} = \left[\mu_{i, t}(\cdot, 1), \dots, \mu_{i, t}(\cdot, d) \right]^T$ and similarly for $\bsigma_{i,t}$. We give more background on the posterior updates of vector-valued GPs in \cref{app:gp_background}.

At time $t$, the known set $\calM_t$ of statistically plausible functions $\tfs = \{\tfi\}_{i=0}^m$  (functions that lie inside the confidence interval given by the posterior of each GP) is defined as:

\begin{equation}
\label{eq:calibrated}
\begin{split}
    \calM_t = \bigg\{& \tfs = \{\tfi\}_{i=0}^m  ,\ \text{s.t.}\ \forall i: \ \tfi \in \mathcal{H}_{k_i}, \ \|\tfi\|_{k_i} \leq \mathcal{B}_i, \\
    & \text{and} \ \abs{\tfi(\zi, \ai) - \bmu_{i, t-1}(\zi, \ai)} \leq \beta_{i,t} \bsigma_{i,t-1}(\zi, \ai), \ \forall \z_i \in \calZ_i, \a_i \in \calA_i \bigg\}.
\end{split}
\end{equation}

Here, $\beta_{i,t}$ is a parameter that ensures the validity of the confidence bounds. Some examples of concentration inequalities under similar regularity assumptions as well as explicit forms for $\beta_{i, t}$ can be found in \citet{chowdhury2019online} and \citet{srinivas10}. In the following, we set $\beta_{i, t} =\beta_t$ for all $i$ such that the confidence bounds in \cref{eq:calibrated} are still valid.  

\paragraph{Acquisition function} At each round  $t$, interventions are selected based on maximizing an acquisition function. Our acquisition function is based on the upper confidence bound acquisition function \citep{srinivas10}. That is, we optimistically pick interventions that yield the highest expected return among all system models that are still plausible given past observations:
\begin{align}\label{eq:acqfn}
    \at = \argmax_{\bm a \in \calA} \max_{ \tfs \in \calM_t} \E[\snoise]{y \mid \tfs, \bm a}.
\end{align}
Note that \cref{eq:acqfn} is not amenable to commonly used optimization techniques, due to the maximization over a set of functions with bounded RKHS norm.

Therefore, following \citet{curiEfficientModelBasedReinforcement2020}, we use the reparameterization trick to write any $\tfi \in \tfs \in \calM_t$ using a function $\etai :\calZ_i \times \calA_i \rightarrow [-1, 1]^{d_i}$ as: \looseness-1
\begin{equation}
\tfit(\zoi, \aoi) = \bmu_{i, t-1}(\zoi, \aoi) + \beta_{t} \bsigma_{i, t-1}(\zoi, \aoi) \circ \etai (\zoi, \aoi),
\label{eq:reparametrization}
\end{equation}
where $\soi = \tfi(\zoi, \aoi) + \tilde{\snoise}_i$ denotes observations from simulating actions in one of the plausible models, and not necessarily the true model. $\circ$ denotes the elementwise multiplication of vectors. This reparametrization allows for rewriting our acquisition function in terms of $\etas: \Z \times \A \to [-1,1]^{|\X|}$: \looseness-1
\begin{equation}
\at = \argmax_{\bm a \in \calA} \max_{\etas(\cdot)} \E[\snoise]{y \mid \tfs, \bm a}, \ \ \text{s.t.  } \tfs = \{\tfit\} \text{\ \  in \cref{eq:reparametrization}} . 
\label{eq:reparam_acquisition}
\end{equation}
Intuitively, the variables $\etas$ allow for choosing optimistic but plausible models given the confidence bounds. In practice, the function $\etas$ can be parameterized by, for example, a neural network, and then standard optimization techniques are applied. For the theory, we assume access to an oracle providing the global optimum for \cref{eq:reparam_acquisition}. In practice, such an oracle may be computationally infeasible due to the non-convexity of \cref{eq:reparam_acquisition}. We discuss heuristics that we use for approximating this oracle in \cref{app:heuristics}.

\cref{alg:model-based-cbo} summarizes our Model-based Causal BO approach.
We note that for the special case of the SCM following the DAG of \cref{fig:overview}(a), our algorithm and the associated guarantees reduce to standard BO \citep{srinivas10}.  
\paragraph{Hard interventions}
MCBO also naturally generalizes to hard interventions (\cref{alg:model-based-cbo-hard-interventions}). In our experiments, we perform the combinatorial optimization over the set of nodes $\mathcal{I}$ by enumeration because  $\abs{\mathcal{I}}$ is not large for the instances we consider. We apply the notion of a minimal intervention set from \citet{lee2019structural} to prune sets of intervention targets that contain redundant interventions, resulting in a smaller set to optimize over. \looseness-1

\begin{algorithm}[t]
    \caption{Model-based Causal BO (\alg)}
    \label{alg:model-based-cbo}
    \begin{algorithmic}[1]  
                \REQUIRE Parameters $\{\beta_{t}\}_{t\geq1}, \G, \snoiserv$, kernel functions $k_{i}$ and prior means $\bmu_{i,0} = \bm 0$ $\forall i \in [m]$ 
            	\FOR{$t = 1, 2, \hdots$}
            	   \STATE Construct confidence bounds as in \cref{eq:calibrated}
                    \STATE Select 
                       $\bm a_t \in \argmax_{\bm a \in \calA} \max_{\etas (\cdot)} \mathbb E [y \mid \{\tilde{\bm{f}}\}, \bm a]$ as in \cref{eq:reparam_acquisition}
                    \STATE Observe samples $\{\zit, \sit\}_{i=0}^m$ 
                    \STATE Use $\dataset_{t}$ to update posterior $\{\bmu_{i,t}(\cdot), \bsigma^2_{i,t}(\cdot) \}_{i=0}^m$ as in \cref{eq:gp_mean,eq:gp_var}
                \ENDFOR 
            \end{algorithmic}
      \end{algorithm}

\begin{algorithm}[t]
    \caption{Model-based Causal BO with Hard Interventions (\alg)}
    \label{alg:model-based-cbo-hard-interventions}
    \begin{algorithmic}[1]  
                \REQUIRE Parameters $\{\beta_{t}\}_{t\geq1}, \G, \snoiserv$, kernel functions $k_{i}$ and prior means $\bmu_{i,0} = \bm 0$  $\forall i \in [m]$ 
            	\FOR{$t = 1, 2, \hdots$}
            	   \STATE Construct confidence bounds as in \cref{eq:calibrated}
                    \STATE Select 
                       $I, \a_I \in \argmax_{I, \a_I} \max_{\etas} \mathbb E [y \mid \{\tilde{\bm{f}}\}, do(X_I = \a_I)]$
                    \STATE Observe samples $\{\zit, \sit\}_{i=0}^m$
                    \STATE Use $\dataset_{t}$
                    to update posterior $\{\bmu_{i,t}(\cdot), \bsigma^2_{i,t}(\cdot) \}_{i=0}^m$  as in \cref{eq:gp_mean,eq:gp_var}
                \ENDFOR 
            \end{algorithmic}
      \end{algorithm}

\newpage
\section{Theoretical Analysis}
\label{sec:analysis}
This section describes the convergence guarantees for \alg under a soft intervention model, showing the first sublinear cumulative regret bounds for causal BO. We start by introducing additional technical assumptions required for the analysis. \looseness-1
\begin{assumption}[\textit{Functional relations}]
  All $\fofi \in \fs$ are $L_f$-Lipschitz continuous.
  \label{as:f_lipschitz}
  \label{as:noise_sub_gaussian}
\end{assumption} 

\begin{assumption}[\textit{Continuity}]
    \looseness -1 
    $\forall i, t$, the functions $\bmu_{i, t}$, $\bsigma_{i, t}$ are $L_\mu$,  $L_\sigma$ Lipschitz continuous.
    \label{as:model_lipschitz}
\end{assumption}

\begin{assumption}[\textit{Calibrated model}]
    All statistical models are {\em calibrated} w.r.t. $\fs$, so that $\forall i, t$ there exists a sequence $\beta_t \in \R_{>0}$ such that, with probability at least $(1 - \delta)$,  for all $\zi, \ai \in \Zi \times \Ai$ we have $| \fofi(\zi, \ai) - \bmu_{i, t-1}(\zi, \ai) | \leq \beta_{t} \bsigma_{i, t-1}(\zi, \ai)$, element-wise.
    \label{as:well_calibrated_model}
\end{assumption}
Assumption~\ref{as:f_lipschitz} follows directly from the regularity assumptions of \cref{sec:setup}. Assumption~\ref{as:model_lipschitz} holds if the RKHS of each $\fofi$ has a Lipschitz continuous kernel (see \citet{curiEfficientModelBasedReinforcement2020}, Appendix G). Assumption~\ref{as:model_lipschitz} restricts the convergence guarantees to soft interventions that affect their target variable in a smooth way, meaning that our analysis does not directly apply to the hard intervention model. We nevertheless experimentally demonstrate the effectiveness of \alg in non-smooth settings, such as CBO with hard interventions. Assumption~\ref{as:well_calibrated_model} holds when we assume that the $i$th GP prior uses the same kernel as the RKHS of $\fofi$ and that $\beta_t$ is sufficiently large to ensure the confidence bounds in \cref{eq:calibrated} hold.

In the DAG $\G$ over nodes $\{X_i\}_{i=0}^m$ , let $N$ denote the maximum distance from a root node to $X_m$: $N = \max_i \mathrm{dist}(X_i, X_m)$ where $\mathrm{dist}(\cdot, \cdot)$ is measured as the number of edges in the longest path from a node $X_i$ to the reward node $X_m$. Let $K$ denote the maximum number of parents of any variable in $\calG$: $K = \max_{i} \abs{\pa(i)}$. The following theorem bounds the performance of \alg in terms of cumulative regret. \looseness-1

\begin{restatable}{theorem}{kernelregretbound}
Consider the optimization problem in \cref{eq:problem_statement} with SCM satisfying \cref{as:f_lipschitz,as:well_calibrated_model,as:model_lipschitz} where $\G$ is known  but $\fs$ is unknown. 
  Then for all $T \geq 1$, with probability at least $1-\delta$, the cumulative regret of \cref{alg:model-based-cbo} is bounded by
  $$
     R_T \leq \Or{L_{f}^{\d} L_\sigma^{\d} \beta_{T}^{\d} K^{\d} m\sqrt{ T  \, \gamma_T }  }\,.
  $$
  \label{theorem}
\end{restatable}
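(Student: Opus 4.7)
The plan is to combine an optimism argument with a Lipschitz propagation of epistemic uncertainty through the DAG, and then collapse the resulting sum of per-node confidence widths via the standard information-gain bound. By \cref{as:well_calibrated_model} the true $\bm{f}$ lies in $\mathcal{M}_t$ with probability at least $1-\delta$, so since \cref{alg:model-based-cbo} jointly maximizes over $\bm{a}$ and $\tilde{\bm{f}}\in\mathcal{M}_t$, the chosen optimistic model $\tilde{\bm{f}}_t$ and action $\bm{a}_t$ satisfy $\mathbb{E}[\tilde{y}_t\mid \bm{a}_t] \geq \mathbb{E}[y\mid \bm{a}^*]$. Coupling the true and optimistic SCMs so they share the same noise $\bm{\epsilon}$, this gives the per-round regret bound $r_t \leq \mathbb{E}_{\bm{\epsilon}}[\tilde{y}_t - y_t]$.

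Let $\Delta_{i,t} = \|\tilde{X}_{i,t}-X_{i,t}\|$ under the coupling. Adding and subtracting $\bm{f}_i(\tilde{\bm{Z}}_{i,t},\bm{a}_{i,t})$ and applying the calibration bound $\|\tilde{\bm{f}}_i(\cdot) - \bm{f}_i(\cdot)\|\leq 2\beta_t \|\bm{\sigma}_{i,t-1}(\cdot)\|$ at the optimistic parent value $\tilde{\bm{Z}}_{i,t}$, then invoking the Lipschitz assumptions on $\bm{f}$ and $\bm{\sigma}$ (\cref{as:f_lipschitz,as:model_lipschitz}) to convert $\bm{\sigma}_{i,t-1}(\tilde{\bm{Z}}_{i,t})$ back to $\bm{\sigma}_{i,t-1}(\bm{Z}_{i,t})$, yields
\[
\Delta_{i,t} \leq 2\beta_t \|\bm{\sigma}_{i,t-1}(\bm{Z}_{i,t},\bm{a}_{i,t})\| + (L_f + 2\beta_t L_\sigma)\,\|\tilde{\bm{Z}}_{i,t}-\bm{Z}_{i,t}\|.
\]
Since $\tilde{\bm{Z}}_{i,t}-\bm{Z}_{i,t}$ concatenates the deviations of the at most $K$ parents, $\|\tilde{\bm{Z}}_{i,t}-\bm{Z}_{i,t}\|\leq \sum_{j\in\mathrm{pa}(i)}\Delta_{j,t}$. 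Unrolling this recursion along every path from a node $i$ to the reward node $m$---of which there are at most $K^{d_i}$ with length $d_i\leq N$---and setting $C_t := K(L_f + 2\beta_t L_\sigma)$, one obtains
\[
\Delta_{m,t} \;\leq\; 2\beta_t \sum_{i=0}^{m} C_t^{d_i}\, \|\bm{\sigma}_{i,t-1}(\bm{Z}_{i,t},\bm{a}_{i,t})\| \;\leq\; 2\beta_t C_t^{N} \sum_{i=0}^{m} \|\bm{\sigma}_{i,t-1}(\bm{Z}_{i,t},\bm{a}_{i,t})\|.
\]

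Taking the expectation over the coupled noise and summing over $t$ then gives
\[
R_T \;\leq\; 2\beta_T C_T^{N} \sum_{i=0}^{m} \mathbb{E}\!\left[\sum_{t=1}^{T} \|\bm{\sigma}_{i,t-1}(\bm{Z}_{i,t},\bm{a}_{i,t})\|\right].
\]
The observed parent vector $\bm{Z}_{i,t}$ has the same conditional distribution as the trajectory induced by a fresh noise realization, and is exactly the query point at which the $i$th GP is updated. Hence a Cauchy--Schwarz step followed by the standard maximum-information-gain bound (Lemma~5.4 of \citet{srinivas10}) yields $\sum_{t=1}^{T} \|\bm{\sigma}_{i,t-1}\| = O(\sqrt{T\gamma_{i,T}})$ in expectation. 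Noting that $C_T^{N} = O\!\bigl((L_f L_\sigma \beta_T K)^{N}\bigr)$ and summing over the $m$ nodes produces the claimed bound.

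The main obstacle is the per-node propagation step: calibration is cleanest when evaluated at the optimistic trajectory $\tilde{\bm{Z}}_{i,t}$, but the information-gain bound requires the variance at the actually observed trajectory $\bm{Z}_{i,t}$; trading between them via \cref{as:model_lipschitz} costs a factor of $L_\sigma$ at every level, and this is precisely the source of the $L_\sigma^{N} \beta_T^{N}$ factor in the theorem statement. Secondary care is needed for the vector-valued outputs (handled by carrying the output-index $l$ through the kernel as in \cref{sec:setup}) and to verify that the conditioning argument for the observed trajectory does not disturb the $\sum_t \|\bm{\sigma}_{i,t-1}\|^2 \leq O(\gamma_{i,T})$ bound.
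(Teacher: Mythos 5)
Your proposal follows essentially the same route as the paper's proof: optimism via the calibrated confidence sets, a noise-coupled Lipschitz recursion through the DAG yielding the $K^N(L_f+2\beta_t L_\sigma)^N\sum_i\|\bm\sigma_{i,t-1}\|$ bound on the trajectory deviation, and a Cauchy--Schwarz plus maximum-information-gain step to collapse the sum of posterior widths. The only point needing the care you already flag is that the expectation over coupled noise means $\bm Z_{i,t}$ is not literally the realized GP query point; the paper resolves this exactly as you would, by passing to a worst-case model complexity $\Gamma_T$ maximized over all input sequences before invoking the information-gain bound.
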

\vskip -0.2in
Here, $\gamma_T = \max_i \gamma_{i, T}$ where the node-specific $\gamma_{i, T}$ denotes the \emph{maximum information gain} at a time $T$ commonly used in the standard regret guarantees for Bayesian optimization \citep{srinivas10}. This maximum information gain is known to be sublinear in $T$ for many common kernels, such as linear and squared exponential kernels, resulting in an overall sublinear regret for \alg. We refer to \cref{app:theorem2} for the proof. \looseness-1

\cref{theorem} demonstrates that the use of the graph structure in $\alg$ results in a potentially \emph{exponential improvement} in how the cumulative regret scales with the number of actions $m$. Standard Bayesian optimization as in \cref{fig:overview} (a), that makes no use of the graph structure, results in cumulative regret \emph{exponential} in $m$ \citep{srinivas10}, when using a squared exponential kernel. When all $X_i$ in \alg are modeled with squared exponential kernels that model output components independently, we have $\gamma_T = \mathcal O(d(Kd +q)(\log T)^{Kd + q +1})$, resulting in a cumulative regret that is exponential in $K$ and exponential in $N$. However, note that $m \geq K+ N$. For several common graphs, the exponential scaling in $N$ and $K$ could be significantly more favorable than the exponential scaling in $m$. Consider the case of $\G$ having the binary-tree-like structure in appendix \cref{fig:task_dags} (binary tree), where $N=\log(m)$ and $K=2$. In such settings, the cumulative regret of \alg will have only \emph{polynomial} dependence on $m$. We further discuss the bound in \cref{theorem} for specific kernels in \cref{app:theorem2} and discuss the dependence of $\beta_T$ on $T$ in \cref{app:particular_kernels}. \looseness-1

\begin{figure*}[t]
\begin{center}\minipage{\columnwidth / 3}
\textbf{a}
\centering
\includegraphics[width=\columnwidth ]{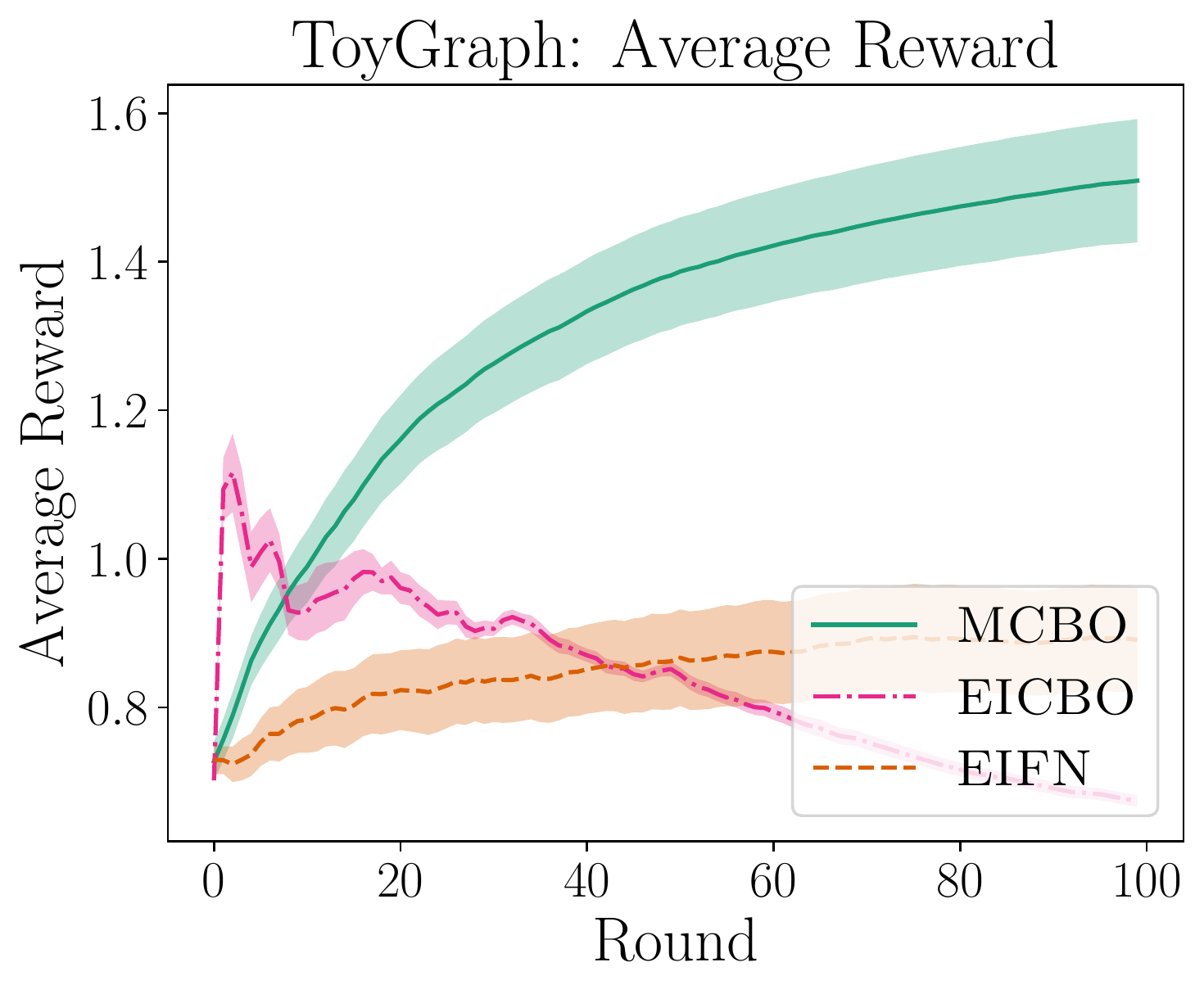}
\endminipage\hfill 
\minipage{\columnwidth / 3}
\textbf{b}
\centering
\includegraphics[width=\columnwidth ]{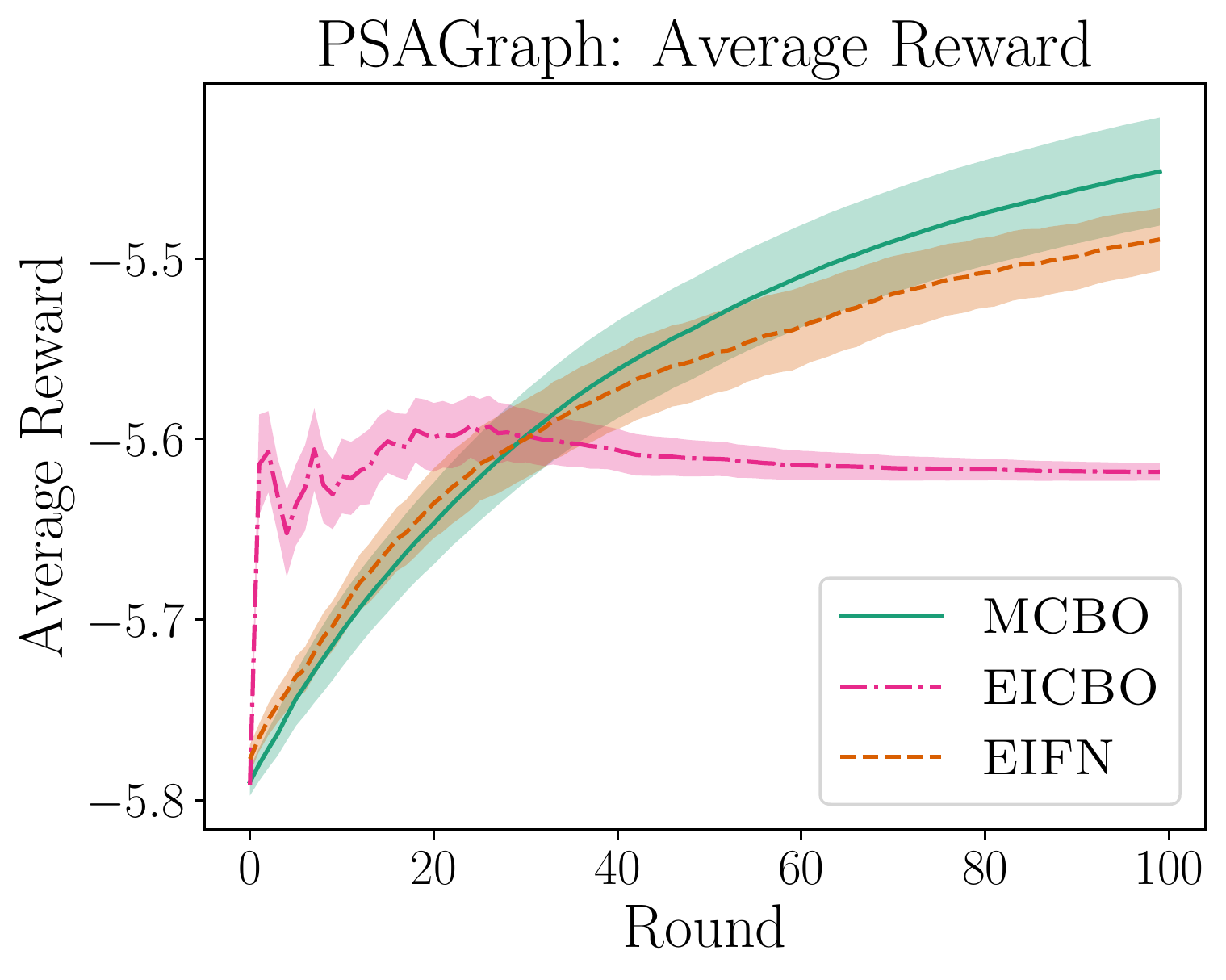}
\endminipage\hfill 
\minipage{\columnwidth / 3}
\textbf{c}
\centering
\includegraphics[width=\columnwidth ]{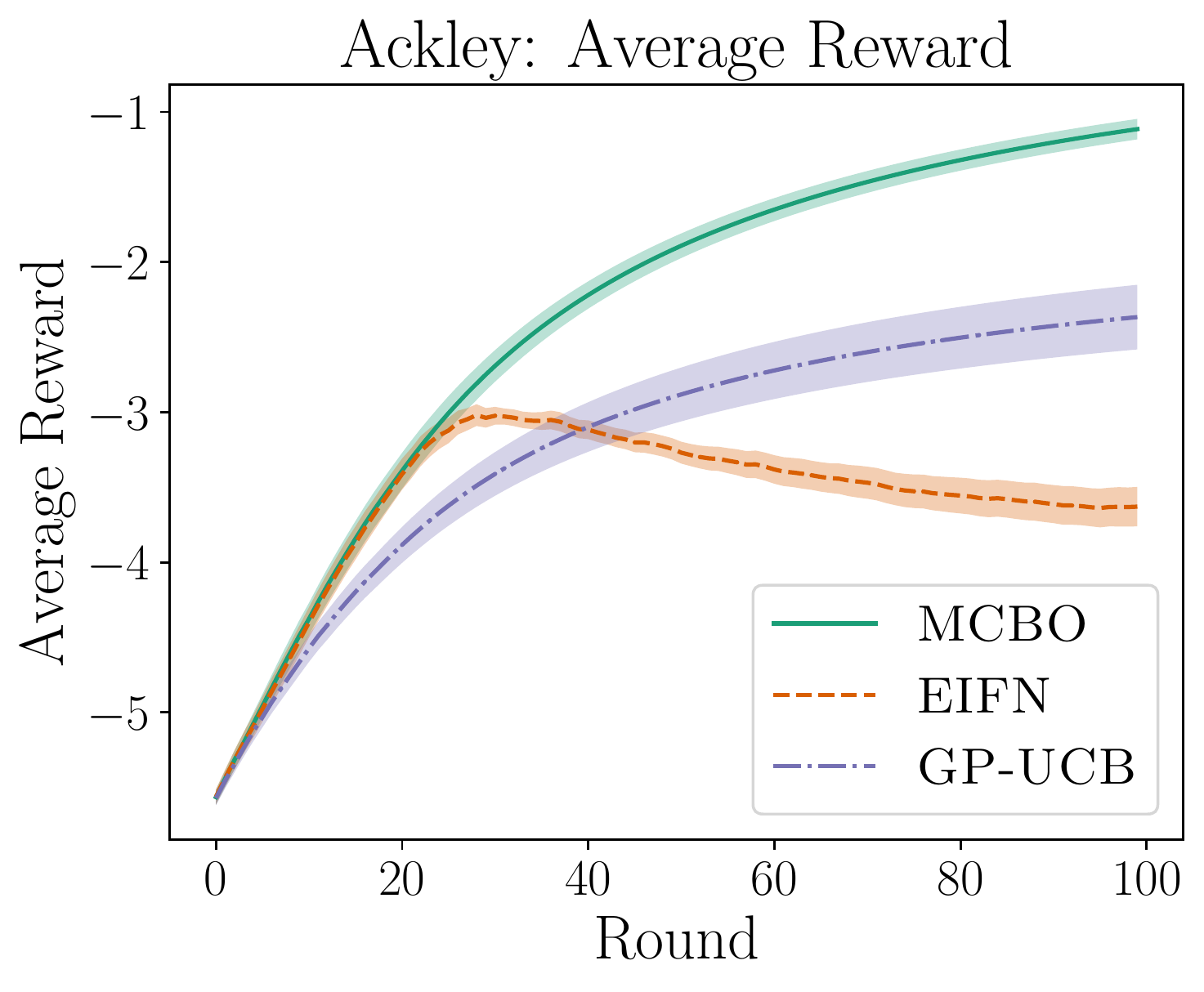}
\endminipage\hfill 
\minipage{\columnwidth/3}
\textbf{d}
\centering
\includegraphics[width=\columnwidth]{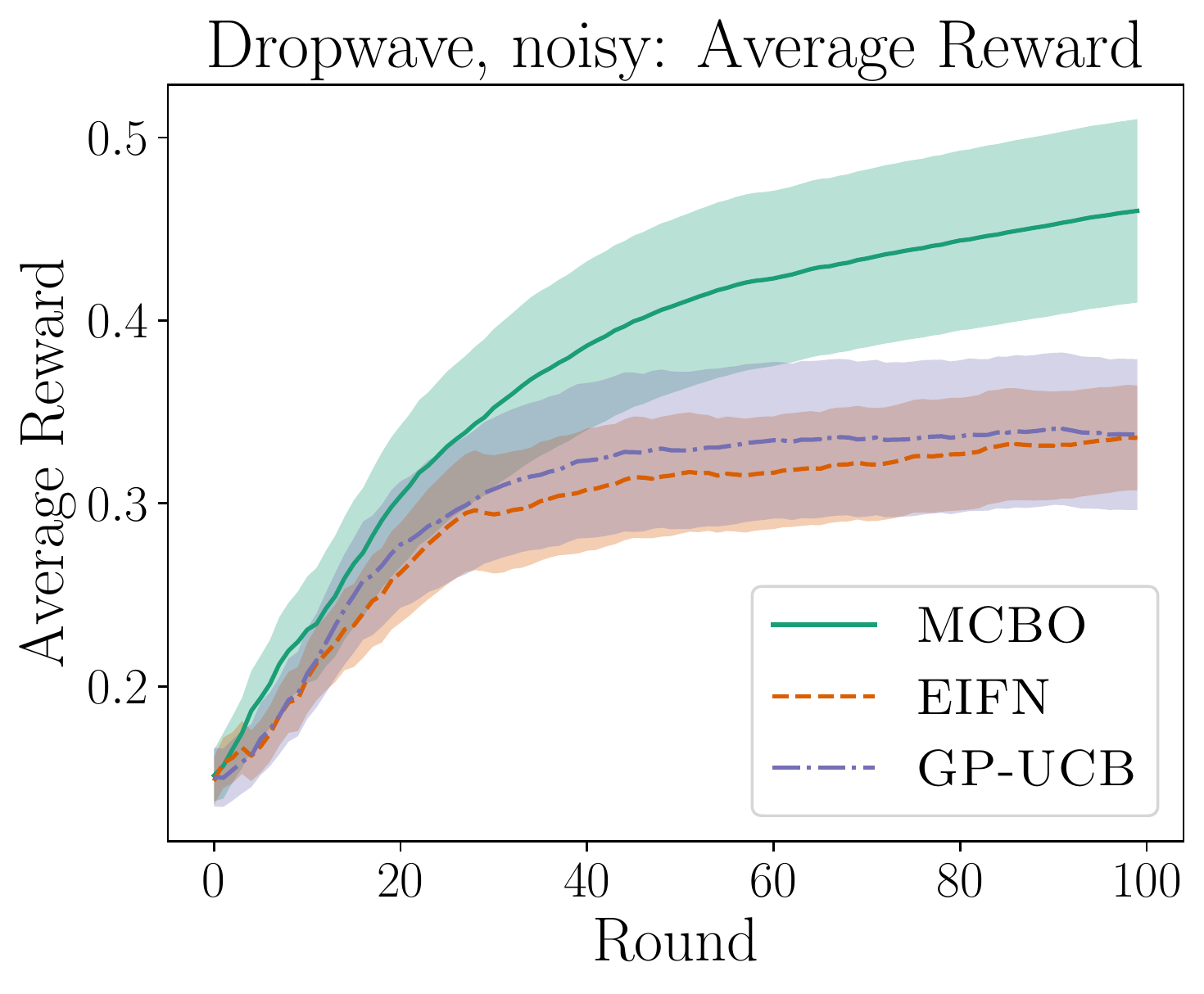}
\endminipage\hfill 
\minipage{\columnwidth/3}
\textbf{e}
\centering
\includegraphics[width=\columnwidth]{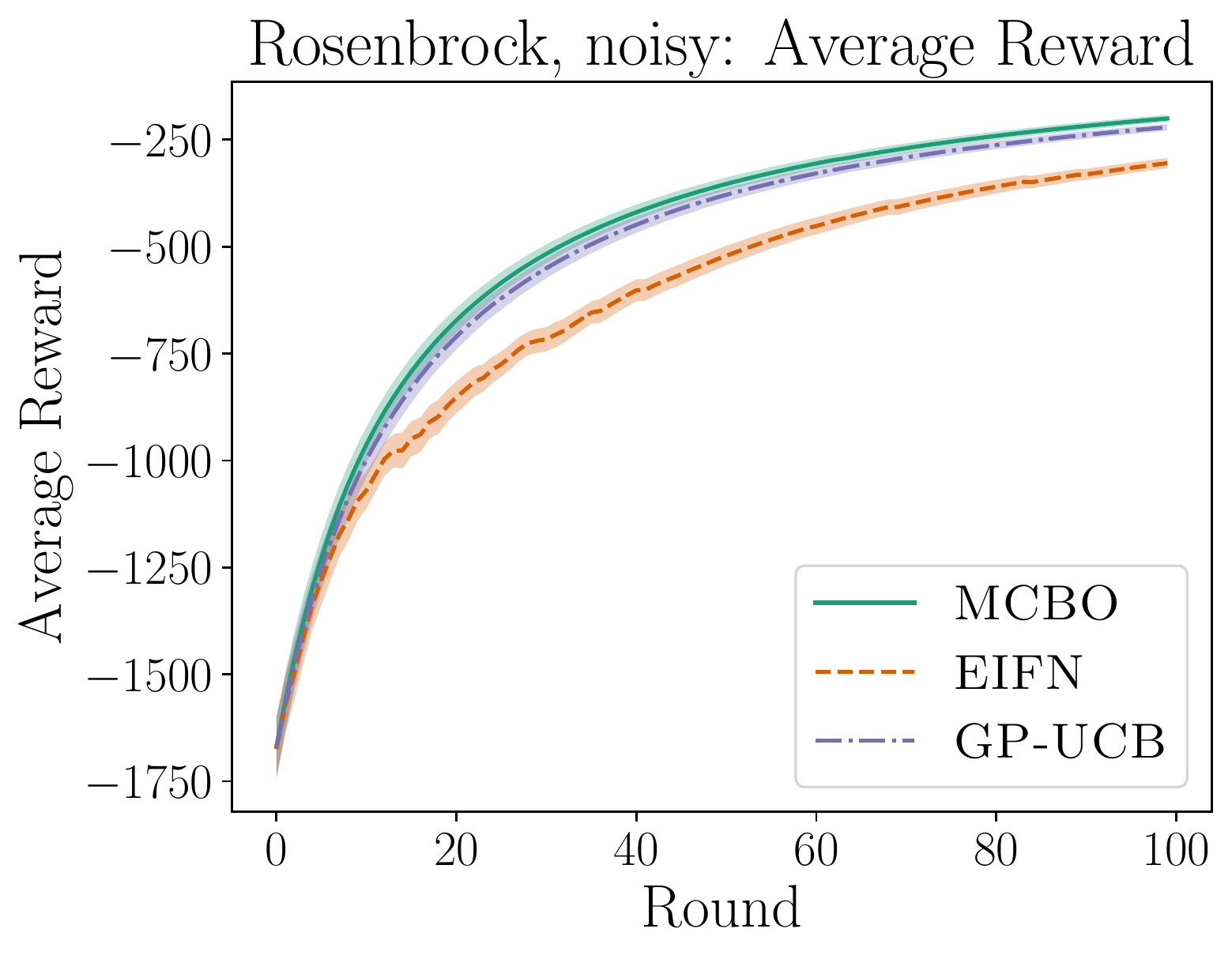}
\endminipage\hfill 
\minipage{\columnwidth/3}
\textbf{f}
\centering
\includegraphics[width=\columnwidth]{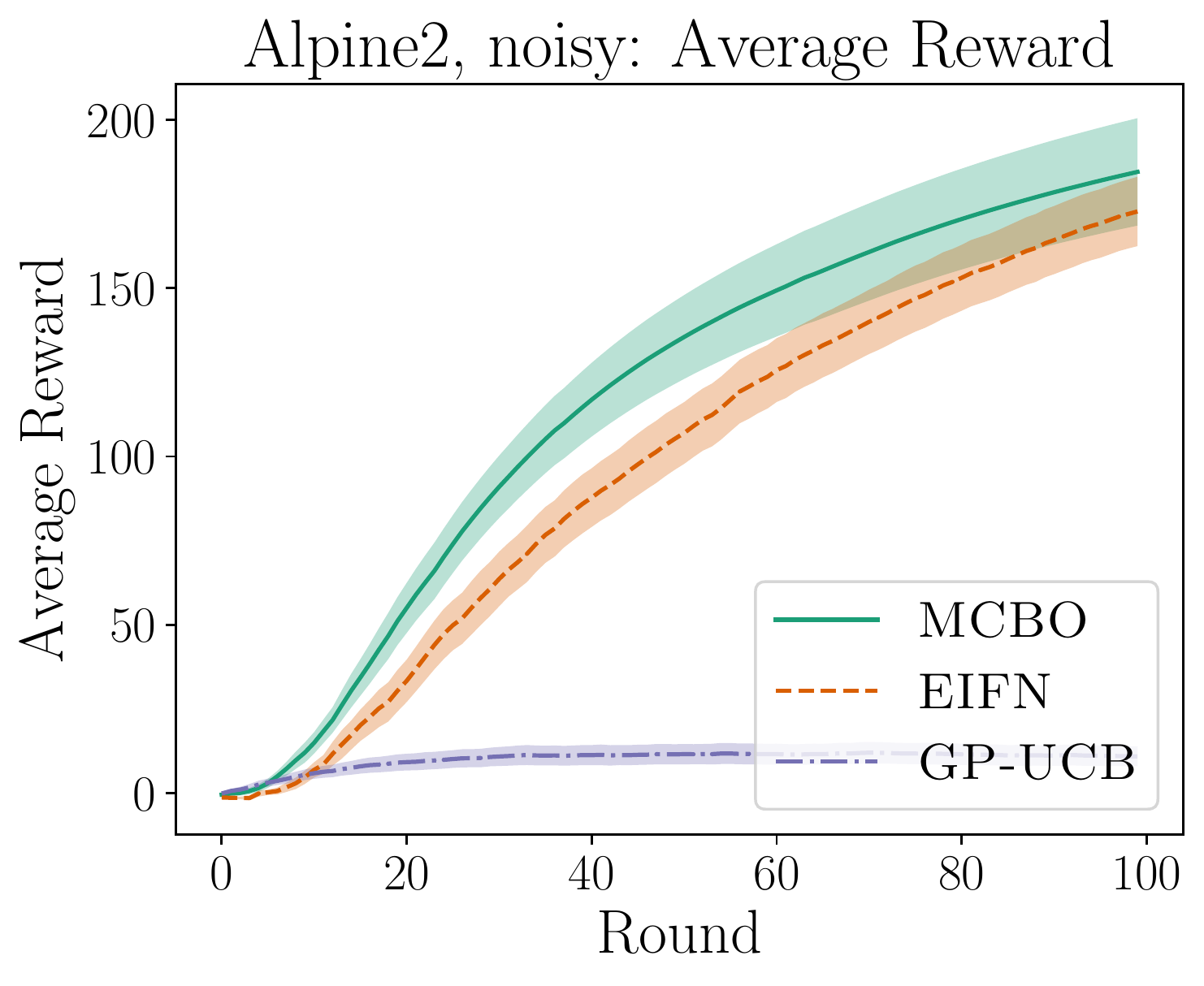}
\endminipage\hfill 
    \caption{ \textbf{(a, b)} \eicbo does not achieve monotonically improving average reward on CBO tasks, while \alg achieves high average reward on both tasks. \textbf{(c)} On the noiseless Ackley function networks task, \eifn also does not achieve monotonically improving average reward. \textbf{(d, e, f)} On a range of noisy function networks tasks, \alg is at as good as and often higher performing than \eifn and \ucb baselines.} 
\label{fig:main_plots}
\end{center}
\vskip -0.2in
\end{figure*}

\label{sec:experiments}
\section{Experiments}
In this section, we empirically evaluate \alg on six problems taken from previous CBO or function network papers  \citep{agliettiCausalBayesianOptimization2020,astudilloBayesianOptimizationFunction2021}. The DAGs corresponding to each task are given in \cref{fig:task_dags} of the appendix. We provide an open-source implementation of \alg \footnote{\url{https://github.com/ssethz/mcbo}}. \looseness-1

\textbf{Baselines} We compare our approach with three baselines (i) Expected Improvement for Function Networks (\eifn) \citep{astudilloBayesianOptimizationFunction2021}; (ii)  Causal Bayesian Optimization (\eicbo) \citep{agliettiCausalBayesianOptimization2020}; and (iii) standard upper confidence bound (\ucb) \citep{srinivas10} which models the objective given interventions with a single GP (see \cref{fig:overview} a). To enable a fair comparison, we only apply \eicbo to the hard intervention setting, since it was designed specifically for a hard intervention model.

\textbf{Experimental setup} 
We report the average reward as a function of the number of system interventions performed. The average reward at time $T$ is defined by $\sum_{t=0}^T \E[\omega]{Y\mid\a_t}$ and is directly inversely related to cumulative regret in that high average expected reward is equivalent to low cumulative regret. This matches the performance metric studied in our analysis. Average expected reward and cumulative regret are natural metrics for many real applications, like crop management, in which we want consistently high yield, and the healthcare-inspired setting we study in these experiments, where we seek good treatment outcomes for more than a single patient. In the appendix, we show experiments measuring the best expected reward of any action previously chosen, which is more similar to an inverse of the simple regret. 
We report mean performance over $20$ random seeds, with error bars showing $\pm$ $\sigma/\sqrt{20}$ where $\sigma$ is the standard deviation across the repeats. All figures that are referenced but not in the main paper can be found in the appendix.

For the guarantees in \cref{theorem} to hold, $\{\beta_t\}_{t=0}^T$ must be chosen so that the model is calibrated at all time-steps as in \cref{eq:calibrated}.  In practice, we select a single $\beta$ such that $\beta_t = \beta$, $\forall t$. Choosing $\beta$ too pessimistically will result in high regret, as demonstrated by the dependence of the guarantee on $\beta^N$. 
For \ucb and \alg, $\beta$ is chosen by cross-validation across tasks, as described in the appendix. \looseness-1

\textbf{Toy Experiment} First, we evaluate on the synthetic ToyGraph setting from \citet{agliettiCausalBayesianOptimization2020}. ToyGraph is a hard intervention CBO task where $\mathcal{I} = \left\{ \emptyset, \{ 0\}, \{ 1\} \right\}$. All methods start with $10$ observational samples and samples from $2$ random interventions on each $I\in \mathcal{I}$. When \eicbo obtains interventional data, it obtains noiseless samples because it is not designed for the noisy setting. By noiseless samples, we mean that for action $\a$ \eicbo observes $\E[\snoise]{Y\mid \a}$. Other methods obtain single samples from the distribution $Y, X \mid \a$. 

In ToyGraph, $I = \{ 1\}$ is the optimal target, but to efficiently learn the optimal $a_{ \{ 1\} }$, the agent must generalize from both observational data and interventional data on other targets $I = \{ 0\}$. \eicbo models $Y$ given $a_{I}$ with a separate GP for every $I$ and is consequently only able to make use of observational data and interventional data with the same target. Since \alg learns a full system model, it incorporates all observations into the learned model, even when interventions do not match. \looseness-1

Figure~\ref{fig:main_plots} (a) shows that the average reward of \alg (\cref{alg:model-based-cbo-hard-interventions}) is favorable compared to the baselines. Both baselines are built on expected improvement, which will continue to explore even after high-reward solutions are found. This explains the non-monotonic average reward of \eicbo. 

\textbf{Healthcare Experiment} \looseness-1 PSAGraph is inspired by the DAG from a real healthcare setting \citep{ferro2015use} and also benchmarked by \citet{agliettiCausalBayesianOptimization2020}. The agent intervenes by prescribing statins and/or aspirin while specifying the dosage to control prostate-specific antigen (PSA) levels. Here $\mathcal{I} = \left\{ \emptyset, \{2\}, \{3\}, \{2, 3\} \right\}$ and all interventions are hard interventions. Initial sample sizes are the same as for ToyGraph. Figure~\ref{fig:main_plots} (b) again shows \eicbo having a nonmonotonic average reward and strong comparable performance of \alg. \looseness-1

\textbf{Noiseless Function Networks} \looseness -1
In addition to the hard intervention setting, we evaluate \alg and the baselines on four tasks from \citet{astudilloBayesianOptimizationFunction2021}. All systems have up to six nodes and varying graph structures. In function networks, actions can affect multiple system variables, and system variables can be children of multiple actions. Function networks are deterministic, so $\snoise = \bm 0$. \alg (\cref{alg:model-based-cbo}) can be applied directly in this setting, and the guarantees are also easily transferable. Like in \citet{astudilloBayesianOptimizationFunction2021}, there are no constraints (besides bounded domain) on actions, and the agent is initialized with $2 A + 1$ samples from random actions, where $A$ is the number of action nodes. 

Figure~\ref{fig:main_plots} (c) and Figure~\ref{fig:other_average_reward_plots} (a,b,c) show that \alg achieves competitive average reward on all tasks. \eifn is better on Dropwave. Meanwhile, \alg is substantially better than \eifn on the Ackley and Rosenbrock tasks. Overall, there are not sufficiently many tasks established in the literature to conclusively say which properties might make a task favor \eifn over \alg. However, this would be interesting to understand in future work and likely relates to the wider conversation in BO comparing expected improvement and UCB algorithms \citep{merrill2021empirical}. We find that the naive \ucb approach, which does not use the graph structure, generally performs poorly, especially on problems with larger graphs like Alpine2. 
On Ackley, \eifn does not achieve monotonically improving average reward, which is not unexpected given that it is based upon expected improvement. \looseness-1

\textbf{Noisy Function Networks} We modify three of the function networks settings to include an additive zero-mean Gaussian noise at every system variable, making $\snoise$ non-zero.
\eifn is designed for deterministic function networks and has no convergence guarantees in this setting. Results in terms of average (Figure~\ref{fig:main_plots} d,e,f) and best reward (Figure~\ref{fig:best_reward_plots} g, h, i) are comparable to the noiseless case, with \alg and \eifn both performing well compared to \ucb. 
\section{Conclusion}

\looseness -1 This paper introduces \alg, a principled model-based approach to solving Bayesian optimization problems over structural causal models. Our approach explicitly models all variables in the system and propagates epistemic uncertainty through the model to select interventions based on the optimism principle. 
This allows \alg to solve global optimization tasks in systems that have known causal structure with improved sample efficiency compared to prior works. 
We prove the first non-asymptotic convergence guarantees for an algorithm solving the causal Bayesian optimization problem and demonstrate that its theoretical advantages are reflected in strong empirical performance.
Future work might consider how to apply the method to large graphs, where the sets of all possible discrete intervention targets cannot be efficiently enumerated. 
\newpage
\begin{ack}
\looseness-1
We thank Lars Lorch, Parnian Kassraie and Ilnura Usmanova for their feedback and anonymous reviewers for their helpful comments.
This research was supported by the Swiss National Science Foundation under NCCR Automation, grant agreement 51NF40 180545, and by the European Research Council (ERC) under the European Union’s Horizon grant 815943. 
\end{ack}

\newpage
\bibliography{biblio}
\bibliographystyle{iclr2023_conference}

\newpage
\appendix
\section{\Large{Appendix} \\
\textbf{\large{Model-based Causal Bayesian optimization}}}
\label{sec:appendix}

\subsection{Background on GPs}
\label{app:gp_background}
Here we give more background on the vector-valued GP models we use. We will drop the $i$-node index and consider the modelling of a single function with scalar output $f:\calA \times \calZ \rightarrow \mathbb{R}$ in RKHS $\calH_{k}$ with kernel $k: \calS \times \calS \rightarrow \mathbb{R}$ where $ \calS = (\calZ \times \calA)$. We assume measurement noise with variance $\noisescale^2$. Assuming prior mean and variance $\mu_0, \sigma_0$ and dataset $\calD_t = \{\z_{1:t}, \a_{1:t}, \s_{1:t}\}$ where $\s_t \in \mathbb{R}$, we get a posterior GP mean and variance given by updates
\begin{align}
    &\mu_{t}(\z,\a) = \bm k_t(\z, \a)^{\top} \left(\bm K_t + \noisescale^2 \bm I \right)^{-1} \s_{1:t} \ , \\
    & \sigma^2_{t}(\z, \a) = k((\z, \a); (\z, \a)) - \bm k_t(\z, \a)^{\top} \left(\bm K_t + \noisescale^2 \bm I \right)^{-1} \bm k_t(\z, \a) \ ,
\end{align}
where 
\begin{align}
&(\bm K_t)_{t_1, t_2} = k((\z_{t_1}, \a_{t_1}); (\z_{t_2}, \a_{t_2})), \nonumber \\
&\bm k_{t}(\z, \a) = \left[k((\z_1, \a_1); (\z, \a)) \dots, k((\z_t, \a_t); (\z, \a)) \right] \nonumber,
\end{align}
and $\bm I$ is the identity matrix. This follows the standard GP update given in \citet{rasmussen2003gaussian}. 

In this work, we model functions with vector-valued outputs $\f:\calA \times \calZ \rightarrow \calX \in \mathbb{R}^d$ (we will continue to drop the $i$ index). For this we follow \citet{chowdhury2019online} and use a scalar-output GP that takes the component of the output vector under consideration as an input. That is, we model $f(\cdot, l)$ (the $l$th output component of the function under study) where $\f = [f(\cdot, 1), \dots, f(\cdot, d)]$. We again assume that $f$ is from RKHS $\calH_{k}$ but with kernel $k: \tilde{\calS} \times \tilde{\calS} \rightarrow \mathbb{R}$ where $ \tilde{\calS} = (\calZ \times \calA \times [d])$. Assuming prior mean and variance $\mu_0, \sigma_0$ and dataset $\calD_t = \{\z_{1:t}, \a_{1:t}, \s_{1:t}\}$ where $\s_t \in \mathbb{R}^d$, we get a posterior GP mean and variance given by updates
\begin{align}
    &\mu_{t}(\z,\a, l) =\bm k_{t}(\z,\a, l)^{\top}(\bm K_{t} + \noisescale^2 \bm I)^{-1} \text{vec}(\s_{1:t}) \label{eq:gp_mean_app} \ , \\
 \label{eq:gp_var_app}
    & \sigma^2_{t}(\z, \a, l) = k((\z,\a, l);(\z, \a, l)) - \bm k_{t}(\z,\a,l)^\top(\bm K_{t} + \noisescale^2 \bm I)^{-1}\bm k_{t}(\z,\a, l)  \ ,
\end{align}
 where $\text{vec}(\s_{1:t}) = \left[\s_{1, 1}, \s_{1, 2}, \dots , \s_{t, d} \right]^{\top}$ and for $(t_1, l), (t_2, l') \in \left[(1,1), (1,2), \dots, (t, d)\right]$: 
 \begin{align}
 & {\left[ \bm K_{t} \right]_{(t_1, l), (t_2, l')} = k((\z_{t_1, l}, \a_{t_1, l}, l);  (\z_{t_2, l'}, \a_{t_2, l'}, l'))},  \nonumber\\
 & \bm k_{t}(\z,\a, l)^{\top} = [k((\z_{1, 1}, \a_{1, 1}, 1); (\z, \a, l)), \dots, k((\z_{t, d}, \a_{t,  d}, d); (\z, \a, l)) ]^{\top}. \nonumber
\end{align}

This follows the posterior update of \citet{chowdhury2019online}. The key idea is to use a single scalar-output GP with kernel $k$ for modeling all output components, but introduce the component index as part of the input space. This requires introducing the notation $\text{vec}(\s_{1:t})$ and ordering all observations, of all time points and all components, in a single index, since the GP update for component $l$ considers observations of all other components. Under this GP model the components of $\f$ need not be independent if kernel $k$ is designed to model dependency between the components. In our work, we use one of these vector-valued GP models for each individual random variable in our causal model, leading to the reintroduction of the additional $i$ index in \cref{eq:gp_mean,eq:gp_var}. 

\subsection{Proofs for the Theoretical analysis}
Our analysis closely follows \cite{curiEfficientModelBasedReinforcement2020}, particularly the proofs in their Appendix D, where they prove similar guarantees for a model-based reinforcement learning problem. In contrast to \cite{curiEfficientModelBasedReinforcement2020}, which models RL transition dynamics with a single GP for all timesteps, \alg uses independent GPs for modeling the functional relation $\{f_1, \dots, f_m\}$ and uses the causal graph $\G$ to determine the input and output variables of these functions. 

This section is organized as follows. In \cref{app:model_complexity}, we discuss a notion of model complexity $\Gamma_T$ similar to the one introduced in the RL setting by \citet{curiEfficientModelBasedReinforcement2020}. We then bound the cumulative regret in terms of $\Gamma_T$ in \cref{app:theorem_generalregretbound}. Finally, in \cref{app:theorem2}, we prove \cref{theorem} by connecting our notion of model complexity with the maximum information gain of a GP model. \looseness-1

The norm notation $\norm{\cdot}$ refers to $\ell_2$-norm if no additional notation is given. We let $\{\ait \in \Ai, \zit \in \Zi\}_{i, t>0}$ denote the set of actions chosen by \alg and the realizations of the parents of node $i$, respectively.

\subsubsection{Model complexity}
\label{app:model_complexity}

The number of samples needed to learn a low-regret action is related to the number of samples needed to learn all GP models in our SCM. This is analogous to the classic BO setting \citep{srinivas10}. We quantify the model complexity of our entire model class $\calM_T$ as
\begin{equation}
    \label{eq:complexity_class}
    \modcom_T = \underset{(\z, \a) \in A \subset \{\Z \times \A\}^T }{\max} \sum_{\ni=1}^\Ni \sum_{i=0}^m  \big \| \bsigma_{i, t-1}(\zit, \ait) \big \|^2.
\end{equation}

This measure of model complexity closely relates to the \emph{maximum information gain} $\gamma_T$ used for proving regret guarantees in BO \citep{srinivas10}. For a single GP model, $\gamma_T$ is the maximum information gain about the unknown $f$ that can be obtained from noisy evaluations of the $f$ at fixed inputs (see \cref{eq:info_gain_multidim}). Later we show that $\modcom_T$ can be bounded by a sum of the information gains for all $m$ GPs. It is worth noting that Equation~\ref{eq:complexity_class} may be a loose notion of model complexity because it assumes we can independently choose every $\zi$, but for many graphs, there could be overlap in the $\zi, \bm z_j$ for $i\neq j$ (two nodes could have a shared parent).  

\subsubsection{Analysis in terms of general model complexity \texorpdfstring{$\Gamma_T$}{}}
\label{app:theorem_generalregretbound}

In this section, we will prove a theorem similar to \cref{theorem} but in terms of the model complexity we define in \cref{eq:complexity_class}. Note that this version of the theorem does not require that $\bmu$ and $\bsigma$ come from a GP model with independent outputs, but any model such that \cref{as:f_lipschitz,as:well_calibrated_model,as:model_lipschitz} are satisfied. In later sections, when using a GP model we bound $\modcom_T$ in terms of the maximum information gain of the $m$ GPs used to get \cref{theorem}.

We will use the function $\bSigma_{i, t}(\cdot)$ to represent a matrix of all zeros except the values of the diagonal, given by $\bsigma_{i, t}(\cdot)$. A a result, $\bsigma_{i, t}(\cdot) = \text{diag}\left(\bSigma_{i, t}(\cdot)\right)$ 

\begin{restatable}{theorem}{generalregretbound}
  \label{thm:exploration:regret:general_regret_bound}
   Consider the optimization problem in \cref{eq:problem_statement} with SCM satisfying \cref{as:f_lipschitz,as:well_calibrated_model,as:model_lipschitz} where $\G$ is known but $\fs$ is unknown.  Then, for all $T \geq 1$, with probability at least $1-\delta$, the regret of \cref{alg:model-based-cbo} is bounded by
  $$
     R_T \leq \Or{L_{f}^N L_\sigma^{\d} \beta_{T}^{\d} K^{\d} \sqrt{ T m  \, \modcom_T }  }.
  $$
\end{restatable}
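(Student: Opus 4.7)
First I would reduce the instantaneous regret $r_t := \mathbb{E}[y \mid \bm{a}^*] - \mathbb{E}[y \mid \bm{a}_t]$ to a model-mismatch term. By the calibration assumption (\cref{as:well_calibrated_model}), with probability $1-\delta$ the true mechanisms satisfy $\bm{f}^* \in \calM_t$ for all $t$; hence the optimistic choice in \cref{eq:acqfn} implies $\mathbb{E}[y \mid \tilde{\bm{f}}_t, \bm{a}_t] \geq \mathbb{E}[y \mid \bm{f}^*, \bm{a}^*]$, and therefore
\begin{equation*}
r_t \;\leq\; \mathbb{E}\!\left[\,y \mid \tilde{\bm{f}}_t, \bm{a}_t\right] - \mathbb{E}\!\left[\,y \mid \bm{f}^*, \bm{a}_t\right].
\end{equation*}
Coupling the noise realizations $\bm{\omega}_t$ across the two SCMs and writing $\bm{x}_{i,t}$, $\tilde{\bm{x}}_{i,t}$ for the values at node $i$ produced under action $\bm{a}_t$ by $\bm{f}^*$ and $\tilde{\bm{f}}_t$ respectively, the triangle inequality bounds $r_t$ by $\mathbb{E}[\,\|\tilde{\bm{x}}_{m,t}-\bm{x}_{m,t}\|\,]$.

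\textbf{Step 2 (Per-node recursion along $\mathcal{G}$).} Let $e_{i,t} := \|\tilde{\bm{x}}_{i,t}-\bm{x}_{i,t}\|$. At each node I would insert and subtract $\bm{\mu}_{i,t-1}$ at both arguments, using that $\tilde{\bm{f}}_i$ and $\bm{f}^*_i$ both lie in the confidence set of \cref{eq:calibrated}, then swap parents using \cref{as:f_lipschitz,as:model_lipschitz}. This gives
\begin{equation*}
e_{i,t} \;\leq\; 2\beta_t \,\|\bm{\sigma}_{i,t-1}(\bm{z}_{i,t}, \bm{a}_{i,t})\| \;+\; \bigl(L_f + \beta_t L_\sigma\bigr)\,\|\tilde{\bm{z}}_{i,t}-\bm{z}_{i,t}\|.
\end{equation*}
Since $\bm{z}_{i,t}$ stacks at most $K$ parent vectors, $\|\tilde{\bm{z}}_{i,t}-\bm{z}_{i,t}\| \leq \sqrt{K}\,\max_{j\in\mathrm{pa}_i} e_{j,t}$. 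Unrolling this along the longest directed path to $X_m$, whose length is bounded by $N$, collapses to
\begin{equation*}
e_{m,t} \;\leq\; C\,\bigl[(L_f+\beta_t L_\sigma)\sqrt{K}\bigr]^{N} \sum_{i=0}^{m} \beta_t\,\|\bm{\sigma}_{i,t-1}(\bm{z}_{i,t}, \bm{a}_{i,t})\|,
\end{equation*}
for an absolute constant $C$. Absorbing constants into $\Or{\cdot}$ yields the $L_f^{N} L_\sigma^{N} \beta_T^{N} K^{N}$ prefactor in the target bound.

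\textbf{Step 3 (Cauchy--Schwarz).} Summing $r_t \leq \mathbb{E}[e_{m,t}]$ over $t=1,\dots,T$ and applying Cauchy--Schwarz over the double index $(t,i)\in[T]\times[m]$ gives
\begin{equation*}
\sum_{t=1}^{T}\sum_{i=0}^{m} \|\bm{\sigma}_{i,t-1}(\bm{z}_{i,t},\bm{a}_{i,t})\| \;\leq\; \sqrt{Tm \,\cdot\, \sum_{t=1}^{T}\sum_{i=0}^{m}\|\bm{\sigma}_{i,t-1}\|^2} \;\leq\; \sqrt{Tm\,\Gamma_T},
\end{equation*}
by the definition of $\Gamma_T$ in \cref{eq:complexity_class}. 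Combining with Step~2 delivers the stated rate.

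\textbf{Main obstacle.} The hard part is the recursive propagation in Step~2. Two subtleties matter: $\tilde{\bm{f}}_i$ itself is not known to be Lipschitz, so the parent-swap must be routed through $\bm{\mu}_{i,t-1}$ and $\bm{f}^*_i$ (which are Lipschitz by \cref{as:model_lipschitz,as:f_lipschitz}) rather than through $\tilde{\bm{f}}_i$ directly; and one must bound at each layer by a $\max$ over parents so that the recursion collapses to a single sum over all $m$ nodes with only $K^{N}$ amplification, instead of an exponentially growing $m^N$ factor coming from summing over all paths. Once the recursion is in hand, Steps~1 and~3 are standard UCB-style arguments transplanted to the SCM setting.
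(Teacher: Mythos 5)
Your proposal follows essentially the same route as the paper's proof: optimism from calibration, a noise-coupled per-node error recursion through the DAG yielding the $(\beta_T L_f L_\sigma K)^{N}$ amplification, and a Jensen/Cauchy--Schwarz step producing $\sqrt{Tm\,\modcom_T}$; the only differences are cosmetic (you bound the stacked parent error by $\sqrt{K}\max_{j\in \pa_i} e_{j,t}$ where the paper sums over parents, and you apply Cauchy--Schwarz once where the paper applies Jensen twice). The one step you elide is \cref{lem:eta_fun_existance}: since \cref{alg:model-based-cbo} maximizes over the reparameterized class in \cref{eq:reparam_acquisition} rather than over $\calM_t$ directly, the optimism inequality in your Step 1 requires an inductive construction of functions $\etas$ that reproduce the true SCM's trajectories node by node (each $\etai$ must be chosen at the \emph{simulated} parent values $\zoi$, which only coincide with $\zi$ once the preceding $\eta_j$ have been fixed) --- a short but necessary argument that your appeal to \cref{eq:acqfn} takes for granted.
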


We first sketch the proof steps. In \cref{lem:eta_fun_existance} we show that with, high probability, there exists some set of functions $\etas$ that allows the reparameterized plausible SCM model in \cref{eq:reparametrization} to match the true SCM. Recall the mechanism of the ground-truth SCM in \cref{eq:groud_truth}

\begin{align}
\tag{\ref{eq:groud_truth}}
& \sit = \fofi(\zit, \ait) + \snoise_{i, t}, \ \ \forall i \in \{0, \dots, m\},
\end{align}

and the mechanism of the optimistic SCM model using the reparameterization of \cref{eq:reparametrization}

\begin{align}
 \soit &= \fofi(\zoit, \aoit) + \snoiseo_{i, t} \\
&= \bmu_{i, t-1}(\zoit, \aoit) + \beta_{t} \bSigma_{i, t-1}(\zoit, \aoit) \etai (\zoit, \aoit) + \snoiseo_{i, t}, \ \ \forall i \in \{0, \dots, m\}.
\label{eq:opt_scm}
\end{align}

In \cref{lem:simple_regret_bound}, \cref{lem:exploration:regret:bounded_state_difference_with_same_noise_noconstraint} and \cref{lem:exploration:regret:bound_expected_state_divergence_by_sigma} we bound the instantaneous regret (regret at some specific timepoint $t$) by bounding the difference in SCM output, for the same action input, assuming the true SCM vs the optimistic reparameterized SCM. Then in 
\cref{lem:exploration:regret:bound_cumulative_regret_by_expected_uncertainty} we use the intermediate result of \cref{lem:exploration:regret:bound_squared_regret_by_expected_variance} to show that our bound on instantaneous regret implies a bound on cumulative regret. 

In \cref{lem:eta_fun_existance}, for convenience, we will drop the explicit dependence of all quantities on $t$. 

\begin{lemma}
  \label{lem:eta_fun_existance}
  Assume some fixed  set of actions $\a$ is chosen at any timepoint $t$. Under \cref{as:well_calibrated_model}, for any $\s$ generated by the true SCM \cref{eq:groud_truth}, with probability at least $1-\delta$ there exists a set of functions $\etas = \{ \eta_i\}_{i=0}^m$, where $\etai \colon \Zi \times \Ai \to [-1, 1]^{d}$, such that $\s = \so$ if $\ \forall i$ $\ \snoise_i = \snoiseo_i$.
\end{lemma}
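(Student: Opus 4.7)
The plan is to construct the reparameterization functions $\etai$ by inverting the defining relation in \cref{eq:reparametrization}, and then show by induction along a topological order of $\G$ that the resulting optimistic SCM in \cref{eq:opt_scm} reproduces the true trajectory whenever the noises coincide. Throughout, condition on the event of probability at least $1-\delta$ supplied by \cref{as:well_calibrated_model}, on which $|\fofi(\zi, \ai) - \bmu_{i, t-1}(\zi, \ai)| \leq \beta_t \bsigma_{i, t-1}(\zi, \ai)$ componentwise, uniformly over all $i$, $\zi$, and $\ai$.

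On this event, for each node $i$ and each output component $\ell \in [d]$, define
\[
\etai(\zi, \ai, \ell) := \begin{cases} \dfrac{\fofi(\zi, \ai, \ell) - \mu_{i, t-1}(\zi, \ai, \ell)}{\beta_t\, \sigma_{i, t-1}(\zi, \ai, \ell)}, & \text{if } \sigma_{i, t-1}(\zi, \ai, \ell) > 0, \\ 0, & \text{otherwise.} \end{cases}
\]
Calibration yields $|\etai(\zi, \ai, \ell)| \leq 1$ in the first case; in the second, calibration itself forces $\fofi(\zi, \ai, \ell) = \mu_{i, t-1}(\zi, \ai, \ell)$, so the assigned value is immaterial. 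Hence $\etai \colon \Zi \times \Ai \to [-1, 1]^d$ as required, and by construction
\[
\bmu_{i, t-1}(\zi, \ai) + \beta_t \bSigma_{i, t-1}(\zi, \ai) \etai(\zi, \ai) = \fofi(\zi, \ai),
\]
so the optimistic mechanism \cref{eq:opt_scm} collapses to $\soi = \fofi(\zoi, \aoi) + \snoiseo_i$.

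It remains to verify $\s = \so$ under the coupling $\snoise_i = \snoiseo_i$ for every $i$. Proceed by induction along a topological ordering of $\G$. At a root node $i$, the parent vector is empty and the action $\ai = \aoi$ is fixed, so $\si = \fofi(\ai) + \snoise_i = \fofi(\aoi) + \snoiseo_i = \soi$. At a non-root node $i$, the inductive hypothesis gives that the values at every parent $j \in \pa_i$ agree across the two SCMs, so the realized parent vectors satisfy $\zi = \zoi$; combined with $\ai = \aoi$ and $\snoise_i = \snoiseo_i$, this yields $\si = \soi$, which closes the induction.

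The only real subtlety is handling inputs at which a component of $\bsigma_{i, t-1}$ vanishes, which is harmless because calibration pins $\fofi$ exactly to $\bmu_{i, t-1}$ at such points, so the choice of $\etai$ there does not matter. Otherwise the argument is a direct inversion of the reparameterization identity followed by a standard topological induction on $\G$, and the probability of failure is entirely inherited from the calibration event.
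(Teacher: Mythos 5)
Your proof is correct and follows essentially the same route as the paper's: invoke the calibration event to realize $\fofi$ inside the reparameterized confidence band, then induct along a topological order of $\G$ to propagate $\zi = \zoi$ from the roots to the reward node under the noise coupling. Your explicit quotient construction of $\etai$ (including the $\sigma_{i,t-1} = 0$ edge case) is a slightly more careful, globally well-defined version of the paper's pointwise existence argument, but the underlying idea is identical.
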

\begin{proof}
    Since $\snoise = \snoiseo$ we only need to prove that there exists some $\etas$ such that $\forall i \ \ \fofi(\zi, \ai) = \bmu_{i}(\zoi, \aoi) + \beta \bSigma_i(\zoi, \aoi) \bm{\eta_i}$. 

    By \cref{as:well_calibrated_model}, with probability $1-\delta$, for all $i = {0, \dots, m}$, we have an elementwise bound
    $\abs{ \fofi (\zi, \ai) - \bmu_i(\zi, \ai) } \leq \beta_i \bsigmai(\zit,\ait)$. Thus for each $\zit, \ait$ there exists a vector $\etai$ with values in $[-1, 1]^{d}$ such that $\fofi(\zi, \ai) = \bmu_i(\zi, \ai) + \beta \bSigmai(\zi, \ai) \bm{\eta_i}$. Note that this is not quite what we need because the RHS contains $\zi$ and not $\zoi$. We will now use an inductive argument on $i$ that constructs each $\eta_i$ sequentially from $i=0$ to $m$. 
    
    Base case: we must prove that for $i=0$ we have $\bm f_0(\z_0, \a_0) = \bmu_{0}(\zo_0, \ao_0) + \beta_t \bSigma_0(\zo_0, \ao_0) \bm{\eta_0}$. We know $\z_0 = \zo_0$ (both are the empty vector) and $\a = \ao$ by the assumption of a fixed action. Then there exists some vector $\bm \eta_0$ such that $\bm f_0(\z_0, \a_0) = \bmu_0(\z_0, \a_0) + \beta \bSigma_0(\z_0, \a_0) \bm{\eta_0} = \bmu_0(\zo_0, \ao_0) + \beta \bSigma_0(\zo_0, \ao_0) \bm{\eta_0}$. Let $\bm \eta_0(\cdot)$ be the function that outputs the vector $\bm \eta_0$ given input $\z_0, \a_0$ and the base case is proven. 

    Now assume the inductive hypothesis: $\forall j < i$ we have $\bm f_j(\z_j, \a_j) = \bmu_{j}(\zo_j, \ao_j) + \beta \bSigma_j(\zo_j, \ao_j) \bm{\eta_j}$. We want to show that this implies $\fofi(\zi, \ai) = \bmu_{i}(\zoi, \aoi) + \beta \bSigma_i(\zoi, \aoi) \bm{\eta_i}$. We know $\ai = \aoi$ by the assumption of a fixed action. $\z_i = \zo_i$ because $\zoi = [\so_{pa_i[1]}, \dots, \so_{pa_i[\abs{pa_i}]}]^T$ and we selected each $\eta_j$ such that $\so_j = \s_j$. Then there exists some vector $\etai$ such that $\fofi(\zi, \ai) = \bmu_i(\zi, \ai) + \beta \bSigma_i(\zi, \ai)) \bm{\eta_i} = \bmu_i(\zoi, \aoi) + \beta \bSigma_i(\zoi, \aoi)) \bm{\eta_i}$. Let $\etai(\cdot)$ output the vector $\etai$ given input $\zi, \ai$ and the inductive step is proven.  
\end{proof}

\begin{lemma}
   \label{lem:simple_regret_bound}
  Under \cref{as:well_calibrated_model}, with probability at least $(1 - \delta)$ $\forall t \geq 0$ the instantaneous regret $r_t$ is bounded by
  \begin{equation}
    r_t = \E{y | \fs, \bm a^*} - \E{y | \fs, \a_{:, t}} \leq \E{y | \tfs_t, \a_{:, t}} - \E{y| \fs, \a_{:, t}}.
  \end{equation}
\end{lemma}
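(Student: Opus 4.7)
The plan is to apply the standard optimism-in-the-face-of-uncertainty argument to our causal setting, leveraging the preceding lemma on the existence of the reparameterization functions $\etas$.

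First, I would establish that, with probability at least $1-\delta$, the true functional relations $\fs$ lie in the plausible model class $\calM_t$ for every round $t \geq 0$. This is essentially a restatement of \cref{as:well_calibrated_model}: the calibrated confidence bounds imply that $\fs$ satisfies the constraints defining $\calM_t$. Equivalently, by \cref{lem:eta_fun_existance}, there exist functions $\etas$ such that the reparameterized model coincides with the true SCM on every rollout with matched noise, so $\fs$ is a valid choice of $\tfs \in \calM_t$ at every $t$.

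Next, I would invoke the definition of the acquisition function in \cref{eq:acqfn}. The algorithm selects $(\at, \tfs_t)$ as a joint maximizer of $\E[\snoise]{y \mid \tfs, \bm a}$ over $\bm a \in \calA$ and $\tfs \in \calM_t$. Since, on the good event above, $(\bm a^*, \fs)$ is a feasible pair in this joint optimization, optimality of $(\at, \tfs_t)$ yields
\begin{equation*}
    \E{y \mid \tfs_t, \at} \;\geq\; \E{y \mid \fs, \bm a^*}.
\end{equation*}
Subtracting $\E{y \mid \fs, \at}$ from both sides gives exactly
\begin{equation*}
    \E{y \mid \fs, \bm a^*} - \E{y \mid \fs, \at} \;\leq\; \E{y \mid \tfs_t, \at} - \E{y \mid \fs, \at},
\end{equation*}
which is the claimed bound on $r_t$.

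The only delicate point is ensuring the probability statement holds \emph{uniformly} over all $t \geq 0$, rather than pointwise, since the claim quantifies over all $t$. This is handled by choosing $\{\beta_t\}_{t\geq 1}$ so that the calibration in \cref{as:well_calibrated_model} holds simultaneously for every round with total failure probability $\delta$ (e.g.\ via a union bound absorbed into the growth of $\beta_t$, as in \citet{chowdhury2019online}); no further structural argument about the SCM is needed. The substance of the bound itself is purely the optimism inequality, and no propagation of uncertainty through the graph is required at this step — that work is deferred to the subsequent lemmas that control $\E{y \mid \tfs_t, \at} - \E{y \mid \fs, \at}$ in terms of the per-node variances $\bsigma_{i,t-1}$.
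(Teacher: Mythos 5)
Your proposal is correct and follows essentially the same route as the paper: establish that $\fs \in \calM_t$ on the calibration event, then apply the optimism inequality $\E{y \mid \fs, \bm a^*} \leq \E{y \mid \tfs_t, \at}$ from the joint maximization defining the acquisition function, and subtract $\E{y \mid \fs, \at}$. Your added remarks on uniformity over $t$ and on the role of \cref{lem:eta_fun_existance} in linking the reparameterized optimization of \cref{eq:reparam_acquisition} to feasibility of $\fs$ are consistent with (and slightly more explicit than) the paper's one-line justification.
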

\begin{proof}
The result follows directly from, 
    \begin{align}
        \E{y | \fs, \bm a^*}  
        \leq
        \E{y | \tfs_t, \at}.
    \end{align}
    
    This is true by the definition of $\tfs_t, \a_{:, t}$ as the argmax of \cref{eq:reparam_acquisition} and that with probability at least $(1-\delta)$ we have $\fs\in \calM_T$.
\end{proof}

We now show how the observations under the true and optimistic dynamics differ for a fixed noise sequence $\snoiseo = \snoise$ and the fixed action $\ait$ at any time $t$. 

\begin{lemma}
  \label{lem:exploration:regret:bounded_state_difference_with_same_noise_noconstraint}
  Under \cref{as:f_lipschitz,as:well_calibrated_model,as:model_lipschitz}, let $\bar{L}_{f, t} = 1 + L_\mathrm{f} + 2 \beta_{t} L_\sigma $. Then, 
  for all iterations $t > 0$, any functions $\bm \eta_i \colon \R^{p_i} \times \R^{q_i}\to [-1, 1]^{d_i}$ and any sequence of~$\snoise_i$ with $\tilde{\snoise}_i = \snoise_i$ (for all $i$), we have\looseness-1
  \begin{equation}
    \| \s_{m,t} - \so_{m,t} \| \leq 2 \beta_{t} K^N \bar{L}_{f, t}^{N} \sum_{i=0}^{m} \| \bsigma_{i, t-1}(\zit, \ait) \| 
  \end{equation}
\end{lemma}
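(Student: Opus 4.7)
The plan is to establish a one-step recursion bounding the discrepancy at node $i$ in terms of the discrepancies at its parents plus a local uncertainty term, then unroll by induction on depth in $\G$. Using $\ait = \aoit$ and $\snoise_{i,t} = \snoiseo_{i,t}$, I would add and subtract $\bm f_i(\zoit,\ait)$ and take $\ell_2$ norms. The first bracket $\bm f_i(\zit,\ait)-\bm f_i(\zoit,\ait)$ is bounded by $L_f\|\zit-\zoit\|$ via \cref{as:f_lipschitz}; the second, $\bm f_i(\zoit,\ait)-\bmu_{i,t-1}(\zoit,\ait)$, by $\beta_t\|\bsigma_{i,t-1}(\zoit,\ait)\|$ via the elementwise calibration in \cref{as:well_calibrated_model}; and the third, $\beta_t\,\bSigma_{i,t-1}(\zoit,\ait)\,\bm{\eta_i}(\zoit,\ait)$, also by $\beta_t\|\bsigma_{i,t-1}(\zoit,\ait)\|$ since $\bSigma_{i,t-1}$ is diagonal with entries $\bsigma_{i,t-1}$ and $\bm{\eta_i}$ takes values in $[-1,1]^{d}$.

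Next I would re-anchor the variance at the true parent realisation using the $L_\sigma$-Lipschitz property of \cref{as:model_lipschitz}:
\begin{align*}
\|\bsigma_{i,t-1}(\zoit,\ait)\| \;\leq\; \|\bsigma_{i,t-1}(\zit,\ait)\| + L_\sigma\,\|\zit-\zoit\|.
\end{align*}
Substituting, using $L_f+2\beta_tL_\sigma \leq \bar L_{f,t}$, and bounding $\|\zit-\zoit\| \leq \sum_{j\in\pa_i}\|\s_{j,t}-\so_{j,t}\|$ (because $\zit$ concatenates $\s_{j,t}$ over $j\in\pa_i$ and $\sqrt{\sum a_j^2}\leq\sum a_j$ for nonnegative $a_j$), yields the key one-step recursion
\begin{align*}
e_i \;\leq\; \bar L_{f,t}\sum_{j\in\pa_i} e_j \;+\; 2\beta_t\,\|\bsigma_{i,t-1}(\zit,\ait)\|, \qquad e_i := \|\s_{i,t}-\so_{i,t}\|.
\end{align*}

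Let $d(i)$ denote the depth of node $i$ from the roots of $\G$, so $d(m)\leq N$. I would prove by strong induction on $d(i)$ that $e_i \leq 2\beta_t(\bar L_{f,t}K)^{d(i)}\sum_{k\preceq i}\|\bsigma_{k,t-1}(\z_{k,t},\a_{k,t})\|$, where $k\preceq i$ denotes that $k$ is $i$ or an ancestor of $i$. The base case $d(i)=0$ is immediate since $\pa_i=\emptyset$ reduces the recursion to $e_i\leq 2\beta_t\|\bsigma_{i,t-1}\|$. For the inductive step, applying the induction hypothesis to each $j\in\pa_i$ (with $d(j)\leq d(i)-1$) and using the coarse collapse $\sum_{j\in\pa_i}\sum_{k\preceq j}\|\bsigma_{k,t-1}\| \leq K\sum_{k\prec i}\|\bsigma_{k,t-1}\|$ (since $|\pa_i|\leq K$) gives $\bar L_{f,t}\sum_{j\in\pa_i}e_j \leq 2\beta_t(\bar L_{f,t}K)^{d(i)}\sum_{k\prec i}\|\bsigma_{k,t-1}\|$. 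Adding the local $2\beta_t\|\bsigma_{i,t-1}\|$ term completes the induction. Evaluating at $i=m$ with $d(m)\leq N$ and extending the ancestor sum to all nodes yields the claimed bound.

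The main obstacle is the combinatorial bookkeeping in the induction: the same ancestor may appear along many distinct paths up to $m$, and a naive unrolling would produce an uncontrolled multinomial blowup. The clean $K^N$ prefactor comes precisely from the coarse collapse above, which absorbs all repetitions into the single $\sum_{i=0}^m\|\bsigma_{i,t-1}\|$ factor. A minor but essential care point is the Lipschitz swap from $\zoit$ to $\zit$ inside $\bsigma_{i,t-1}$, which is what allows $L_f$ to be absorbed into the single constant $\bar L_{f,t}$; once this is in place the rest of the proof is arithmetic along $\G$.
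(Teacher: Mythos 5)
Your proposal is correct and follows essentially the same route as the paper's proof: the same add-and-subtract decomposition of $\bm f_i(\zit,\ait)-\tilde{\bm f}_{i,t}(\zoit,\aoit)$, the same use of calibration, $L_f$-Lipschitzness, and the $L_\sigma$-Lipschitz swap of $\bsigma_{i,t-1}$ from $\zoit$ to $\zit$, followed by an induction along the DAG that picks up one factor of $\bar L_{f,t}K$ per level of depth (the paper indexes the induction by topological order with exponent $N_i$ rather than by strong induction on depth, but this is only a difference in bookkeeping). Your ancestor-restricted sum $\sum_{k\preceq i}$ is in fact marginally tighter than the paper's $\sum_{j=0}^{i}$, and both relax to $\sum_{i=0}^{m}$ in the final statement.
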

\begin{proof}

We prove by induction on $i$. 

\textit{Base case.} Consider the base case $i=0$. Because the nodes are topologically ordered we will have $pa_0 = \emptyset$. Its realization, therefore, depends only on the chosen action. Formally, we assume $\z_0 = \emptyset$,  $\s_0 = \tf_0(\z_0, \a_0) + \snoise_0$ and $\so_0 = \tf_0(\zo_0, \ao_0) + \tilde{\snoise}_0$. Since $\snoise_{0} = \tilde{\snoise}_{0}$,
\begin{align*}
    \| \s_{0,t} - \so_{0,t} \|  
        &= \| \f_0(\z_{0, t}, \a_{0, t}) + \snoise_{0, t} - \bmu_{0, t-1}(\z_{0, t}, \a_{0, t}) - \beta_t\bSigma_{0, t-1}(\z_{0, t}, \a_{0, t}) \bm\eta_0(\z_{0, t}, \a_{0, t}) - \tilde{\snoise}_{0} \| \\
        & \leq  \|  \f_0(\z_{0, t}, \a_{0, t}) - \bmu_{0,t-1}(\z_{0, t}, \a_{0, t}) \|  + \| \beta_t \bSigma_{0, t-1}(\z_{0, t}, \a_0)\bm\eta_0(\z_{0, t}, \a_{0, t}) \| \\
        & \leq 2\beta_t \| \bSigma_{0, t-1}(\z_{0, t}, \a_{0, t})\|
\end{align*}

In the following, we omit the dependence on the action $\a$, e.g., using $\fofi(\zit)$ instead of $\fofi(\zit, \ait)$ since we assume the actions to be the same for the process generating $\s_{i, t}$ and $\so_{i, t}$.

\textit{Induction step.}  Now assuming that  $\| \s_{i-1,t} - \so_{i-1,t} \| \leq 2 \beta_t K^{N_{i-1}} \bar{L}_{f, t}^{N_{i-1}} \sum_{j=0}^{i-1} \| \bsigma_{j, t-1}(\z_{j,t}) \|$ we prove a similar result for the $i$th node. 

\allowdisplaybreaks
\begin{align}
    \| \s_{i, t} - \so_{i, t} \|  
     &\overset{\tiny \circled{1}}{=} 
        \| \fofi(\zit) + \snoise_{i, t} - \bmu_{i, t-1}(\zoit) - \beta_t \bSigma_{i, t-1}(\zoit) \bm \eta_i(\zoit) - \tilde{\snoise}_{i, t} \| \nonumber \\
     &\overset{\tiny \circled{2}}{=} 
        \| \fofi(\zit) - \bmu_{i, t-1}(\zoit) - \beta_t \bSigma_{i, t-1}(\zoit) \bm \eta_i(\zoit) + \fofi(\zoit) - \fofi(\zoit)\|  \nonumber \\
    &\overset{\tiny \circled{3}}{=} 
        \| \fofi(\zoit) - \bmu_{i, t-1}(\zoit) - \beta_t \bSigma_{i, t-1}(\zoit) \bm \eta_i(\zoit) + \fofi(\zit)  - \fofi(\zoit)\| \nonumber \\
    &\overset{\tiny \circled{4}}{\leq} 
        \| \fofi(\zoit) - \bmu_{i, t-1}(\zoit) \|  + \| \beta_t \bSigma_{i, t-1}(\zoit) \bm \eta_i(\zoit) \| + \| \fofi(\zit)  - \fofi(\zoit)\|  \nonumber \\
    &\overset{\tiny \circled{5}}{\leq}
        \beta_t \| \bsigma_{i, t-1}(\zoit) \|  + \beta_t \| \bsigma_{i, t-1}(\zoit) \| + L_f \| \zit - \zoit\|  \nonumber \\
    &\overset{\tiny \circled{6}}{=}  
        2\beta_t \| \bsigma_{i, t-1}(\zoit) \| + L_f \| \zit - \zoit\| \nonumber  \\
    & \overset{\tiny \circled{7}}{=} 
        2\beta_t \| \bsigma_{i, t-1}(\zoit) + \bsigma_{i, t-1}(\zit) - \bsigma_{i, t-1}(\zit) \| + L_f \| \zit - \zoit\|  \nonumber \\
    &\overset{\tiny \circled{8}}{\leq}
        2\beta_t \left( \| \bsigma_{i, t-1}(\zit) \| + L_{\sigma} \| \zit - \zoit\| \right) + L_f \| \zit - \zoit\|  \nonumber \\
    & \overset{\tiny \circled{9}}{\leq}
        2\beta_t  \| \bsigma_{i, t-1}(\zit) \|  + \left(1 + L_{f} + 2\beta_t L_{\sigma} \right)\| \zit - \zoit\| \nonumber \\
    & \overset{\tiny \circled{10}}{\leq}
        2\beta_t  \| \bsigma_{i, t-1}(\zit) \|  + \left(1 + L_{f} + 2\beta_t L_{\sigma}\right) \sum_{j \in pa_i} \| \sit - \soit\| \nonumber \\
    & \overset{\tiny \circled{11}}{\leq} 
        2\beta_t  \| \bsigma_{i, t-1}(\zit) \|  \nonumber \\
    &+ \left(1 + L_{f} + 2\beta_t L_{\sigma} \right) 
        \sum_{j\in pa_i} 2 \beta_t K^{N_j} 
            ( 
                \underbrace{1 + L_f + 2\beta_t L_{\sigma}}_{=:  \bar{L}_f}  
            )^{N_j} \sum_{h=0}^j \| \bsigma_{h, t-1}(\z_{h, t}) \| \nonumber \\ 
    & \overset{\tiny \circled{12}}{\leq} 2 \beta_t K^{N_i} \bar{L}_{f, t}^{N_i} \sum_{j=0}^{i} \| \bsigma_{j, t-1}(\z_{j,t}) \|
    \label{eq:bound_trajectory}
\end{align}

where  $\tiny\circled{1}$ follows the dynamics  \cref{eq:groud_truth,eq:reparametrization}.  In $\tiny\circled{2}$, we assume the noise to be equal and add and subtract the same term. In $\tiny\circled{3}$ and $\tiny\circled{4}$,  we reorder terms and apply the triangle inequality. In $\tiny\circled{5}$ and $\tiny\circled{6}$, we rely on the calibrated uncertainty and Lipschitz dynamics, then collect terms and use diagonality of the matrix $\bSigma_{i,t-1}(\cdot)$. In $\tiny\circled{7}$ and $\tiny\circled{8}$, we add and subtract the same term and use the Lipschitz continuity of $\bsigma_{i, t-1}$. Finally, in $\tiny\circled{9}$, we add $1$ to ensure that we can later upper bound this term by taking the exponential of it. $\tiny\circled{10}$ applies the triangle inequality. $\tiny\circled{11}$ follows the inductive hypothesis, and $\tiny\circled{12}$ is due to the depth of at least one parent $j$ being $N_j = N_i-1$.  \looseness-1
\end{proof}

Now we will relate this bound on the observations to a bound on $y_t$ when selecting actions according to \alg in both the optimistic and true dynamics. 

\begin{corollary}
\label{lem:exploration:regret:bound_expected_state_divergence_by_sigma}
  Under the assumptions of \cref{lem:exploration:regret:bounded_state_difference_with_same_noise_noconstraint}, for any sequence of~$\eta_i \in [-1, 1]^{d_i}$, $\btheta \in \mathcal{D}$, and $t \geq 1$ we have that
  \begin{equation}
    \E{y_t | \tfs_t, \at} - \E{y_t | \fs, \at}
    \leq 2 \beta_t K^N \bar{L}_{f, t}^{N} \E[\noise = \tilde{\noise}]{\sum_{i=0}^{m} \| \bsigma_{i, t-1}(\z_{i,t},\ait) \|}
  \end{equation} 
\end{corollary}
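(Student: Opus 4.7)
The corollary is essentially a coupling-plus-Jensen argument on top of Lemma~\ref{lem:exploration:regret:bounded_state_difference_with_same_noise_noconstraint}, which already delivers a pointwise (per-noise-realization) bound of the form $\|\s_{m,t}-\so_{m,t}\| \le 2\beta_t K^N \bar L_{f,t}^{\,N}\sum_{i=0}^m \|\bsigma_{i,t-1}(\zit,\ait)\|$ whenever the two systems are driven by the same noise sequence. The main steps are (i) rewrite the two expectations as the expectation of a \emph{single} difference under a coupling that equates the noise in the two systems, (ii) push the norm inside the expectation via Jensen's inequality, and (iii) apply the lemma inside the expectation to obtain the claimed RHS.

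\textbf{Step 1: Coupling.} The true and optimistic dynamics differ only in their functional mechanism $\fs$ vs.\ $\tfs_t$; the noise distributions are the same by construction. Hence the marginal law of $y_t \mid \fs, \at$ is unchanged if we draw $\snoise$ from its prescribed distribution, and similarly $y_t \mid \tfs_t, \at$ is unchanged if we draw $\tilde{\snoise}$ from the same distribution. We therefore couple the two processes by setting $\tilde{\snoise}_{i,t}=\snoise_{i,t}$ for every $i$, which preserves both marginals and allows us to write
\begin{equation*}
\E{y_t \mid \tfs_t, \at}-\E{y_t \mid \fs, \at}
= \E[\snoise=\tilde{\snoise}]{\so_{m,t}-\s_{m,t}}
\end{equation*}
using that $y_t=x_{m,t}$ (and $\tilde y_t=\tilde x_{m,t}$).

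\textbf{Step 2: Jensen and the lemma.} Applying Jensen's inequality (the absolute value, and hence the norm, is convex) and then the triangle-type inequality $|v|\le\|v\|$ for the scalar last coordinate yields
\begin{equation*}
\E[\snoise=\tilde{\snoise}]{\so_{m,t}-\s_{m,t}}
\le \E[\snoise=\tilde{\snoise}]{\|\so_{m,t}-\s_{m,t}\|}.
\end{equation*}
Under the coupling $\tilde{\snoise}=\snoise$, the hypotheses of Lemma~\ref{lem:exploration:regret:bounded_state_difference_with_same_noise_noconstraint} are satisfied pathwise (the action $\at$ is fixed and identical in the two systems, and the noises agree), so pointwise $\|\so_{m,t}-\s_{m,t}\|\le 2\beta_t K^N \bar L_{f,t}^{\,N}\sum_{i=0}^m \|\bsigma_{i,t-1}(\zit,\ait)\|$. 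Taking expectation of this pointwise inequality and combining with the chain above gives the stated bound.

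\textbf{Where the (mild) subtlety lies.} The only delicate point is making sure the coupling is legitimate: the quantity $\sum_{i=0}^m \|\bsigma_{i,t-1}(\zit,\ait)\|$ on the RHS is evaluated at the \emph{true} parent realizations $\zit$, which are themselves random functions of $\snoise_{0:i-1,t}$; we must therefore keep the expectation on the outside rather than attempting to evaluate $\bsigma_{i,t-1}$ at deterministic inputs. Because the posterior functions $\bmu_{i,t-1},\bsigma_{i,t-1}$ depend only on data collected strictly before round $t$, they are deterministic conditional on $\calD_{t-1}$ and are therefore compatible with this expectation. No integrability issue arises since $\bsigma_{i,t-1}$ is bounded by the prior variance (which is bounded by $1$ under our kernel normalization) and the domains $\Z_i\times\A_i$ are compact. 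This completes the proof.
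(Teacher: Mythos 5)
Your proposal is correct and follows essentially the same route as the paper's own (very terse) proof: fix the action, couple the noise sequences so the pathwise bound of Lemma~\ref{lem:exploration:regret:bounded_state_difference_with_same_noise_noconstraint} applies, and take expectations. You are in fact slightly more careful than the paper, which writes the first step as an equality $\E{y_t \mid \tfs_t, \at} - \E{y_t \mid \fs, \at} = \E{\|\s_{m,t}-\so_{m,t}\| \mid \at}$ where an inequality (as in your Step 2) is what is actually meant.
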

\begin{proof}
  This follows from \cref{lem:exploration:regret:bounded_state_difference_with_same_noise_noconstraint}. $Y$ is just the final observation so
  \begin{align*}
  \E{y_t | \tfs_t, \at} - \E{y_t | \fs, \at} &= \E{ \| \s_{m,t} - \so_{m, t} \| \mid \at } \\
    &\leq 2 \beta_t K^N \bar{L}_{f, t}^{N} \E{\sum_{i=0}^{m} \| \bsigma_{i, t-1}(\z_{i,t}, \ait) \|}
  \end{align*}
\end{proof}

\begin{lemma}
  \label{lem:exploration:regret:bound_squared_regret_by_expected_variance}
  Under \cref{as:well_calibrated_model}, let $L_{Y, t} = 2 \beta_t \bar{L}_{f, t}^{N}$. Then, with probability at least $(1 - \delta)$ it holds for all $t \geq 0$ that
  \begin{equation}
    r_t^2 \leq L_{Y, t}^2 K^{2N} m \E{ \sum_{i=0}^{m}
    \| \bsigma_{i, t-1}(\z_{i,t}, \ait) \|_2^2} 
  \end{equation}
\end{lemma}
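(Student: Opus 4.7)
The plan is to combine the bound on instantaneous regret from \cref{lem:simple_regret_bound} with the bound on the expected output divergence from \cref{lem:exploration:regret:bound_expected_state_divergence_by_sigma}, and then apply two standard inequalities (Jensen's and Cauchy--Schwarz) to turn the squared expectation of a sum of norms into an expectation of a sum of squared norms.

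First, with probability at least $1-\delta$, \cref{lem:simple_regret_bound} gives $r_t \leq \E{y_t \mid \tfs_t, \at} - \E{y_t \mid \fs, \at}$. Next, \cref{lem:exploration:regret:bound_expected_state_divergence_by_sigma} bounds this difference by $2\beta_t K^N \bar L_{f,t}^{N} \E[\noise=\tilde\noise]{\sum_{i=0}^{m}\|\bsigma_{i,t-1}(\z_{i,t},\ait)\|}$. Using the shorthand $L_{Y,t} = 2\beta_t \bar L_{f,t}^{N}$, we obtain $r_t \leq L_{Y,t} K^{N}\, \E{\sum_{i=0}^{m}\|\bsigma_{i,t-1}(\z_{i,t},\ait)\|}$. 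Squaring yields $r_t^2 \leq L_{Y,t}^2 K^{2N}\bigl(\E{\sum_{i=0}^{m}\|\bsigma_{i,t-1}(\z_{i,t},\ait)\|}\bigr)^2$.

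Then I would apply Jensen's inequality to the outer expectation (the map $x \mapsto x^2$ is convex), moving the square inside: $\bigl(\E{S}\bigr)^2 \leq \E{S^2}$ where $S = \sum_{i=0}^{m}\|\bsigma_{i,t-1}(\z_{i,t},\ait)\|$. Finally, Cauchy--Schwarz on the inner sum of $m{+}1$ nonnegative terms gives $S^2 = \bigl(\sum_{i=0}^{m} 1 \cdot \|\bsigma_{i,t-1}(\z_{i,t},\ait)\|\bigr)^2 \leq (m{+}1)\sum_{i=0}^{m}\|\bsigma_{i,t-1}(\z_{i,t},\ait)\|^2$, which is absorbed into the $m$ factor in the stated bound (treating the constant $m{+}1$ as $m$ in the $\mathcal{O}$-style notation consistent with the rest of the paper).

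There is no real obstacle here: the two nontrivial ingredients, \cref{lem:simple_regret_bound} and \cref{lem:exploration:regret:bound_expected_state_divergence_by_sigma}, are already established, and the remaining moves are textbook. The only thing to be a little careful about is that the expectation in \cref{lem:exploration:regret:bound_expected_state_divergence_by_sigma} is over the noise realizations with $\noise = \tilde{\noise}$ (so the same noise drives both the true and optimistic SCMs), but since the variances $\bsigma_{i,t-1}(\z_{i,t},\ait)$ depend on the random parent realization $\z_{i,t}$ only through the noise of ancestor nodes, the same expectation carries through the Jensen/Cauchy--Schwarz steps without modification. This directly yields $r_t^2 \leq L_{Y,t}^2 K^{2N} m \, \E{\sum_{i=0}^{m}\|\bsigma_{i,t-1}(\z_{i,t},\ait)\|_2^2}$.
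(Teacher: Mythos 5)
Your proposal is correct and follows essentially the same route as the paper's own proof: chain \cref{lem:simple_regret_bound} with \cref{lem:exploration:regret:bound_expected_state_divergence_by_sigma}, square, apply Jensen to move the square inside the expectation, and then bound the squared sum by $m$ times the sum of squares (the paper labels this last step Jensen as well, but it is the same $(\sum a_i)^2 \le n \sum a_i^2$ inequality you invoke via Cauchy--Schwarz). Your remark that the factor is really $m{+}1$ rather than $m$ is a fair observation about a slight looseness the paper silently absorbs.
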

\begin{proof}
\begin{align}
  r_\ni &\leq \E{y_t | \tfs_t, \a_t} - \E{y_t | \fs, \a_t} \\
  &\leq 2 \beta_t K^N \bar{L}_{f, t}^{N} \E{\sum_{i=0}^{m} \| \bsigma_{i, t-1}(\zit,\ait) \|} \\
  &\leq L_{Y, t} K^N \E{\sum_{i=0}^{m} \| \bsigma_{i, t-1}(\z_{i,t}, \ait) \|} 
\end{align}

\begin{align}
  r_t^2 &\leq L^2_{Y, t} K^{2N} \left( \E{\sum_{i=0}^{m} \| \bsigma_{i, t-1}(\zit,\ait)\|} \right)^2 \\
  &\leq L^2_{Y, t} K^{2N} \E{ \left(\sum_{i=0}^{m} \| \bsigma_{i, t-1}(\z_{i,t}, \ait)\|\right)^2} \\
  &\leq L^2_{Y, t} K^{2N} m \E{ \sum_{i=0}^{m} \| \bsigma_{i, t-1}(\z_{i,t}, \ait)\|_2^2 } 
\end{align}
The last two lines are Jensen's inequality. 
\end{proof}

Now we bound cumulative regret $R_T$.

\begin{lemma}
  \label{lem:exploration:regret:bound_cumulative_regret_by_expected_uncertainty}
  Under the assumption of \cref{as:well_calibrated_model,as:f_lipschitz,as:model_lipschitz},  with probability at least $(1 - \delta)$ it holds for all $t \geq 0$ that
  \begin{equation}
    R_T^2 \leq T  L_{Y, T}^2 m \sum_{t=1}^T\E{ \sum_{i=0}^{m} \| \bsigma_{i, t}(\z_{i,t}, \a_{i,t})^2 \|_2^2 } 
  \end{equation}
\end{lemma}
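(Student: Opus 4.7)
\textbf{Proof plan for \cref{lem:exploration:regret:bound_cumulative_regret_by_expected_uncertainty}.} The strategy is to pass from a bound on the per-round regret $r_t$ (provided by \cref{lem:exploration:regret:bound_squared_regret_by_expected_variance}) to a bound on the cumulative regret $R_T = \sum_{t=1}^T r_t$ via Cauchy--Schwarz, and then to pull the $t$-dependent constants out of the sum using monotonicity.

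Concretely, I would first write $R_T^2 = \bigl(\sum_{t=1}^T r_t\bigr)^2$ and apply the Cauchy--Schwarz inequality (equivalently, the power-mean inequality) to obtain
\begin{equation*}
R_T^2 \;\leq\; T \sum_{t=1}^T r_t^2.
\end{equation*}
Next I would substitute the per-round bound from \cref{lem:exploration:regret:bound_squared_regret_by_expected_variance}, which gives
\begin{equation*}
r_t^2 \;\leq\; L_{Y,t}^2 K^{2N} m \, \mathbb{E}\Bigl[\sum_{i=0}^m \|\bsigma_{i,t-1}(\z_{i,t},\a_{i,t})\|_2^2\Bigr].
\end{equation*}

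The only remaining step is to remove the $t$-dependence from $L_{Y,t} = 2\beta_t \bar{L}_{f,t}^N$. Since the confidence parameters $\beta_t$ are non-decreasing in $t$ (shrinking confidence intervals) and $\bar{L}_{f,t} = 1 + L_f + 2\beta_t L_\sigma$ is therefore also non-decreasing, we have $L_{Y,t} \leq L_{Y,T}$ for every $t \leq T$. Pulling $L_{Y,T}^2 K^{2N} m$ outside the sum over $t$ and noting that the high-probability event from \cref{as:well_calibrated_model} covers all rounds simultaneously gives exactly the claimed inequality (absorbing $K^{2N}$ into $L_{Y,T}^2$ or keeping it explicit, depending on the final constant convention used in \cref{thm:exploration:regret:general_regret_bound}).

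I do not anticipate any real obstacle here: the lemma is a straightforward aggregation step, and the work was done in \cref{lem:exploration:regret:bounded_state_difference_with_same_noise_noconstraint,lem:exploration:regret:bound_expected_state_divergence_by_sigma,lem:exploration:regret:bound_squared_regret_by_expected_variance}. The only minor care needed is to verify that $\beta_t$ is indeed monotone (so that $L_{Y,t}$ can be uniformly bounded by $L_{Y,T}$), which follows from standard RKHS confidence-interval constructions such as those of \citet{chowdhury2019online}; if $\beta_t$ were not monotone one could just replace it by $\max_{s \leq T} \beta_s$ without affecting the order of the final bound. The subsequent step (not part of this lemma) will be to convert $\sum_t \mathbb{E}[\sum_i \|\bsigma_{i,t-1}\|_2^2]$ into the model-complexity quantity $\Gamma_T$ of \cref{eq:complexity_class}, and ultimately into maximum information gains $\gamma_{i,T}$, yielding \cref{theorem}.
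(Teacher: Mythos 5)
Your proposal matches the paper's proof essentially verbatim: the paper also squares the sum, applies Jensen's inequality (your Cauchy--Schwarz step) to get $R_T^2 \leq T\sum_t r_t^2$, and then substitutes the bound from \cref{lem:exploration:regret:bound_squared_regret_by_expected_variance} with $L_{Y,t}$ replaced by $L_{Y,T}$. Your observation about the $K^{2N}$ factor is well taken --- it is indeed dropped from the displayed statement of this lemma but reappears in \cref{lem:exploration:regret:bound_cumulative_regret_by_worst_case_uncertainty}, so keeping it explicit as you suggest is the consistent choice.
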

\begin{proof}
  \begin{align}
    R_\Ni^2 &= \left( \sum_{\ni=1}^\Ni r_\ni \right)^2 \overset{\tiny \circled{1}}{\leq} \Ni \sum_{\ni=1}^\Ni r_\ni^2 \overset{\tiny \circled{2}}{\leq} T  L_{Y, T}^2 m \sum_{t=1}^T\E{ \sum_{i=0}^{m} \| \bsigma_{i, t}(\z_{i,t}, \a_{i,t})^2 \|_2^2 }, 
  \end{align}
\end{proof}
where $\tiny\circled{1}$ is due to Jensen's inequality and $\tiny\circled{2}$ follows \cref{lem:exploration:regret:bound_squared_regret_by_expected_variance}. Similar to the equivalent lemma in \cite{curiEfficientModelBasedReinforcement2020}, this bound is dependent on the data observed by the iteration $t$, making it hard to interpret in a general case. To this end, we further provide the worst-case bound dependent on the model complexity $\modcom_T$.  

\begin{lemma}
  \label{lem:exploration:regret:bound_cumulative_regret_by_worst_case_uncertainty}
  Under \cref{as:well_calibrated_model,as:f_lipschitz,as:model_lipschitz}, with probability at least $(1 - \delta)$ it holds for all $\ni \geq 0$ that
  \begin{equation}
  \label{app:eq:rearrange}
    R_T^2 \leq T  L_{Y, T}^2 K^{2N} m \modcom_T 
  \end{equation}
\end{lemma}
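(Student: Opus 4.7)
The plan is to derive the stated bound directly from Lemma \ref{lem:exploration:regret:bound_cumulative_regret_by_expected_uncertainty} by replacing the expected sum of posterior variances with its worst-case counterpart $\modcom_T$ defined in equation \eqref{eq:complexity_class}. Most of the heavy lifting has already been done: Lemmas \ref{lem:exploration:regret:bounded_state_difference_with_same_noise_noconstraint}--\ref{lem:exploration:regret:bound_cumulative_regret_by_expected_uncertainty} yield
$$R_T^2 \leq T L_{Y,T}^2 K^{2N} m \sum_{t=1}^T \E{\sum_{i=0}^m \|\bsigma_{i,t-1}(\z_{i,t}, \a_{i,t})\|_2^2},$$
where the $K^{2N}$ factor propagates from Lemma \ref{lem:exploration:regret:bound_squared_regret_by_expected_variance}. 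What remains is to uniformly bound the expected variance sum by the deterministic quantity $\modcom_T$.

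The key observation is that for any realization of the noise sequence $\{\snoise_{i,t}\}_{i,t}$, the induced trajectory of parent realizations and chosen actions $\{(\z_{i,t}, \a_{i,t})\}_{t=1}^T$ is a specific element of $\{\Z \times \A\}^T$. Consequently, pointwise in the noise,
$$\sum_{t=1}^T \sum_{i=0}^m \|\bsigma_{i,t-1}(\z_{i,t}, \a_{i,t})\|_2^2 \leq \modcom_T,$$
by the definition of $\modcom_T$ as the supremum of this quantity over admissible trajectories. Since the right-hand side is deterministic, taking expectation over the noise preserves the inequality, so $\sum_{t=1}^T \E{\sum_{i=0}^m \|\bsigma_{i,t-1}(\z_{i,t}, \a_{i,t})\|_2^2} \leq \modcom_T$. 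Substituting this into the bound from \cref{lem:exploration:regret:bound_cumulative_regret_by_expected_uncertainty} proves the claim, and the statement of \cref{thm:exploration:regret:general_regret_bound} then follows after taking square roots.

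There is no real obstacle here: this step is essentially a bookkeeping move once the worst-case complexity $\modcom_T$ has been introduced with a sufficiently permissive supremum. The only subtlety worth flagging is that $\modcom_T$ can be loose when nodes share parents (so the inputs $\{\z_{i,t}\}_i$ are not freely chosen for a given $t$), as already noted in the discussion following equation \eqref{eq:complexity_class}; this affects tightness but not validity. The genuinely nontrivial remaining work lies outside this lemma: relating $\modcom_T$ to the per-node maximum information gains $\gamma_{i,T}$ of the $m$ Gaussian process models, which converts \cref{thm:exploration:regret:general_regret_bound} into the explicit form stated in \cref{theorem}.
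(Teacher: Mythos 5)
Your proof is correct and follows essentially the same route as the paper: the paper's own proof of this lemma is a one-line observation that the realized input trajectory is one of the sequences over which the maximum in \cref{eq:complexity_class} is taken, so the expected variance sum from \cref{lem:exploration:regret:bound_cumulative_regret_by_expected_uncertainty} is bounded by the deterministic quantity $\modcom_T$. You merely spell out the pointwise-in-noise argument (and correctly track the $K^{2N}$ factor from \cref{lem:exploration:regret:bound_squared_regret_by_expected_variance}) that the paper leaves implicit.
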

\begin{proof}
    Substituting in \cref{eq:complexity_class} we have 
    \begin{equation}
        \sum_{t=1}^T\E{ \sum_{i=0}^{m} \big \| \sigma_{i, t}(\z_{i,t}, \a_{i,t})^2 \big \|^2 }
        \leq \modcom_T
    \end{equation}
    and the result follows. 
\end{proof}

Taking square roots and substituting in for $L_{Y,T}$ in terms of $\beta_T$, $L_{f}$ and $L_{\sigma}$ in \cref{app:eq:rearrange} concludes the proof for \cref{thm:exploration:regret:general_regret_bound}. \\


\subsubsection{Proof of Theorem 1}
\label{app:theorem2}

We can bound $\modcom_T$ in \cref{thm:exploration:regret:general_regret_bound} to get a bound that depends on the specific GP model used for each $\fofi$. We can show that for many commonly used kernels \alg achieves sublinear (in $T$) regret. 
\kernelregretbound*

\begin{proof} 

\textbf{\textit{Step 1.}} \textit{Some preliminaries relating mutual information $I_T(\bm x_{i,1:T},f_{i,1:T})$ and maximum information gain $\gamma_T$}

In the following, we consider the information gain for the node $i$, i.e., for $\bm x_{i,1:T} \in \R^{d \times T}$ and $\f_{i, 1:T}= [\fofi(\bm z_{i,1}, \bm a_{i,1}), \dots,\fofi(\bm z_{i,T}, \bm a_{i,T})]^\top$ evaluated at points $A_i = \{\bm z_{i,1:T}, \bm a_{i,1:T}\}$, $A_i \in \Zi \times \Ai$. In this step, for simplicity, when clear from context we will omit the $i$-notation in the derivation for the information gain. Mutual information $I_T(\bm x_{1:T},\f_{1:T})$ is then defined as:

$$I(\bm x_{1:T}, \f_{1:T}) = H(\bm x_{1:T}) - H(\bm x_{1:T}|\f_{1:T}), $$ 

where $H(\cdot)$ denotes entropy. For fitting the GPs, the following models are used:  
$\s_t|\f_t \sim \mathcal{N} (\f_t(\z_t, \a_t), \noisescale^2 \bm I)$ and $\s_t|\s_{1:t-1}, \z_{1:t}, \a_{1:t}  \sim \mathcal{N} (\bmu_{t-1}(\z_t, \a_t), \noisescale \bm I + \bSigma_{t-1}(\z_t, \a_t))$, where $\bm I \in \R^{d \times d}$,  and
$$\bmu_{t-1}(\z_t, \a_t) = [\mu_{t-1}(\z_t, \a_t, 1), \dots, \mu_{t-1}(\z_t, \a_t, d)]^{\top},$$ 
$$\bsigma_{t-1}(\z_t, \a_t) = diag(\bSigma_{t-1}(\z_t, \a_t)) = [\sigma_{t-1}(\z_t, \a_t, 1), \dots, \sigma_{t-1}(\z_t, \a_t, d)]^{\top}.$$ 

Our setup assumes that the components $\{\bm x_{t,l}\}_{l=0}^{d}$ are independent of each other given $z_t, a_t$. Therefore from \citet{srinivas10} we know that the mutual information for the $i$th GP model is:

\begin{align}
    &I(\bm x_{1:T}, \f_{1:T}) = H(\bm x_{1:T}) - H(\bm x_{1:T}|\f_{1:T}) = \frac{1}{2}\sum_{t=1}^T \sum_{l=1}^{d}\ln \bigg( 1+ \frac{\bsigma^2_{t-1}(\z_t,\a_t,l)}{   \noisescale^2 } \bigg) \label{eq:eq_information_gain}
\end{align}

because per component:
\begin{align}
    &I(\bm x_{i, 1:T, l}, \f_{i, 1:T,l}) = \frac{1}{2}\sum_{t=1}^T \ln \bigg( 1+ \frac{\sigma^2_{t-1}(\z_t,\a_t,l)}{   \noisescale^2 } \bigg). \label{app:eq_information_gain_comp}
\end{align}

Then, by the definition of maximum information gain $\gamma_{i,T,l}$ per node $i$ in the graph:
\begin{align}
\gamma_{i,T,l} := \underset{\scriptstyle A_i\subset\ \{\Zi \times \Ai\}^T \atop\scriptstyle}{\max} \; I(\bm x_{i, 1:T, l}, f_{i, 1:T, l})
= \underset{\scriptstyle A_i\subset\ \{\Zi \times \Ai\}^T \atop\scriptstyle}{\max} \; \frac{1}{2}\sum_{t=1}^T \ln \bigg( 1+ \frac{\sigma^2_{i, t-1}(\z_t,\a_t,l)}{   \noisescale^2 } \bigg)  
\label{eq:info_gain_multidim_l}
\end{align}

Accordingly, we can write the maximum information gain between $\bm x_{i,1:T}$ and $\f_{i, 1:T}$ as follows:
\begin{align}
\label{eq:info_gain_multidim}
\gamma_{i,T} 
& := 
    \underset{\scriptstyle A_i\subset\ \{\Zi \times \Ai\}^T \atop\scriptstyle}{\max} \; I(\bm x_{i, 1:T}, \f_{i, 1:T}) 
=
    \underset{\scriptstyle A_i\subset\ \{\Zi \times \Ai\}^T \atop\scriptstyle}{\max} \frac{1}{2}\sum_{t=1}^T \sum_{l=1}^{d}\ln \bigg( 1+ \frac{\sigma^2_{i, t-1}(\z_t,\a_t,l)}{   \noisescale^2 } \bigg) \\
& \leq \sum_{l = 1}^d  \underset{\scriptstyle A_i\subset\ \{\Zi \times \Ai\}^T \atop\scriptstyle}{\max} \frac{1}{2}\sum_{t=1}^T \ln \bigg( 1+ \frac{\sigma^2_{i, t-1}(\z_t,\a_t,l)}{   \noisescale^2 } \bigg)  
= \sum_{l=1}^d \gamma_{i,T,l}  
\end{align}

\textbf{\textit{Note:}} \textit{Bounds for $\gamma_{i, T, l}$ and $\gamma_{i, T}$.}
Upper bounds on $\gamma_{i,T,l}$ are provided in \cite{srinivas10} for widely used kernels and scale sublinearly in $T$. We use $p_i = d \abs{pa(i)}$ to represent the size of the z-input to a GP. Recall that $q$ is the length of each action vector \ie $\calA_i \subset \mathbb{R}^q$. For the linear kernel $\gamma_{i,T,l} = \mathcal O((p_i +  q)\log T)$, and for the squared exponential kernel $\gamma_{i,T,l} = \mathcal O((p_i+q)(\log T)^{p_i+q+1})$.
We can use \cref{eq:info_gain_multidim} to give bounds on $\gamma_{i, T}$ too e.g. if all GPs use independent linear kernels for each output component $\gamma_{i,T} = \mathcal O(d_i(p_i +  q)\log T)$, and if all GPs use independent squared exponential kernels for each output component $\gamma_{i,T} = \mathcal O(d(p_i+q)(\log T)^{p_i+q+1})$.

\textbf{\textit{Step 2.}} \textit{A bound for model complexity $\modcom_T$.}

Here we bound the model complexity \cref{eq:complexity_class}

\begin{align}
\label{eq:thm2_modcom}
    \modcom_T \leq \sum_{i=0}^m \frac{1}{\ln(1+\noisescalei^{-2})}   \gamma_{i,T}.
\end{align}

For readability, in the following, we denote 
    $\underset{A}{\max} (\cdot) \coloneqq \underset{\scriptstyle A \colon A = \cup_i A_i: \atop\scriptstyle \forall i \colon A_i \subset \{\Zi \times \Ai\}^T }{\max} (\cdot).$

\begin{align*}
    \modcom_T & =  \underset{\{\Z \times \A\}^T}{\max} \sum_{\ni=1}^\Ni \sum_{i=0}^{m} \| \bsigma_{i, t-1}(\zi, \ai) \|_2^2 \\
    & \overset{\tiny \circled{1}}{\leq}     \underset{A}{\max} \sum_{\ni=1}^\Ni \sum_{i=0}^{m} \| \bsigma_{i, t-1}(\zi, \ai) \|_2^2 
    \overset{\tiny \circled{2}}{\leq}  
        \sum_{i=0}^m \underset{A}{\max} \sum_{\ni=1}^\Ni \| \bsigma_{i, t-1}(\zi, \ai) \|_2^2 \\
    &\overset{\tiny \circled{3}}{\leq}  
        \sum_{i=0}^m \underset{A_i}{\max} \sum_{\ni=1}^\Ni \| \bsigma_{i, t-1}(\zi, \ai) \|_2^2 
    \overset{\tiny \circled{4}}{\leq} 
        \sum_{i=0}^m \underset{A_i}{\max} \sum_{\ni=1}^\Ni \sum_{l=1}^{d_i}\| \sigma_{i, (t-1)}(\zi, \ai, l) \|_2^2  \\
    & \overset{\tiny \circled{5}}{\leq} 
        \sum_{i=0}^m \frac{2}{\ln(1+\noisescalei^{-2})} \underbrace{
            \underset{A_i}{\max} \frac{1}{2}\sum_{\ni=1}^\Ni \sum_{l=1}^{d_i} \ln \left(1+ \frac{\sigma_{i, (t-1)}(\zi, \ai, l)}{\noisescalei^{2}} \right)
            }_{
            \mathrm{maximum~information~gain} \  \cref{eq:info_gain_multidim}
            }  \\
    & \overset{\tiny \circled{6}}{=}  
        \sum_{i=0}^m \frac{2}{\ln(1+\noisescalei^{-2})}   \gamma_{i,T}
    \overset{\tiny \circled{7}}{=}    \Or{m \gamma_{T}},
\end{align*}

where $\tiny\circled{1}$ bounds $\calA$ and $\calZ$ with a box. $\tiny\circled{2}$ is from the max over a sum.  $\tiny\circled{3}$ is due to the assumption of $\sit$ being independent of $A_j$, $j \neq i$, conditioned on $A_i$.
$\tiny\circled{4}$ is due to Jensen's inequality. $\tiny\circled{5}$ is due to the fact that for any $s^2\in[0,\noisescale^{-2}]$ we can bound $s^2 \leq \frac{\noisescale^{-2}}{\ln(1+\noisescale^{-2})} \ln(1+s^2)$ \cite{srinivas10}. This also holds for function $s^2(\cdot) \coloneqq \noisescalei^{-2}\sigma^2_{i, (t-1),l}(\cdot)$ since $\noisescalei^{-2}\sigma^2_{i, (t-1),d}(\cdot) \leq \noisescalei^{-2} k(\cdot,\cdot) \leq \noisescalei^{-2}$ because $k_i(\cdot, \cdot) < 1 \ \forall i$ (bounded kernel assumption).  $\tiny\circled{6}$ is due to \cref{eq:eq_information_gain,eq:info_gain_multidim}. Finally, in $\tiny\circled{7}$ we define $\gamma_{T} \coloneqq \max_i \gamma_{i,T}$ being the maximum value of the maximum information gains over graph nodes. \looseness-1

Plugging this upper bound on $\Gamma_T$ into \cref{thm:exploration:regret:general_regret_bound} completes the proof. 

\textbf{\textit{Note:}} \textit{Sublinearity w.r.t. T of maximum information gain $\gamma_T$.} 

Upper bounds on $\gamma_T$ will often scale sublinearly in $T$. This follows from  $\gamma_{i,T}$ scaling sublinearly in $T$ for many popularly used kernels (see previous note). In particular, a linear kernel leads to {$\gamma_{T} = \mathcal O \left( d(Kd +  q)\log T \right)$} and a squared exponential kernel leads to  {$\gamma_{T} = \mathcal O \left( d(Kd +q)(\log T)^{Kd + q +1}\right)$} (assuming output components are independent) since $\max_i p_i = Kd$.

\end{proof}

\subsubsection{Dependence of $\beta_T$ on $T$ for Particular Kernels}
\label{app:particular_kernels} 
Note that for \cref{as:well_calibrated_model} to hold under our RKHS assumptions, $\beta_T$ might depend on $T$. For a single observed variable corresponding to node $i$ at time $t$, for \cref{as:well_calibrated_model} to hold we must have $\beta_T = {\tilde{\calO} \left(\calB_i + \frac{\noisescalei}{d} \sqrt{\gamma_{i, t}} \right)}$ (\citet{chowdhury2019online} equation 9). For \cref{as:well_calibrated_model} to hold at time $t$ for all $i$ we can apply a union bound and see that it is sufficient for $\beta_T = {\tilde{\calO} \left(\calB + \frac{\noisescale}{d} \sqrt{\gamma_{t}} \right)}$ where $\calB = \max_i \calB_i$ and $\noisescale = \max_i \noisescalei$. 

In the regret bound of \cref{theorem}, $\beta_T$ appears raised to the power $N$. For $\gamma_T$ corresponding to the linear kernel and squared exponential kernel, this will still lead to sublinear regret regardless of $N$ because $\gamma_T$ will be only logarithmic in $T$. However, for a \matern \ kernel, where the best known bound $\gamma_T = {\calO \left(p(pT)^c log(p T) \right)}$ with $0 < c < 1$, the cumulative regret bound will not be sublinear if $N$ and $c$ are sufficiently large. A similar phenomena with the \matern \ kernel appears in the guarantees of \citet{curiEfficientModelBasedReinforcement2020} which use GP models in model-based reinforcement learning. 


\subsection{Maximizing the Acquisition Function}
\label{app:heuristics}
Our theoretical results assume access to an oracle that can maximize \cref{eq:acqfn}. Here we discuss how we approximate this oracle in practice.

In noiseless settings, instead of parameterizing each $\etai$ as a neural network, we can parameterize it as a constant. This is because with no noise, the inputs to $\etai$ ($\zi$ and $\ai$) given $\a$ are fixed. This keeps the space of parameters to optimize over small, meaning we can use an identical optimization procedure to that used in EIFN by \citet{astudilloBayesianOptimizationFunction2021} which is also an out-of-the-box optimizer in the BoTorch package \cite{balandat2020botorch}. 

For noisy settings where each $\etai : \A_i \times \Z_i \rightarrow \mathbb{R}$ is a neural network, we use our own optimizer. For each initialization of $\eta$ parameters, we perform stochastic gradient descent to optimize both the $\etai$ parameters and $\a$. We can do this because \cref{eq:reparam_acquisition} is differentiable with respect to both $\a$ and the parameters of each $\etai$. After running stochastic gradient descent on many random initializations we will have many solution candidates. We select the candidate with the highest acquisition function value. We use a large number of different random initializations because the acquisition function may be very non-convex. Other approaches, such as those considered in \citet{curiEfficientModelBasedReinforcement2020} for model-based reinforcement learning, could also be adapted to optimize our acquisition function. 

When parameterizing each $\etai$ with a neural network, we always use a two layer feed-forward network with a ReLu non-linearity, To map the output into $[-1, 1]$ we put the output of the network through an element-wise Sigmoid. 

In all noisy environments we estimate the expectation in the acquisition function (\cref{eq:reparam_acquisition}) using a Monte Carlo estimate with 32 repeats for each gradient step. For noisy Dropwave we use 128 repeats because the environment is particularly noisy compared to other noisy environments. 

\begin{figure*}[t]
\minipage{0.33\textwidth}
\centering
\textbf{ToyGraph}
\vskip 0.1in
\begin{tikzpicture}[scale = 0.8, obs/.style={circle, draw=black!100, fill=black!0, thick, minimum size=7mm},
dotarget/.style={circle, dashed, draw=black!100, fill=black!0, thick, minimum size=7mm},
square/.style={rectangle, draw=black!100, fill=black!0, thick, minimum size=2mm},
]

\node[dotarget] (d0) at (0, 0) {$X_0$};
\node[dotarget] (d1) at (1.5, 0) {$X_1$};
\node[obs] (d2) at (3.0, 0) {$Y$};

\draw[->-, thick] (d0) -- (d1);
\draw[->-, thick] (d1) -- (d2);

\end{tikzpicture}
\endminipage\hfill \vline
\minipage{0.33\textwidth}
\centering
\textbf{PSAGraph}
\vskip 0.1in
\begin{tikzpicture}[scale = 0.8, obs/.style={circle, draw=black!100, fill=black!0, thick, minimum size=7mm},
dotarget/.style={circle, dashed, draw=black!100, fill=black!0, thick, minimum size=7mm},
square/.style={rectangle, draw=black!100, fill=black!0, thick, minimum size=2mm},
]
\node[obs] (d0) at (0, 1.5) {$X_0$};
\node[obs] (d1) at (1.5, 4) {$X_1$};
\node[dotarget] (d2) at (0, 0) {$X_2$};
\node[dotarget] (d3) at (3, 1.5) {$X_3$};
\node[obs] (d4) at (1.5, 2.5) {$X_4$};
\node[obs] (d5) at (3.0, 0) {$Y$};

\draw[->-, thick] (d0) -- (d1);
\draw[->-, thick] (d0) -- (d2);
\draw[->-, thick] (d0) -- (d3);
\draw[->-, thick] (d0) -- (d4);
\draw[->-, thick] (d0) -- (d5);
\draw[->-, thick] (d1) -- (d2);
\draw[->-, thick] (d1) -- (d3);
\draw[->-, thick] (d1) -- (d4);
\draw[->-, thick] (d1) -- (d5);

\draw[->-, thick] (d2) -- (d4);
\draw[->-, thick] (d2) -- (d5);
\draw[->-, thick] (d3) -- (d4);
\draw[->-, thick] (d3) -- (d5);

\draw[->-, thick] (d4) -- (d5);

\end{tikzpicture}
\endminipage\hfill \vline
\minipage{0.33\textwidth}
\centering
\textbf{Dropwave}
\vskip 0.1in
\begin{tikzpicture}[scale = 0.8, obs/.style={circle, draw=black!100, fill=black!0, thick, minimum size=7mm},
dotarget/.style={circle, dashed, draw=black!100, fill=black!0, thick, minimum size=7mm},
square/.style={rectangle, draw=black!100, fill=black!0, thick, minimum size=2mm},
]

\node[square] (a0) at (0, 1.5) {$a_0$};
\node[square] (a1) at (0, -1.5) {$a_1$};
\node[obs] (d0) at (0, 0) {$X_0$};
\node[obs] (d1) at (1.5, 0) {$Y$};

\draw[->-, thick] (a0) -- (d0);
\draw[->-, thick] (a1) -- (d0);
\draw[->-, thick] (d0) -- (d1);

\end{tikzpicture}
\endminipage\hfill

\vskip 0.1in

\hrulefill
\vskip 0.1in
\minipage{0.49\textwidth}
\centering
\textbf{Alpine2}
\vskip 0.1in
\begin{tikzpicture}[scale = 0.75, obs/.style={circle, draw=black!100, fill=black!0, thick, minimum size=7mm},
dotarget/.style={circle, dashed, draw=black!100, fill=black!0, thick, minimum size=7mm},
square/.style={rectangle, draw=black!100, fill=black!0, thick, minimum size=2mm},
]
\node[square] (a0) at (0, 1.5) {$a_0$};
\node[square] (a1) at (1.5, 1.5) {$a_1$};
\node[square] (a2) at (3, 1.5) {$a_2$};
\node[square] (a3) at (4.5, 1.5) {$a_3$};
\node[square] (a4) at (6, 1.5) {$a_4$};
\node[square] (a5) at (7.5, 1.5) {$a_5$};
\node[obs] (d0) at (0, 0) {$X_0$};
\node[obs] (d1) at (1.5, 0) {$X_1$};
\node[obs] (d2) at (3, 0) {$X_2$};
\node[obs] (d3) at (4.5, 0) {$X_3$};
\node[obs] (d4) at (6, 0) {$X_4$};
\node[obs] (d5) at (7.5, 0) {$Y$};

\draw[->-, thick] (d0) -- (d1);
\draw[->-, thick] (d1) -- (d2);
\draw[->-, thick] (d2) -- (d3);
\draw[->-, thick] (d3) -- (d4);
\draw[->-, thick] (d4) -- (d5);
\draw[->-, thick] (a0) -- (d0);
\draw[->-, thick] (a1) -- (d1);
\draw[->-, thick] (a2) -- (d2);
\draw[->-, thick] (a3) -- (d3);
\draw[->-, thick] (a4) -- (d4);
\draw[->-, thick] (a5) -- (d5);
\end{tikzpicture}
\endminipage\hfill \vline
\minipage{0.49\textwidth}
\centering
\textbf{Rosenbrock}
\vskip 0.1in
\begin{tikzpicture}[scale = 0.75, obs/.style={circle, draw=black!100, fill=black!0, thick, minimum size=7mm},
dotarget/.style={circle, dashed, draw=black!100, fill=black!0, thick, minimum size=7mm},
square/.style={rectangle, draw=black!100, fill=black!0, thick, minimum size=2mm},
]
\node[square] (a0) at (-1.5, 1.5) {$a_0$};
\node[square] (a1) at (0, 1.5) {$a_1$};
\node[square] (a2) at (1.5, 1.5) {$a_2$};
\node[square] (a3) at (3, 1.5) {$a_3$};
\node[square] (a4) at (4.5, 1.5) {$a_4$};
\node[obs] (d0) at (0, 0) {$X_0$};
\node[obs] (d1) at (1.5, 0) {$X_1$};
\node[obs] (d2) at (3, 0) {$X_2$};
\node[obs] (d3) at (4.5, 0) {$Y$};

\draw[->-, thick] (d0) -- (d1);
\draw[->-, thick] (d1) -- (d2);
\draw[->-, thick] (d2) -- (d3);

\draw[->-, thick] (a0) -- (d0);
\draw[->-, thick] (a1) -- (d0);

\draw[->-, thick] (a1) -- (d1);
\draw[->-, thick] (a2) -- (d1);

\draw[->-, thick] (a2) -- (d2);
\draw[->-, thick] (a3) -- (d2);

\draw[->-, thick] (a3) -- (d3);
\draw[->-, thick] (a4) -- (d3);

\end{tikzpicture}
\endminipage\hfill 

\vskip 0.1in
\hrulefill
\vskip 0.1in
\minipage{0.49\textwidth}
\centering
\textbf{Ackley}
\vskip 0.1in
\begin{tikzpicture}[scale = 0.8, obs/.style={circle, draw=black!100, fill=black!0, thick, minimum size=7mm},
dotarget/.style={circle, dashed, draw=black!100, fill=black!0, thick, minimum size=7mm},
square/.style={rectangle, draw=black!100, fill=black!0, thick, minimum size=2mm},
]
\node[square] (a0) at (0, 1.5) {$a_0$};
\node[square] (a1) at (1.5, 1.5) {$a_1$};
\node[square] (a2) at (3, 1.5) {$a_2$};
\node[square] (a3) at (4.5, 1.5) {$a_3$};
\node[square] (a4) at (6, 1.5) {$a_4$};
\node[square] (a5) at (7.5, 1.5) {$a_5$};
\node[obs] (d0) at (4.5, 3) {$X_0$};
\node[obs] (d1) at (4.5, 0) {$X_1$};
\node[obs] (d2) at (9, 1.5) {$Y$};

\draw[->-, thick] (d0) -- (d2);
\draw[->-, thick] (d1) -- (d2);

\draw[->-, thick] (a0) -- (d0);
\draw[->-, thick] (a1) -- (d0);
\draw[->-, thick] (a2) -- (d0);
\draw[->-, thick] (a3) -- (d0);
\draw[->-, thick] (a4) -- (d0);
\draw[->-, thick] (a5) -- (d0);

\draw[->-, thick] (a0) -- (d1);
\draw[->-, thick] (a1) -- (d1);
\draw[->-, thick] (a2) -- (d1);
\draw[->-, thick] (a3) -- (d1);
\draw[->-, thick] (a4) -- (d1);
\draw[->-, thick] (a5) -- (d1);
\end{tikzpicture}
\endminipage\hfill \vline
\minipage{0.39\textwidth}
\centering
\textbf{binary tree}
\vskip 0.1in
\begin{tikzpicture}[scale = 0.8, obs/.style={circle, draw=black!100, fill=black!0, thick, minimum size=7mm},
dotarget/.style={circle, dashed, draw=black!100, fill=black!0, thick, minimum size=7mm},
square/.style={rectangle, draw=black!100, fill=black!0, thick, minimum size=2mm},
]

\node[square] (a2) at (-4.5, -0.66) {$a_2$};
\node[square] (a3) at (-4.5, -2) {$a_3$};
\node[square] (a1) at (-4.5, 0.66) {$a_1$};
\node[square] (a0) at (-4.5, 2) {$a_0$};
\node[square] (a4) at (-1.5, 2.5) {$a_4$};
\node[square] (a5) at (-1.5, -2.5) {$a_5$};

\node[obs] (d2) at (-3, -0.66) {$X_2$};
\node[obs] (d3) at (-3, -2) {$X_3$};
\node[obs] (d1) at (-3, 0.66) {$X_1$};
\node[obs] (d0) at (-3, 2) {$X_0$};
\node[obs] (d4) at (-1.5, 1.0) {$X_4$};
\node[obs] (d5) at (-1.5, -1.0) {$X_5$};
\node[obs] (d6) at (0, 0) {$Y$};

\draw[->-, thick] (a0) -- (d0);
\draw[->-, thick] (a1) -- (d1);
\draw[->-, thick] (a2) -- (d2);
\draw[->-, thick] (a3) -- (d3);
\draw[->-, thick] (a4) -- (d4);
\draw[->-, thick] (a5) -- (d5);

\draw[->-, thick] (d0) -- (d4);
\draw[->-, thick] (d1) -- (d4);
\draw[->-, thick] (d2) -- (d5);
\draw[->-, thick] (d3) -- (d5);
\draw[->-, thick] (d5) -- (d6);
\draw[->-, thick] (d4) -- (d6);
\end{tikzpicture}
\endminipage\hfill  

\begin{center}
\caption{The DAGs corresponding to each task in the experiments, except binary tree which is used for illustrative purposes. Circles are observation or reward variables. Squares are actions in the function network setting. In hard intervention CBO, nodes with a dashed border can be intervened upon. All nodes represent a scalar random variable. }
\label{fig:task_dags}
\end{center}
\vskip -0.2in
\end{figure*}
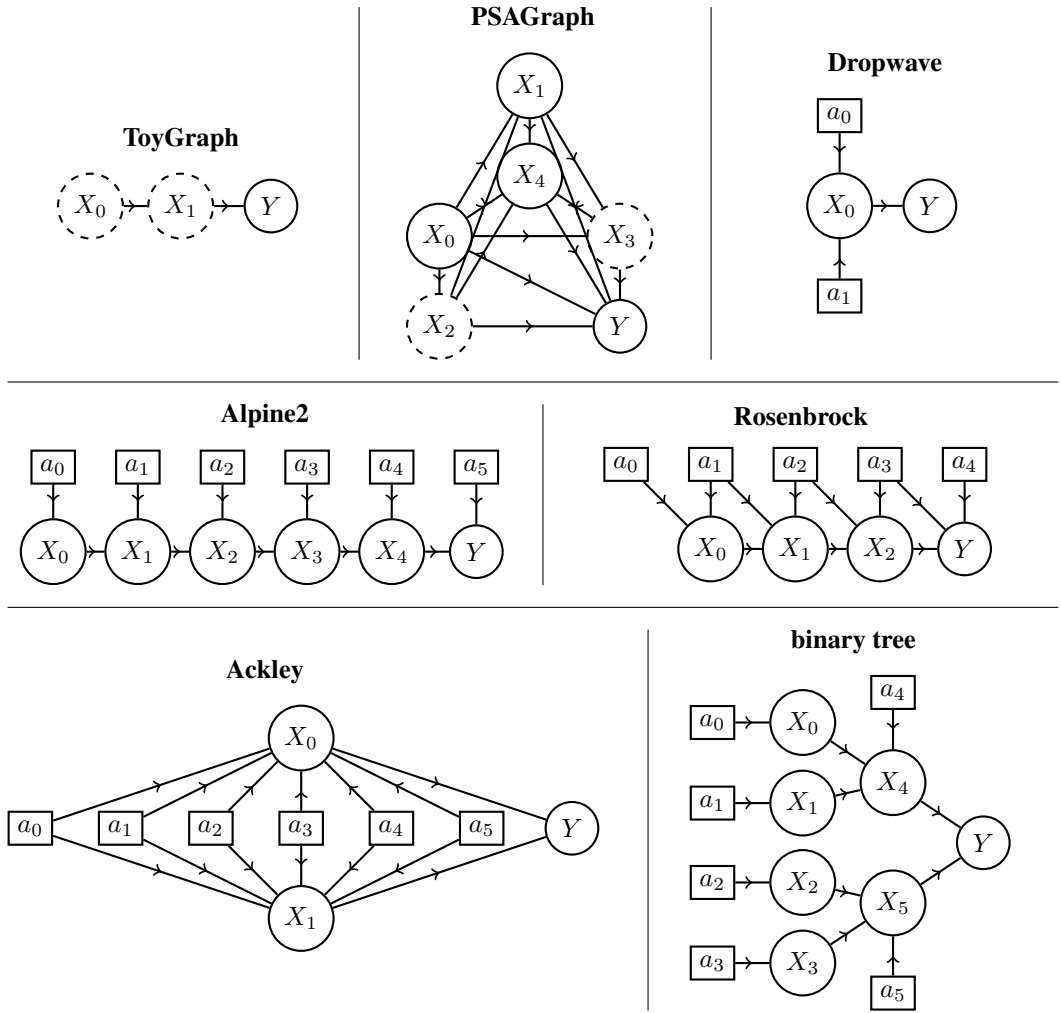
\raggedbottom

\subsection{Experimental Setup}

Here we give additional notes to explain the details of our experimental setup.   

The cross-validation for selecting $\beta$ is performed across $\beta=\{0.05, 0.5, 5\}$. For a given performance metric and task, we'll use the example of average reward on Dropwave, we select the $\beta$ that on average performs best across all other tasks for the same performance metric, called $\beta^*$. We then report the results in terms of average reward for running the algorithm (\ucb or \alg) on Dropwave with $\beta=\beta^*$

For \alg and \eifn, we use identical values for all shared hyperparameters (\eg GP kernels) and use the original hyperparameters from \cite{astudilloBayesianOptimizationFunction2021}. CBO methods do not have hyperparameters comparable with these methods because CBO methods do not model individual system observations (see Section~\ref{sec:related}). We run the original \eicbo source code effectively unmodified \cite{agliettiCausalBayesianOptimization2020}. 

For CBO tasks, if the agent selects no intervention (observational data) at a given timestep, we give the agent, for any algorithm, a single observational sample. This is different to in \citet{agliettiCausalBayesianOptimization2020} where 20 observational samples are given.  

There is no noisy version of Ackley because adding noise can make the environment unstable  by producing very large or very small rewards. For all environments and all $X_i$, the noisy environment adds unit-Gaussian noise, except Dropwave where we scale this noise by $0.1$ to make the environment more stable. 

When applying \alg to function networks we allow $\a_m$ to be nonzero, to match the function networks setting.

\subsection{Further Experimental Analysis}

\begin{figure*}[t]
\begin{center}\minipage{\columnwidth / 3}
\textbf{a}
\centering
\includegraphics[width=\columnwidth ]{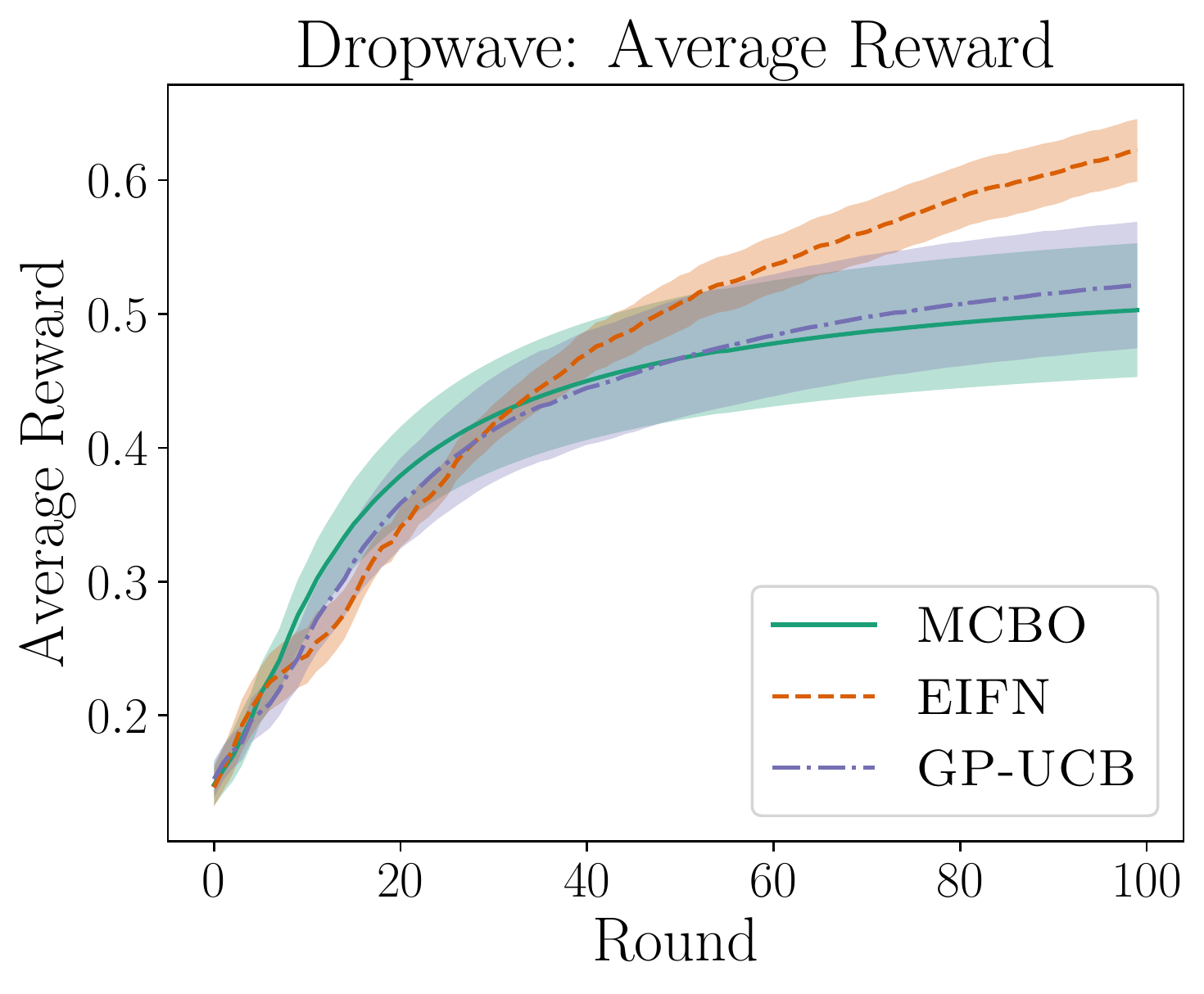}
\endminipage\hfill 
\minipage{\columnwidth / 3}
\textbf{b}
\centering
\includegraphics[width=\columnwidth ]{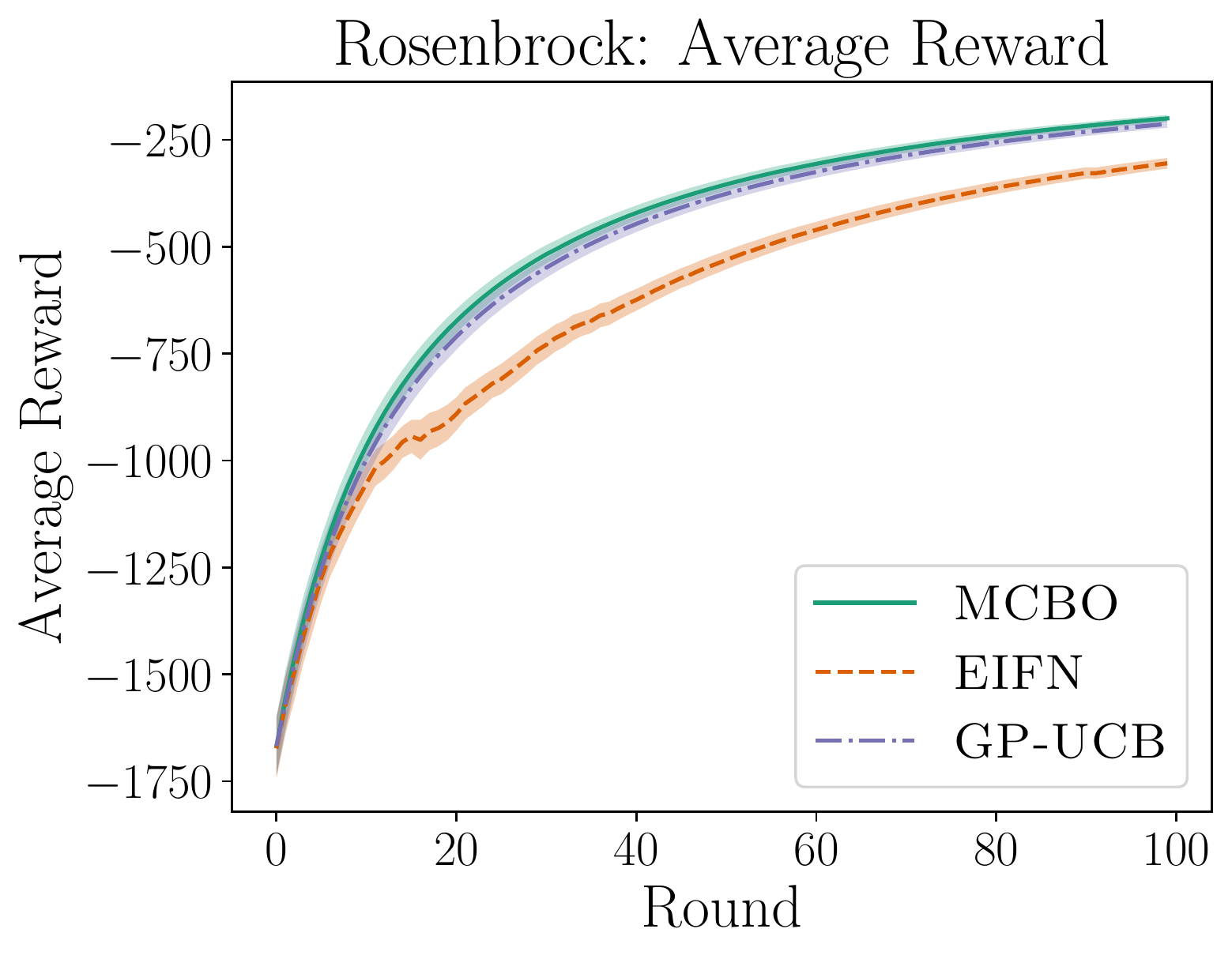}
\endminipage\hfill 
\minipage{\columnwidth / 3}
\textbf{c}
\centering
\includegraphics[width=\columnwidth ]{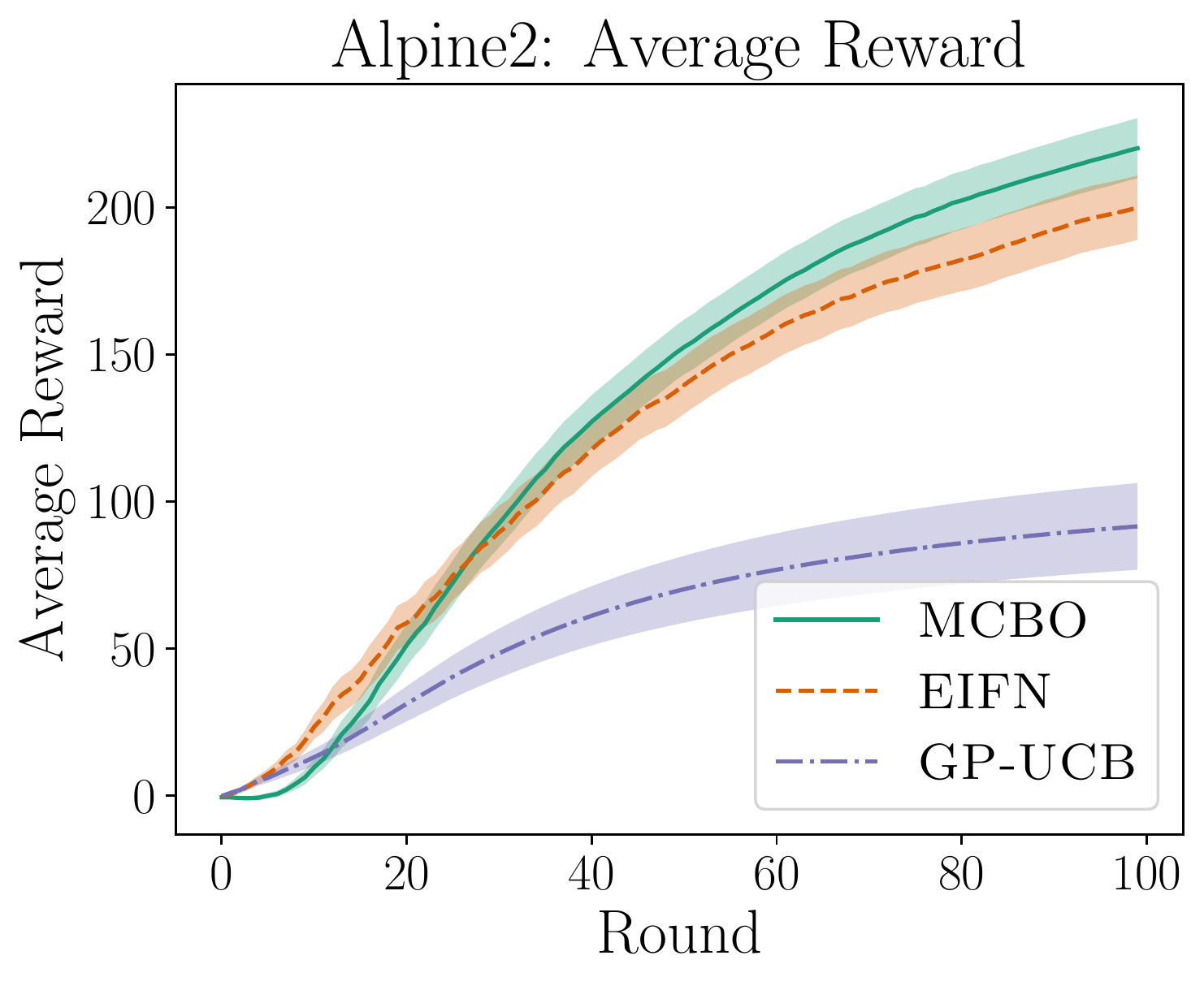}
\endminipage\hfill 

\caption{All average reward plots not used in the main paper. All settings are noiseless.} \label{fig:other_average_reward_plots}
\end{center}
\vskip -0.2in
\end{figure*}
\begin{figure*}[t]
\begin{center}
\minipage{\columnwidth / 3}
\textbf{a}
\centering
\includegraphics[width=\columnwidth ]{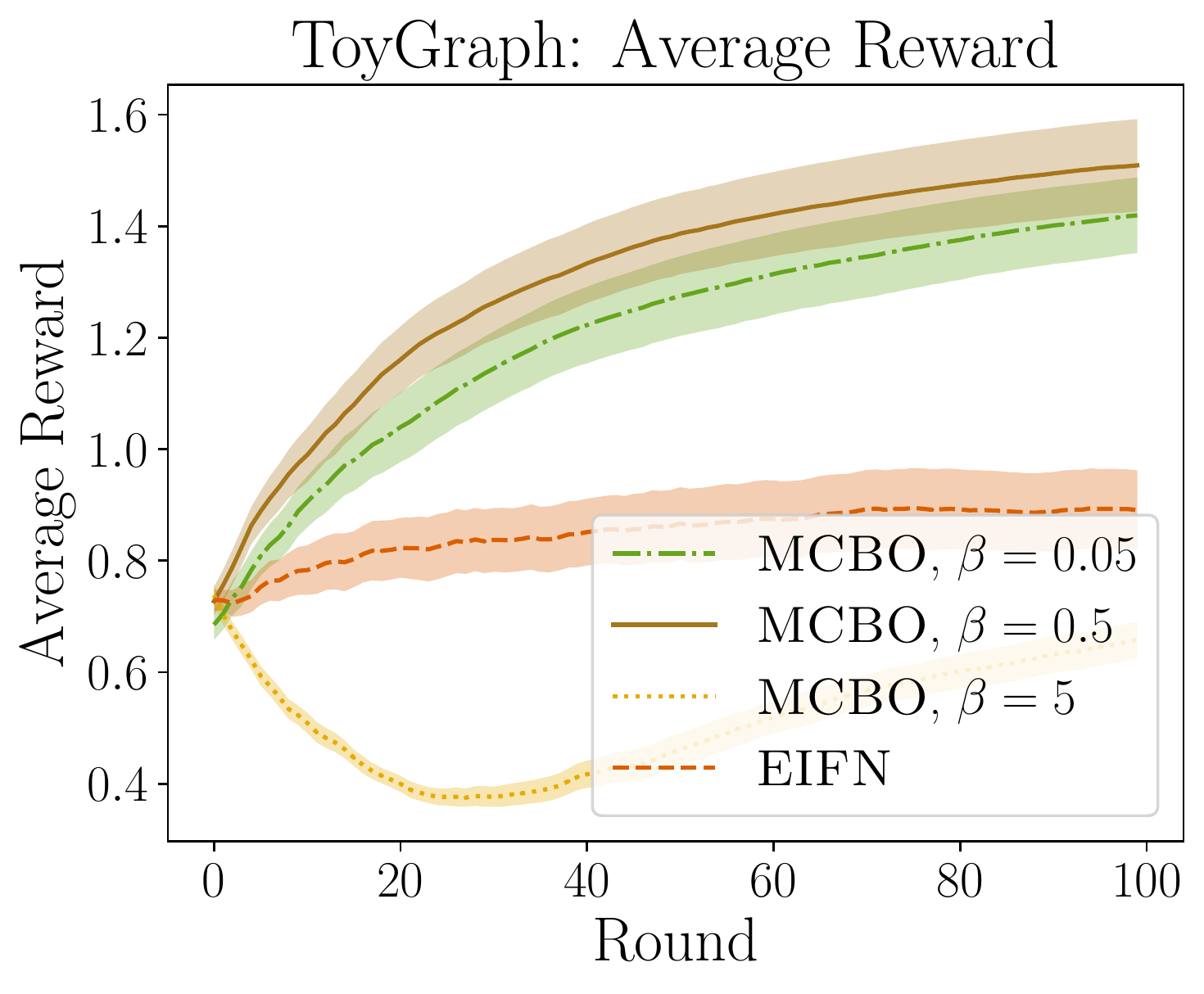}
\endminipage\hfill 
\minipage{\columnwidth / 3}
\textbf{b}
\centering
\includegraphics[width=\columnwidth ]{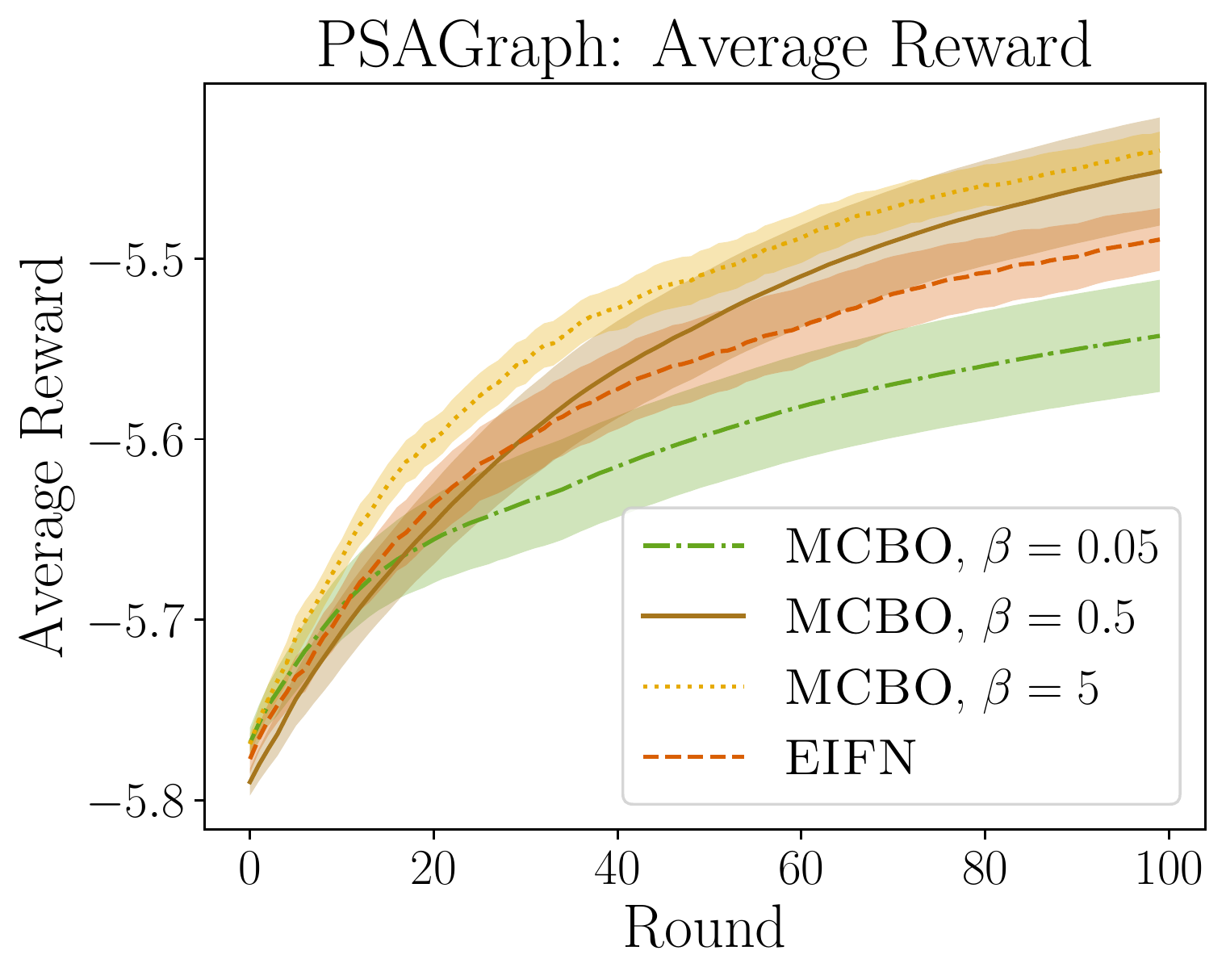}
\endminipage\hfill 
\minipage{\columnwidth / 3}
\textbf{c}
\centering
\includegraphics[width=\columnwidth ]{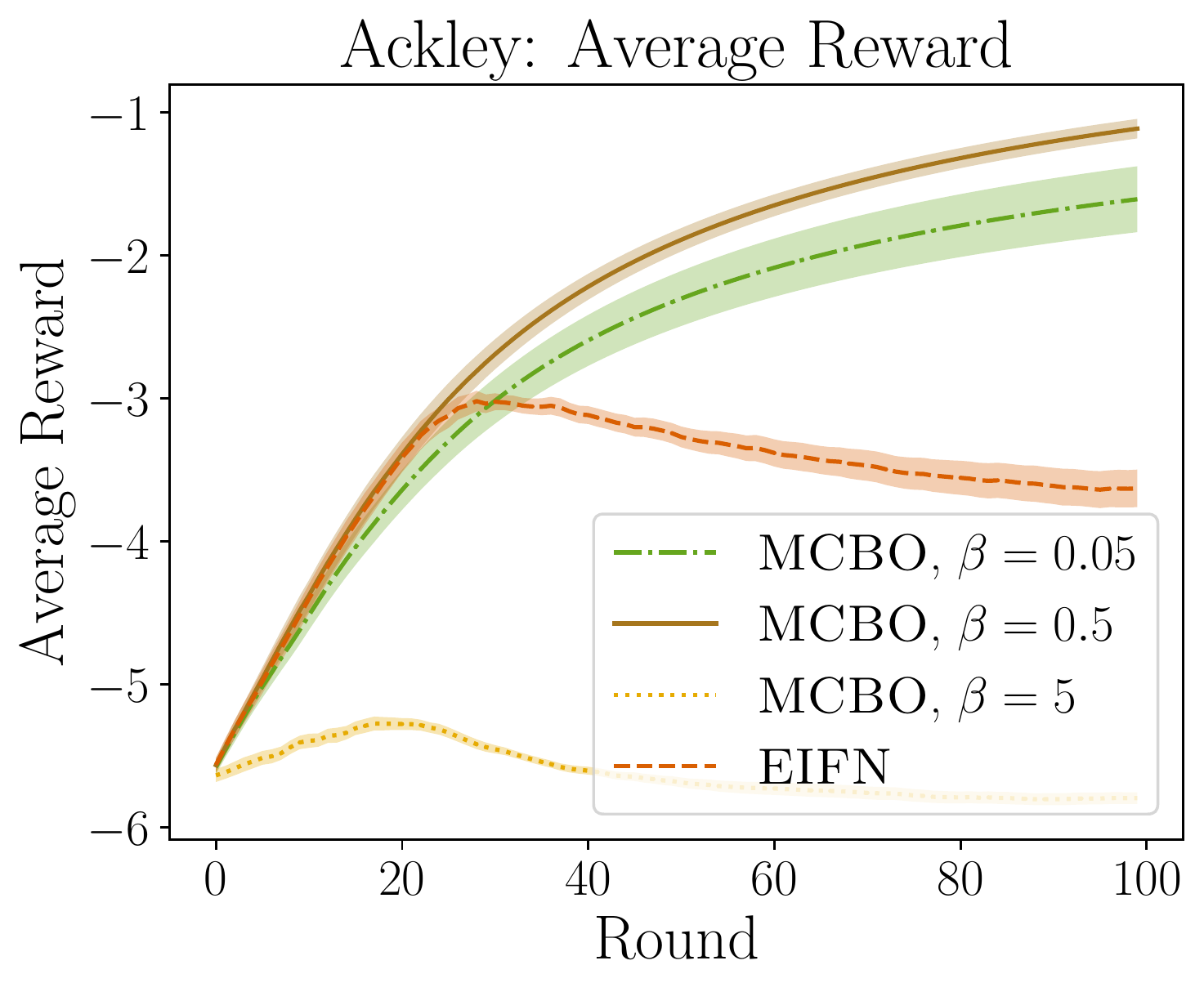}
\endminipage\hfill 

\minipage{\columnwidth / 3}
\textbf{d}
\centering
\includegraphics[width=\columnwidth ]{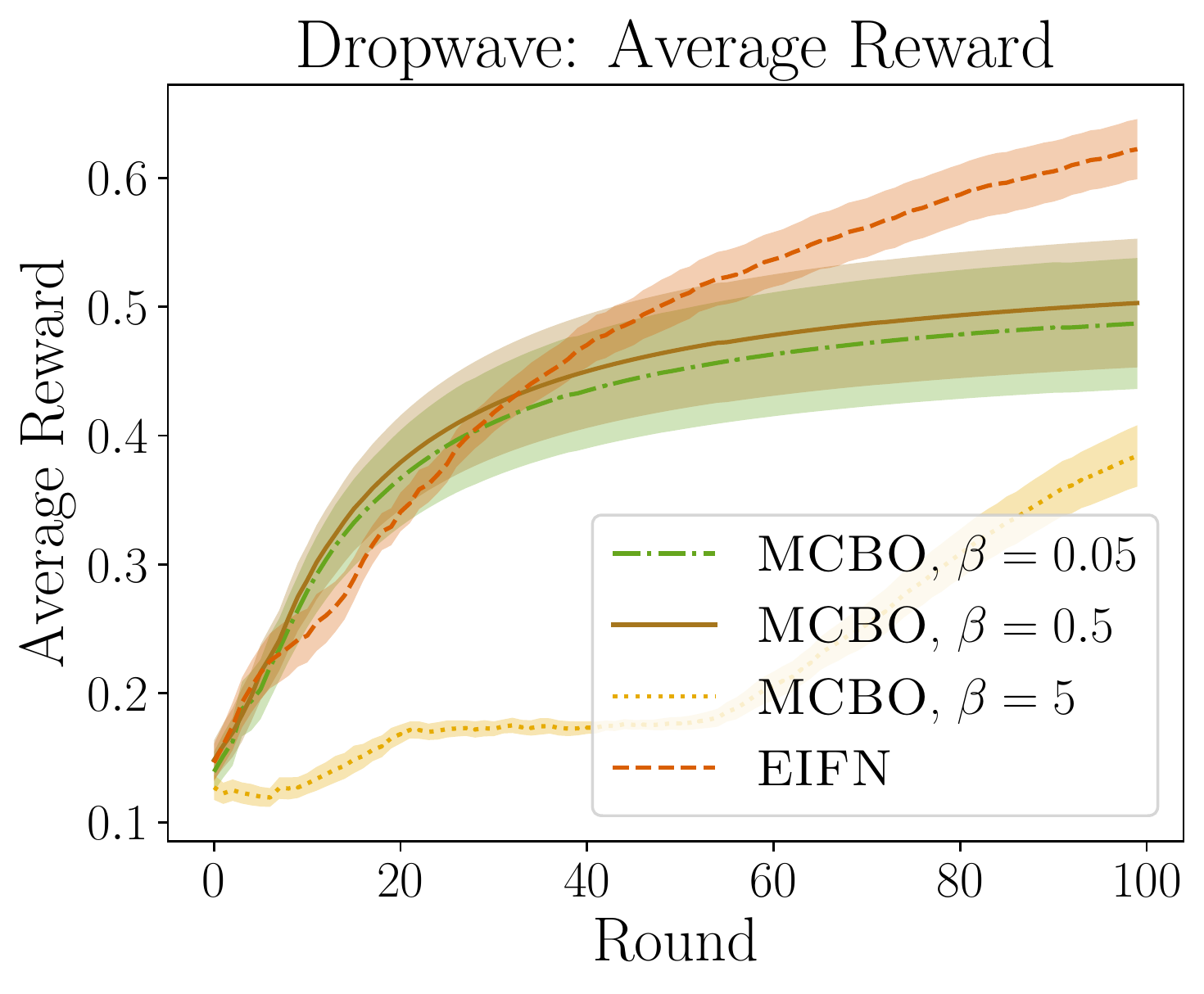}
\endminipage\hfill 
\minipage{\columnwidth / 3}
\textbf{e}
\centering
\includegraphics[width=\columnwidth ]{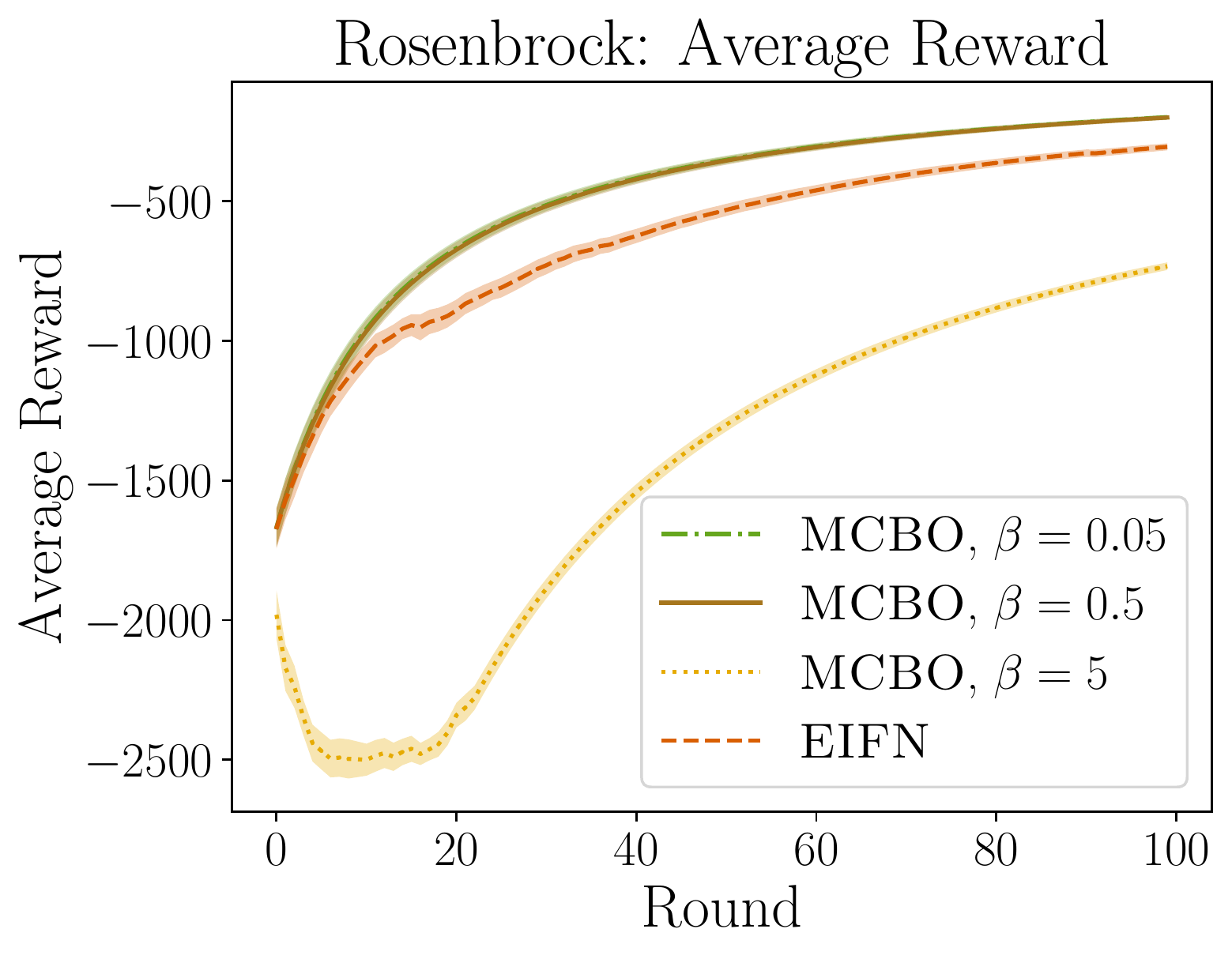}
\endminipage\hfill 
\minipage{\columnwidth / 3}
\textbf{f}
\centering
\includegraphics[width=\columnwidth ]{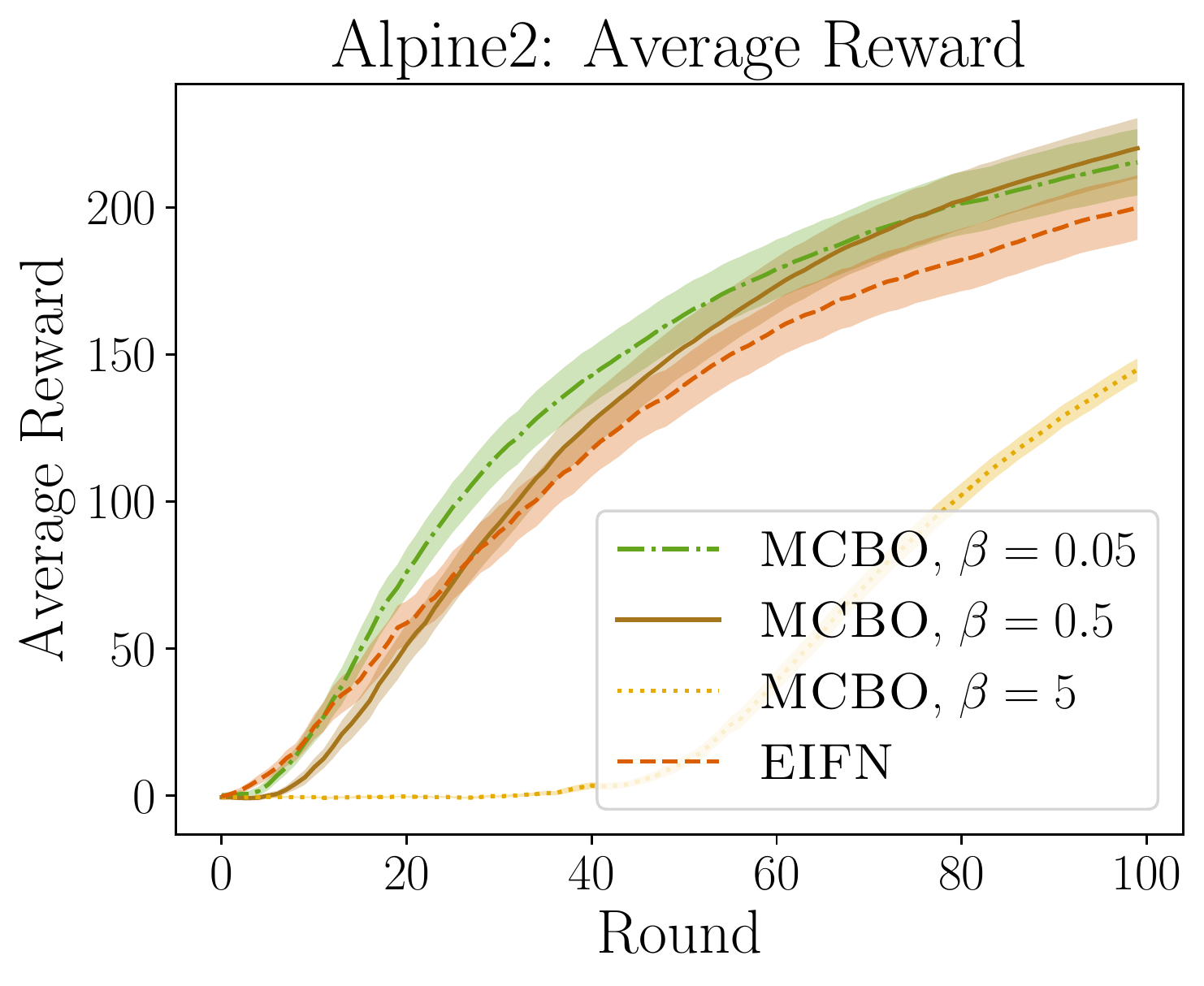}
\endminipage \hfill

\minipage{\columnwidth / 3}
\textbf{g}
\centering
\includegraphics[width=\columnwidth ]{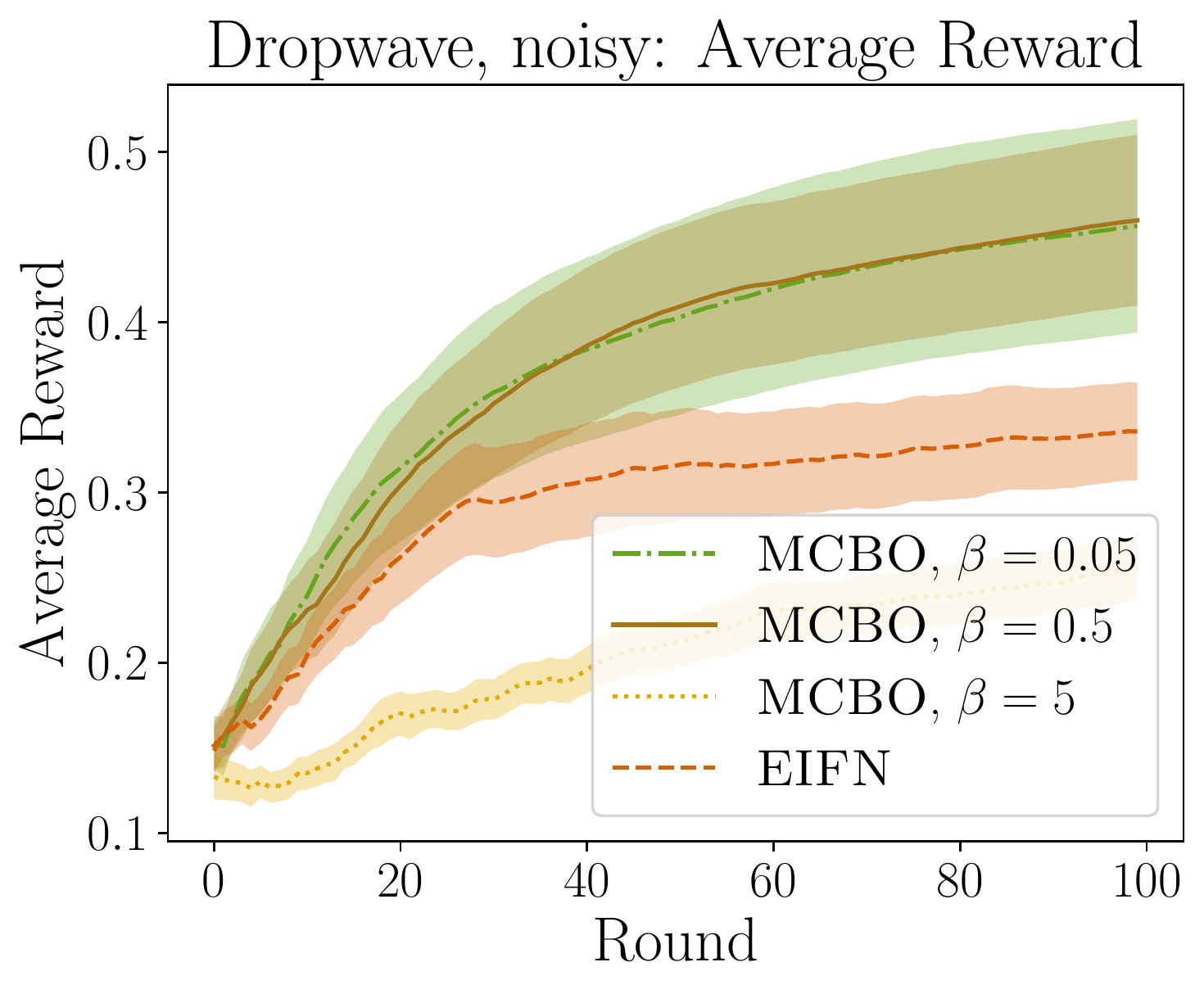}
\endminipage\hfill 
\minipage{\columnwidth / 3}
\textbf{h}
\centering
\includegraphics[width=\columnwidth ]{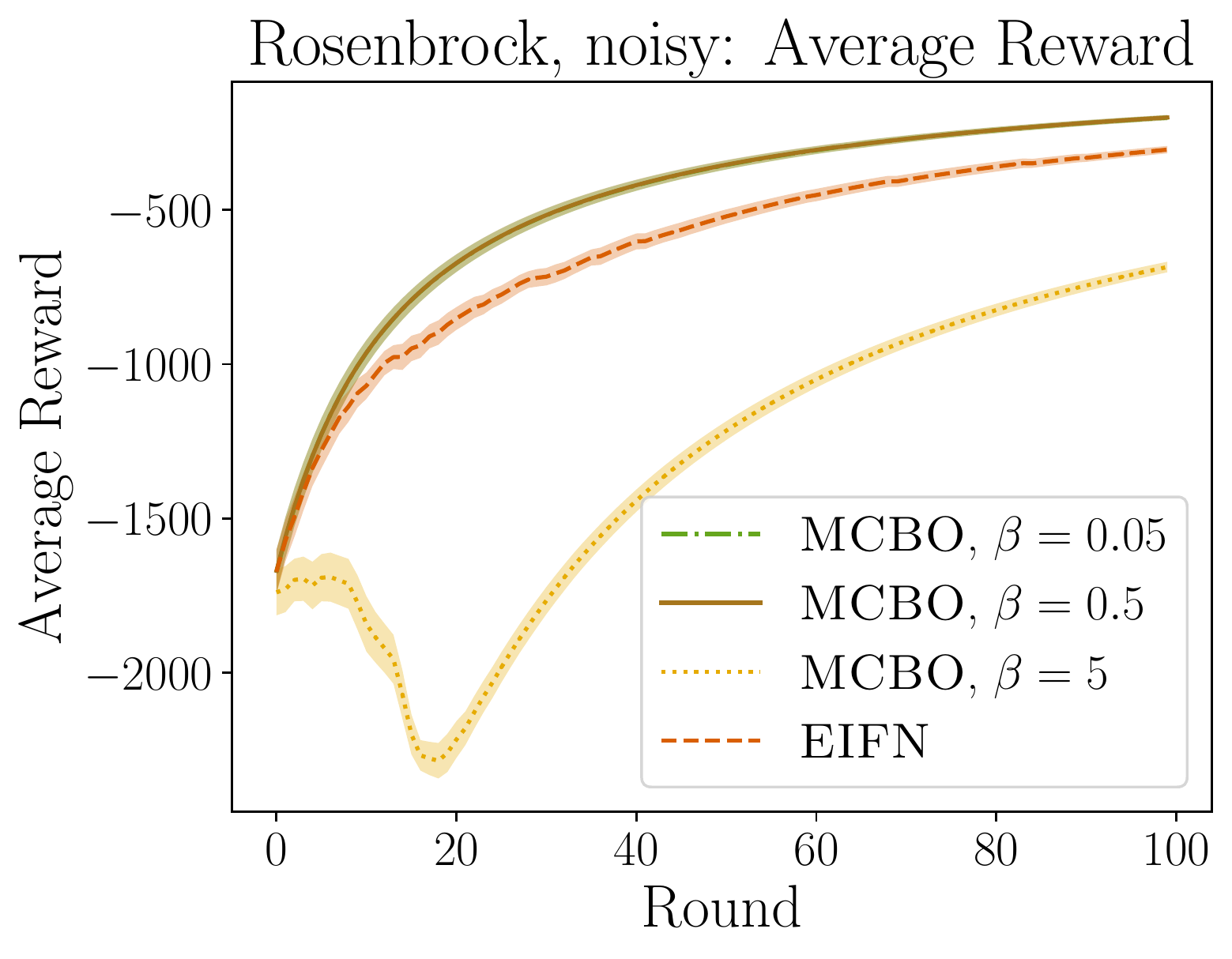}
\endminipage\hfill 
\minipage{\columnwidth / 3}
\textbf{i}
\centering
\includegraphics[width=\columnwidth ]{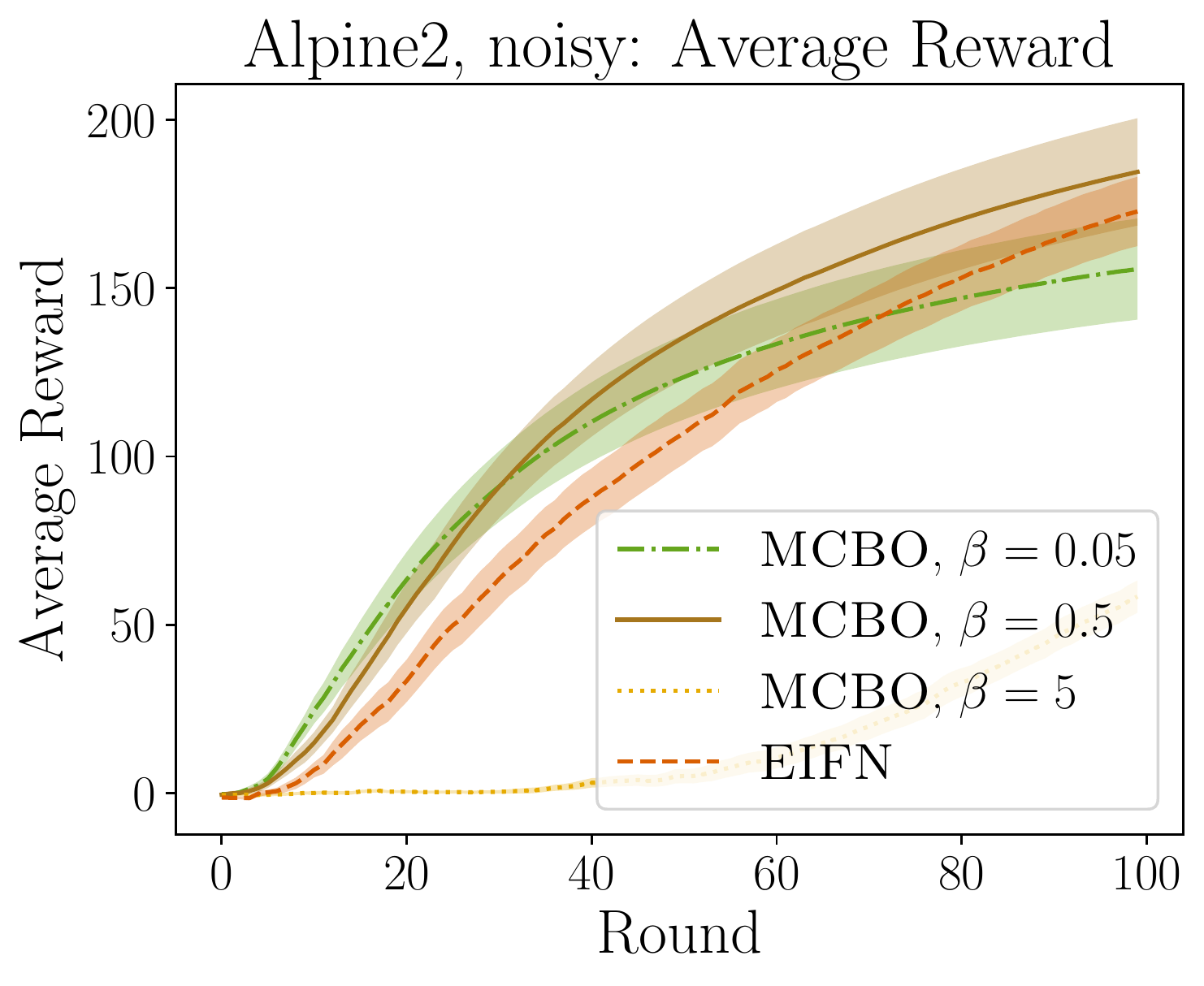}
\endminipage
\caption{An ablation across $\beta$ where we plot average reward for all tasks.} \label{fig:beta_plots_cumulative}
\end{center}
\vskip -0.2in
\end{figure*}
\textbf{Average expected reward} 

In Figure~\ref{fig:other_average_reward_plots} we show the deterministic function networks plots not included in the main paper. In Figure~\ref{fig:beta_plots_cumulative} we show the average reward for all $\beta$ considered in the cross-validation. 

\begin{figure*}[t]
\begin{center}\minipage{\columnwidth / 3}
\textbf{a}
\centering
\includegraphics[width=\columnwidth ]{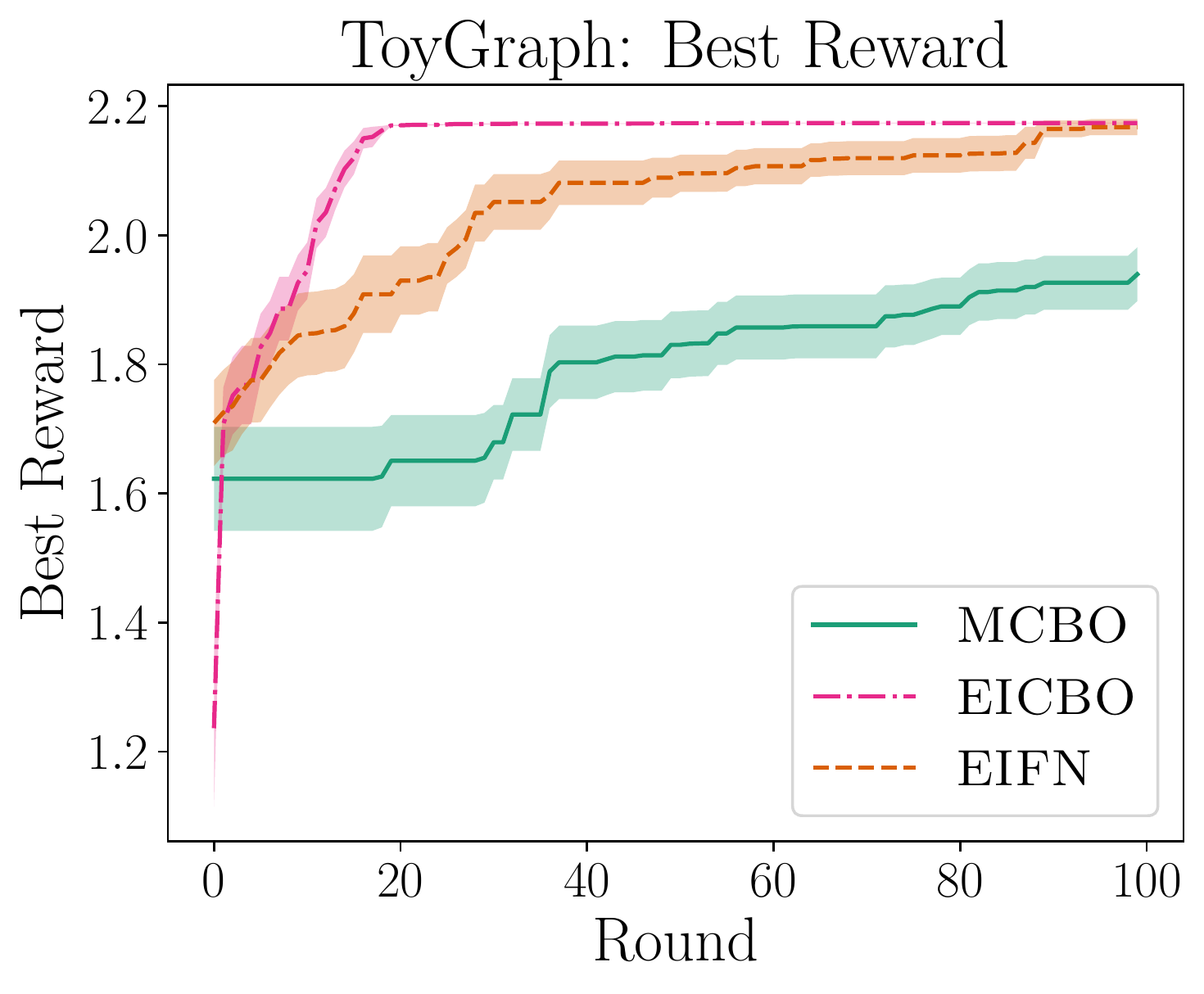}
\endminipage\hfill 
\minipage{\columnwidth / 3}
\textbf{b}
\centering
\includegraphics[width=\columnwidth ]{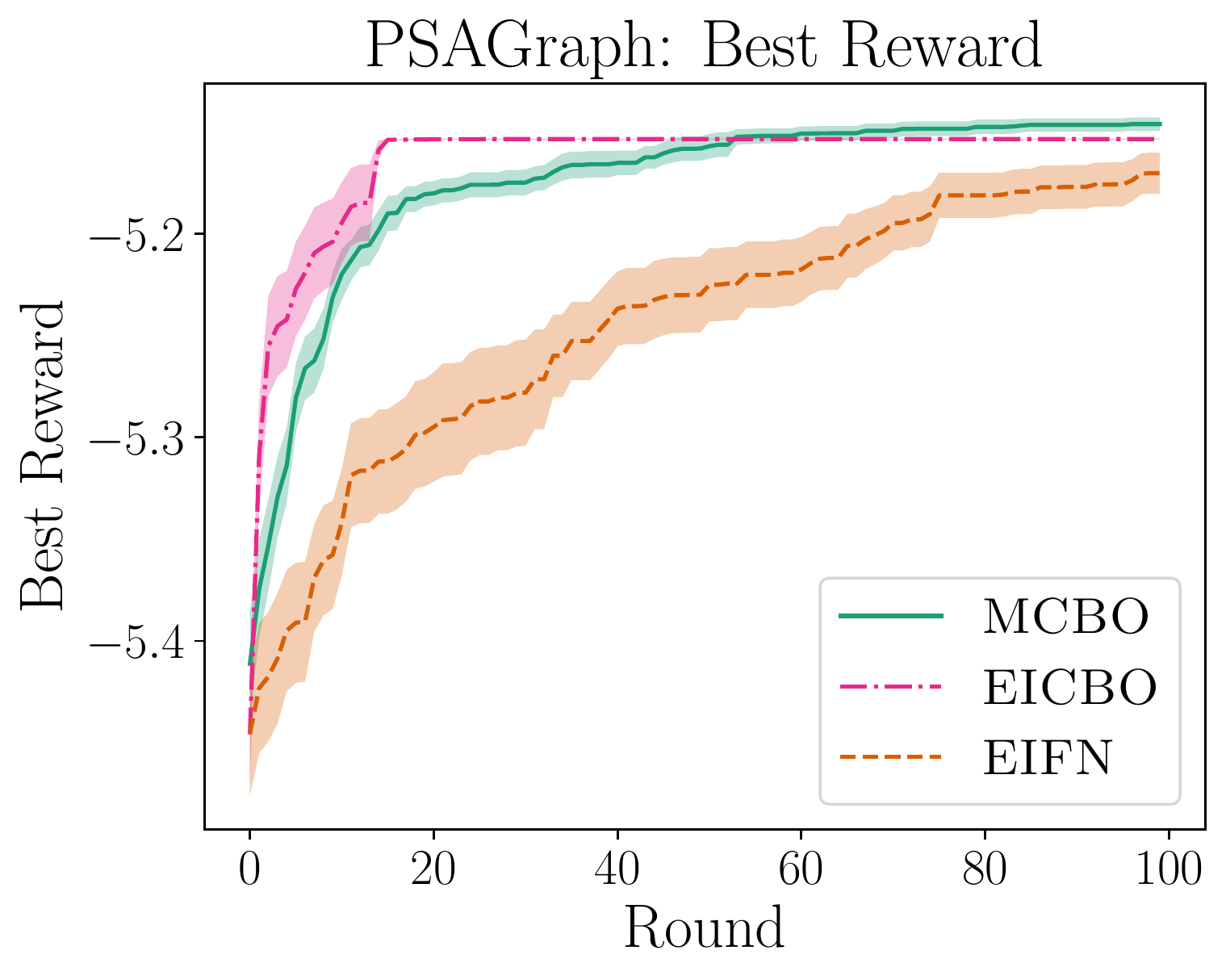}
\endminipage\hfill 
\minipage{\columnwidth / 3}
\textbf{c}
\centering
\includegraphics[width=\columnwidth ]{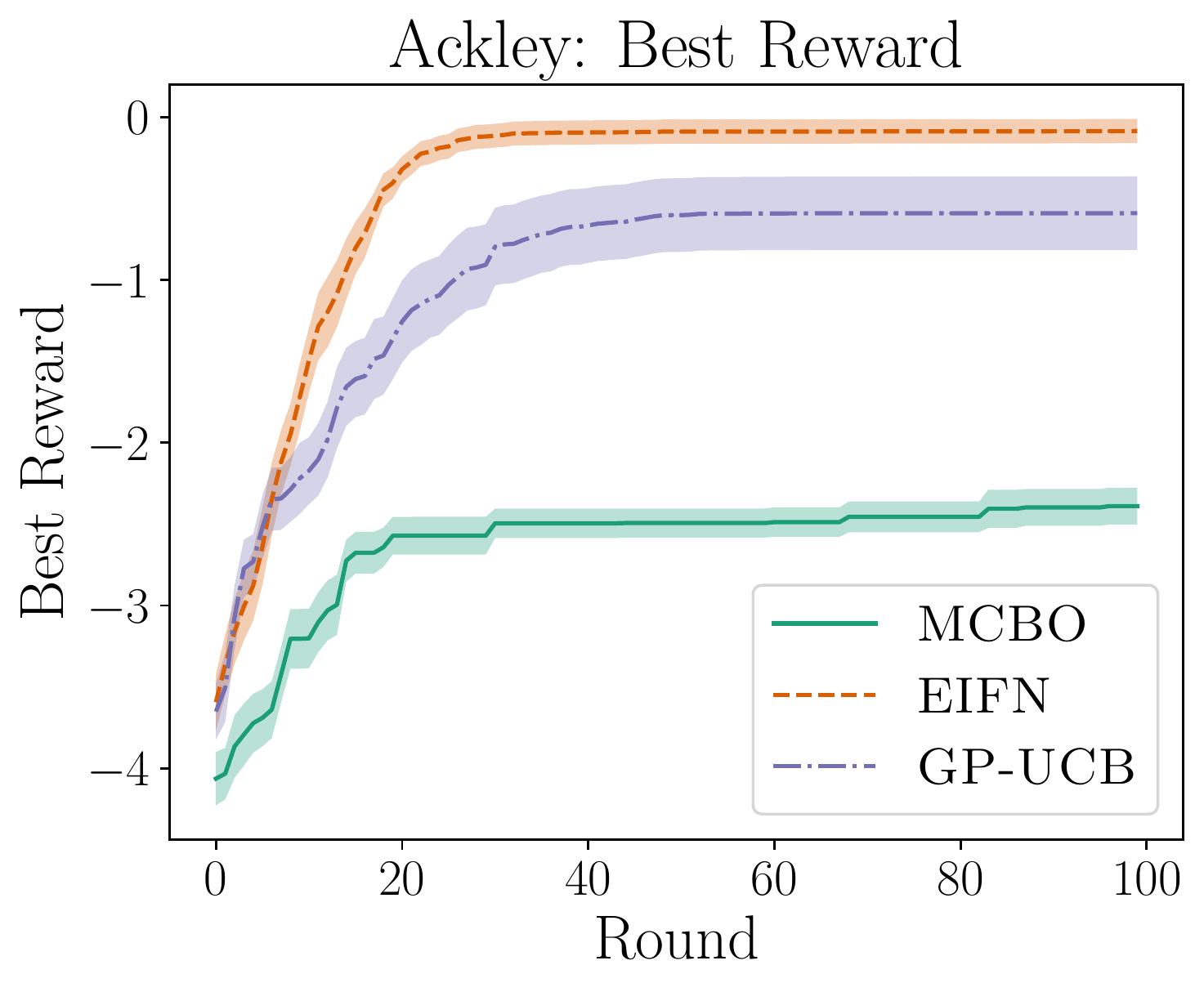}
\endminipage\hfill 

\minipage{\columnwidth/3}
\textbf{d}
\centering
\includegraphics[width=\columnwidth]{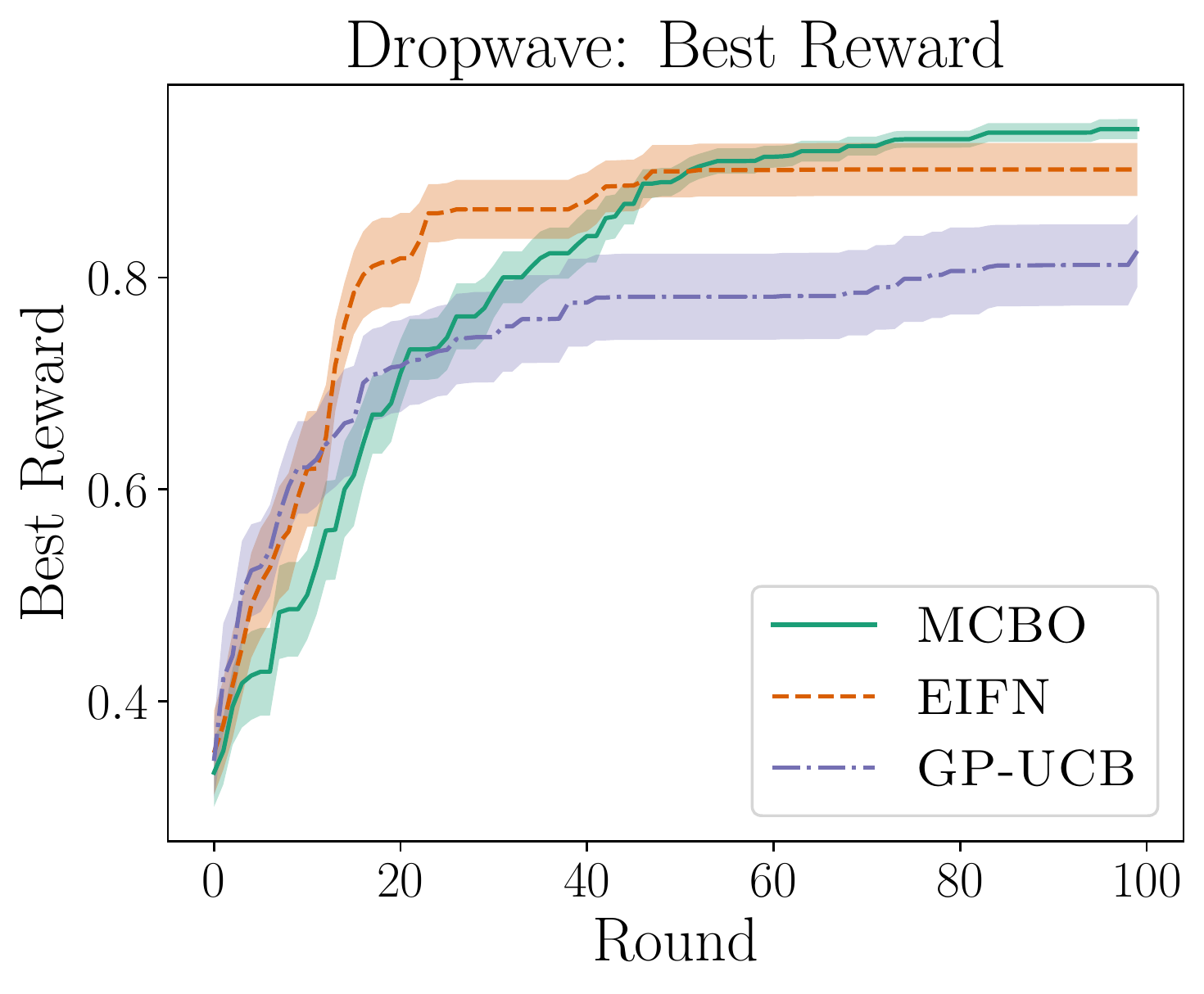}
\endminipage\hfill 
\minipage{\columnwidth/3}
\textbf{e}
\centering
\includegraphics[width=\columnwidth]{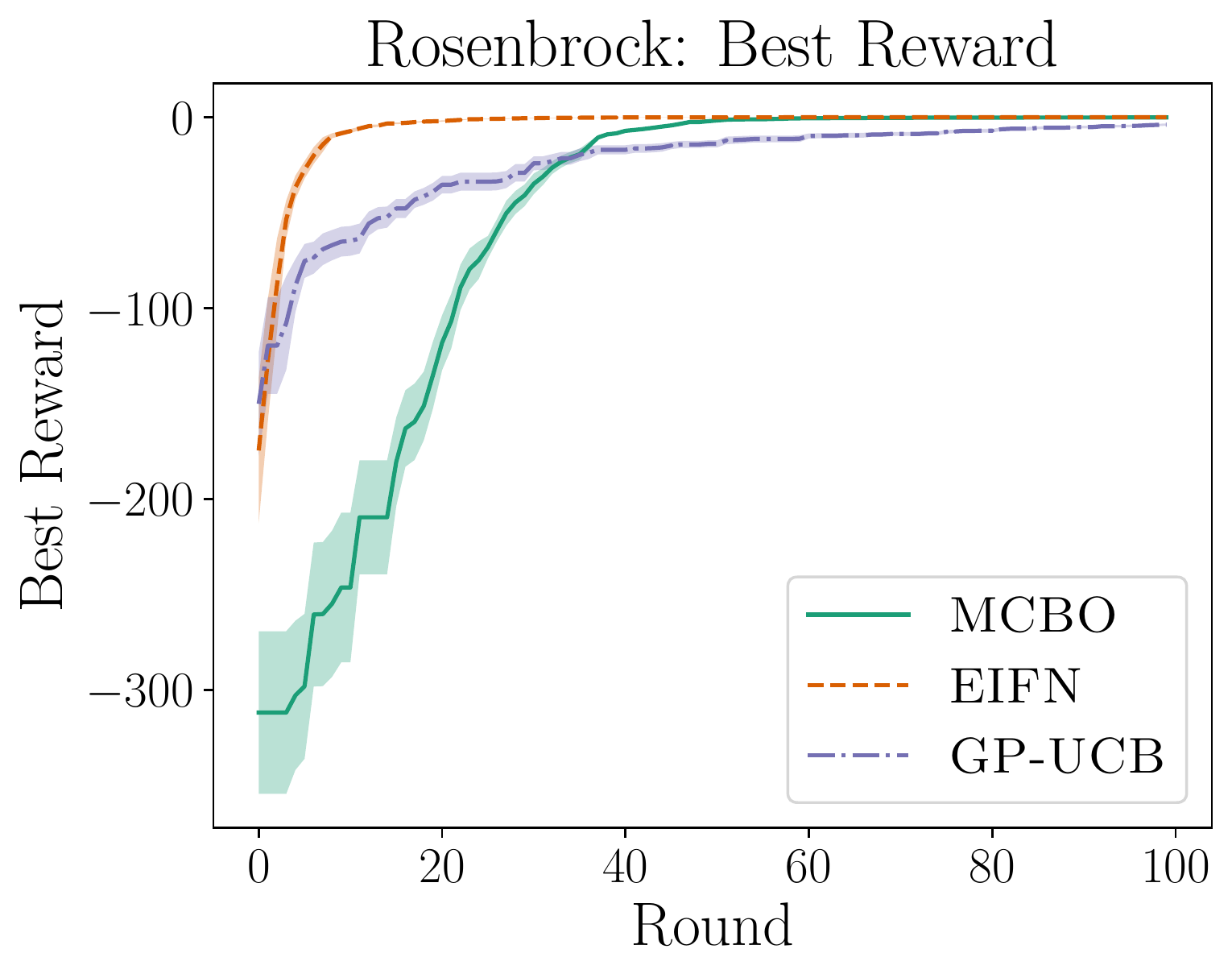}
\endminipage\hfill 
\minipage{\columnwidth/3}
\textbf{f}
\centering
\includegraphics[width=\columnwidth]{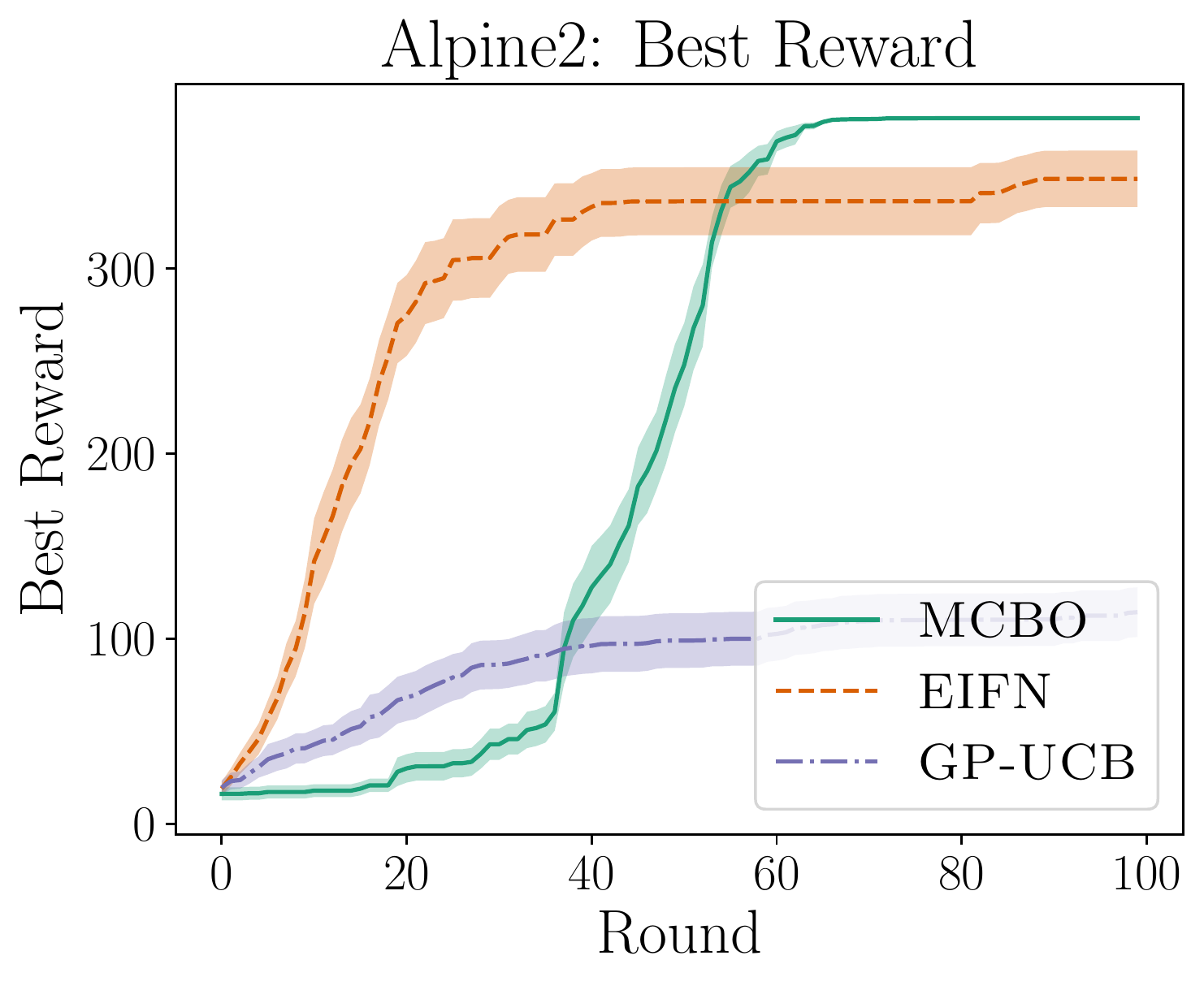}
\endminipage\hfill 

\minipage{\columnwidth/3}
\textbf{g}
\centering
\includegraphics[width=\columnwidth]{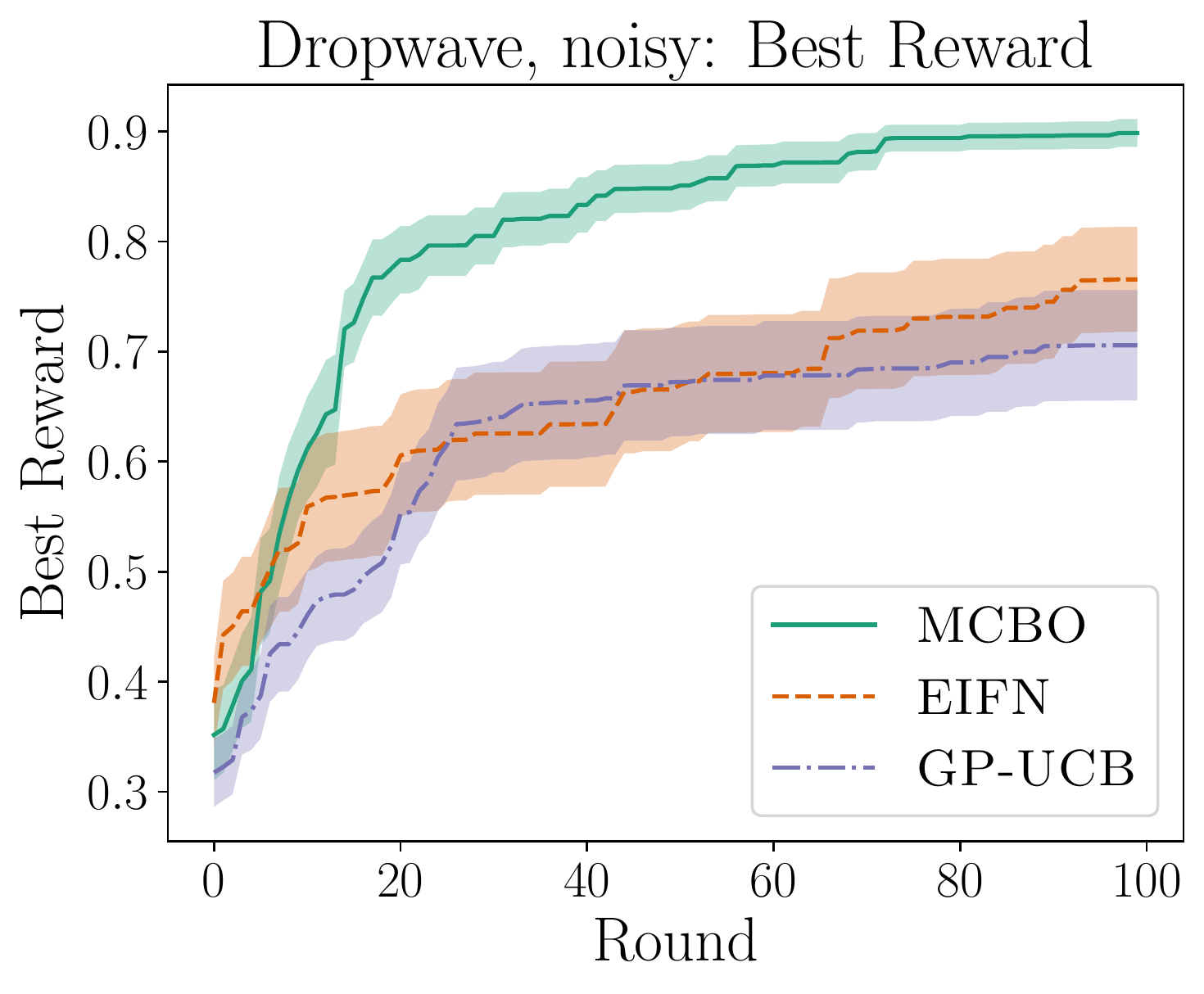}
\endminipage\hfill 
\minipage{\columnwidth/3}
\textbf{h}
\centering
\includegraphics[width=\columnwidth]{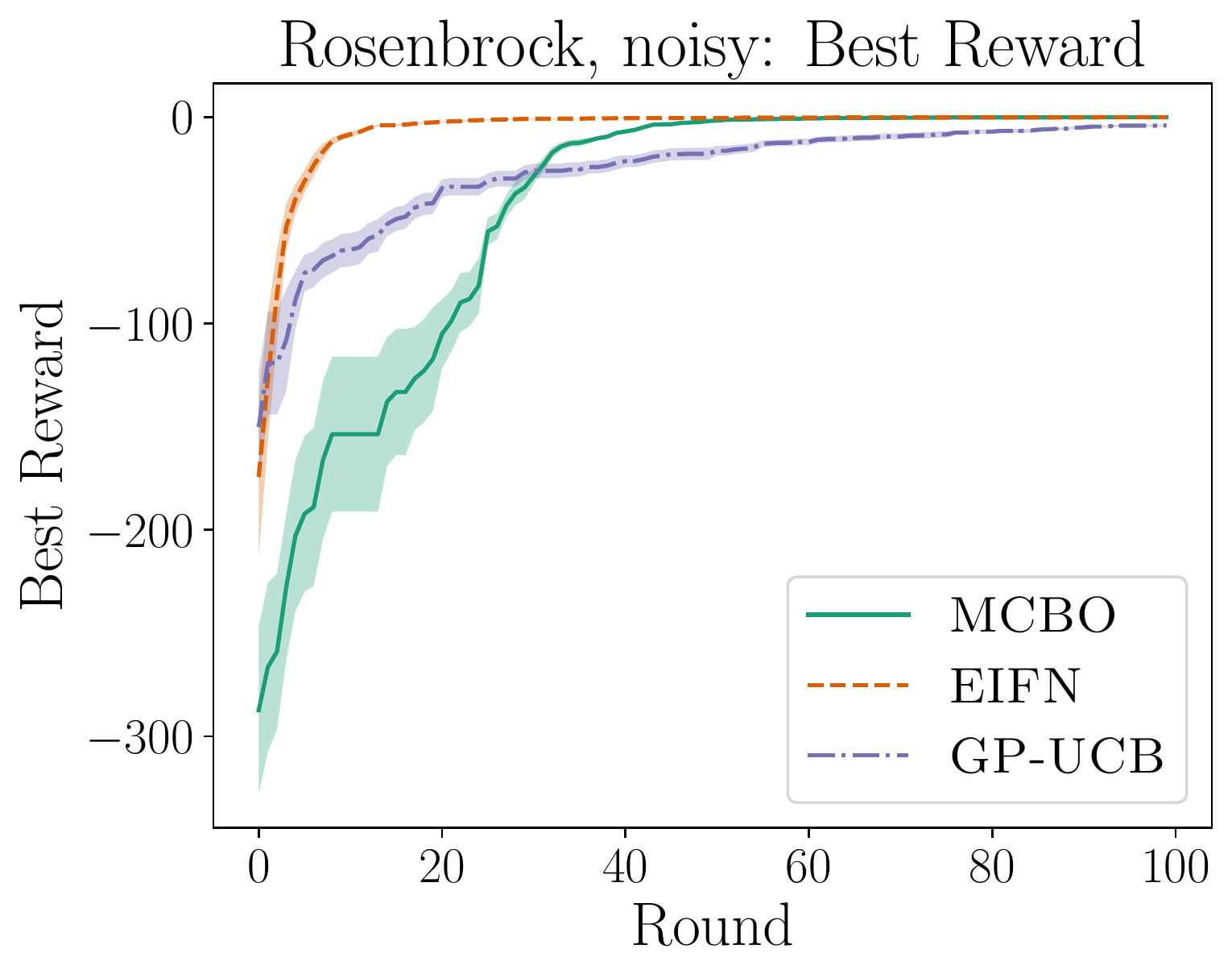}
\endminipage\hfill 
\minipage{\columnwidth/3}
\textbf{i}
\centering
\includegraphics[width=\columnwidth]{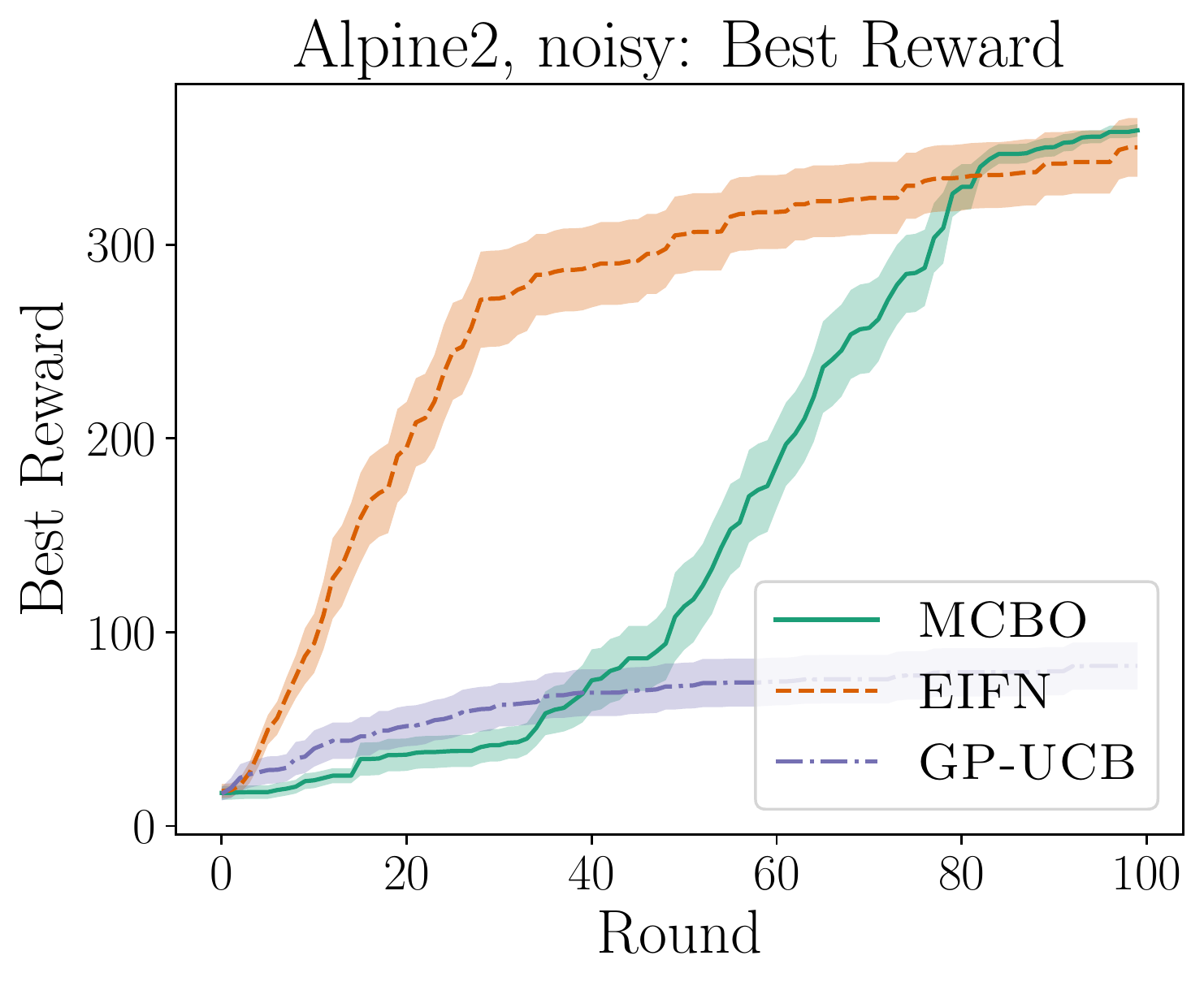}
\endminipage\hfill 

\caption{All best reward plots.} 
\label{fig:best_reward_plots}
\end{center}
\vskip -0.2in
\end{figure*}
\begin{figure*}[t]
\begin{center}
\minipage{\columnwidth / 3}
\textbf{a}
\centering
\includegraphics[width=\columnwidth ]{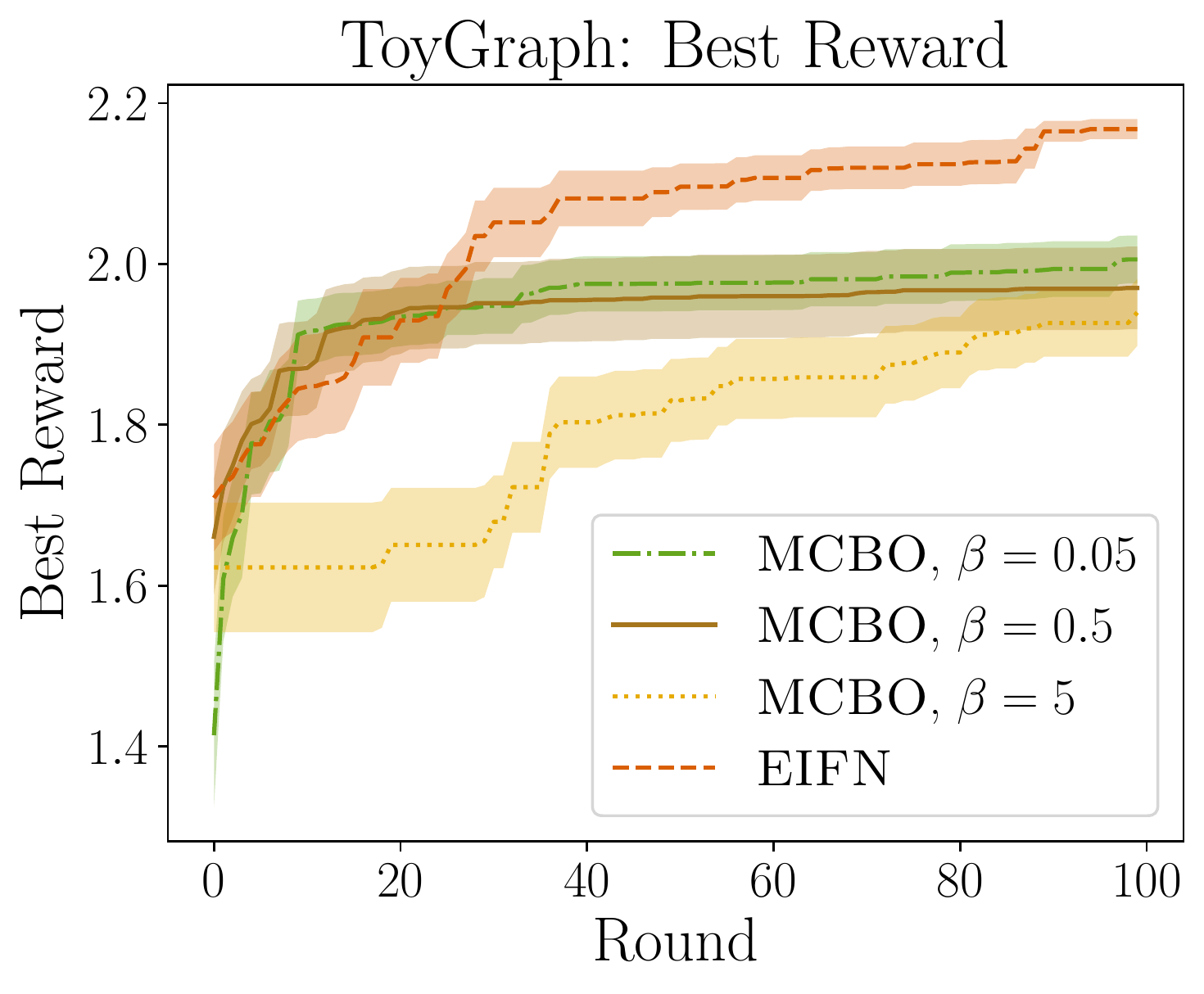}
\endminipage\hfill 
\minipage{\columnwidth / 3}
\textbf{b}
\centering
\includegraphics[width=\columnwidth ]{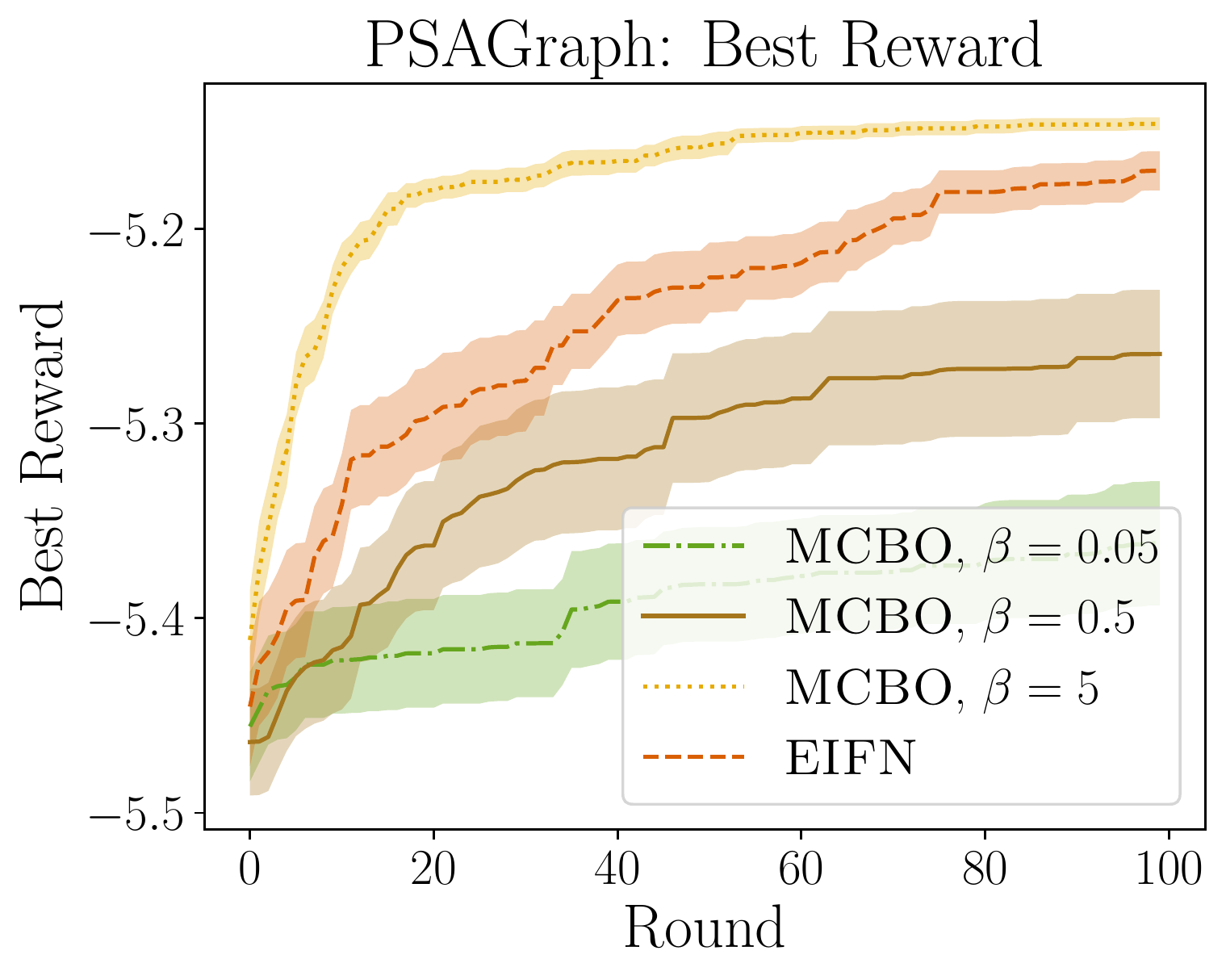}
\endminipage\hfill 
\minipage{\columnwidth / 3}
\textbf{c}
\centering
\includegraphics[width=\columnwidth ]{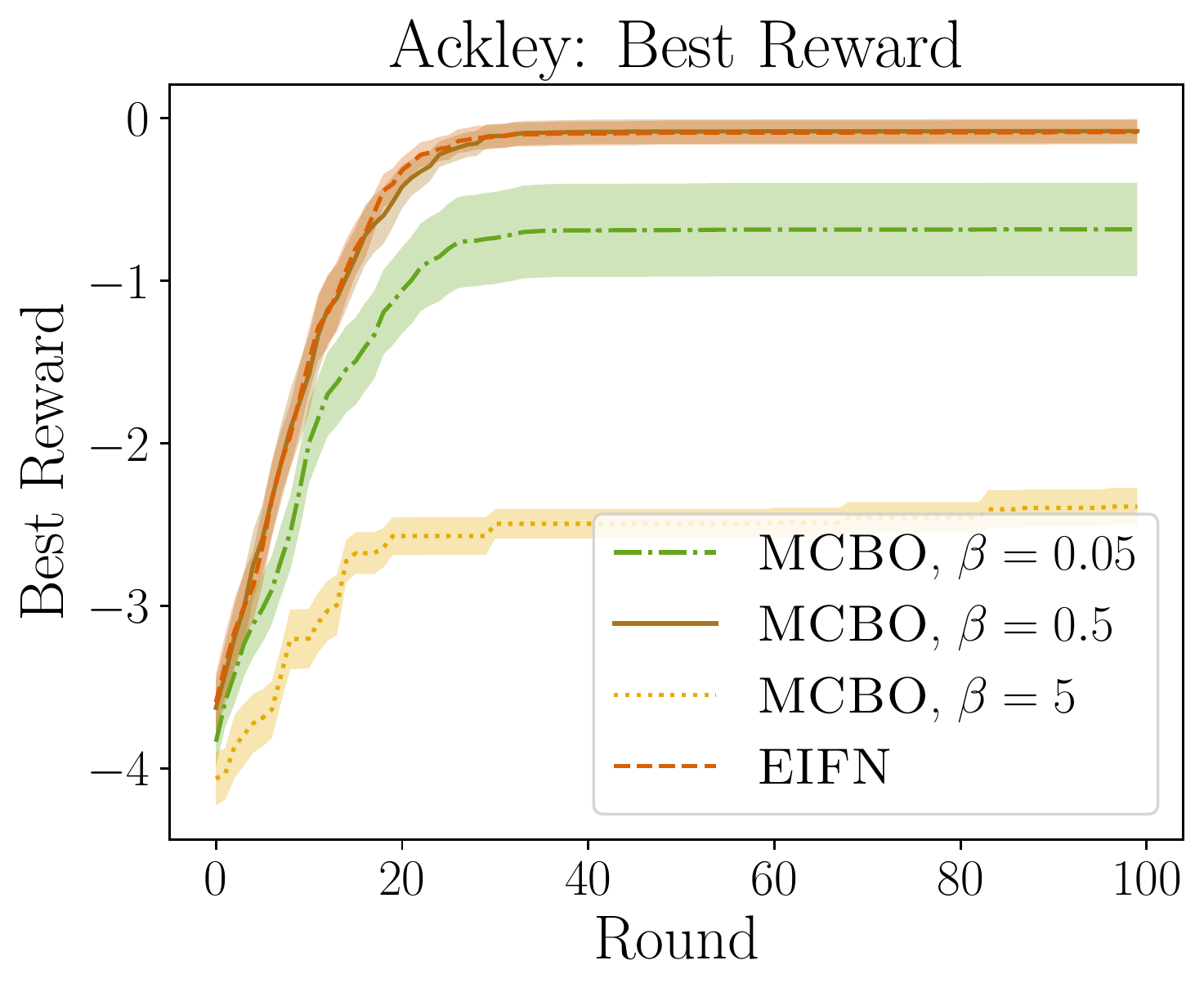}
\endminipage\hfill 

\minipage{\columnwidth / 3}
\textbf{d}
\centering
\includegraphics[width=\columnwidth ]{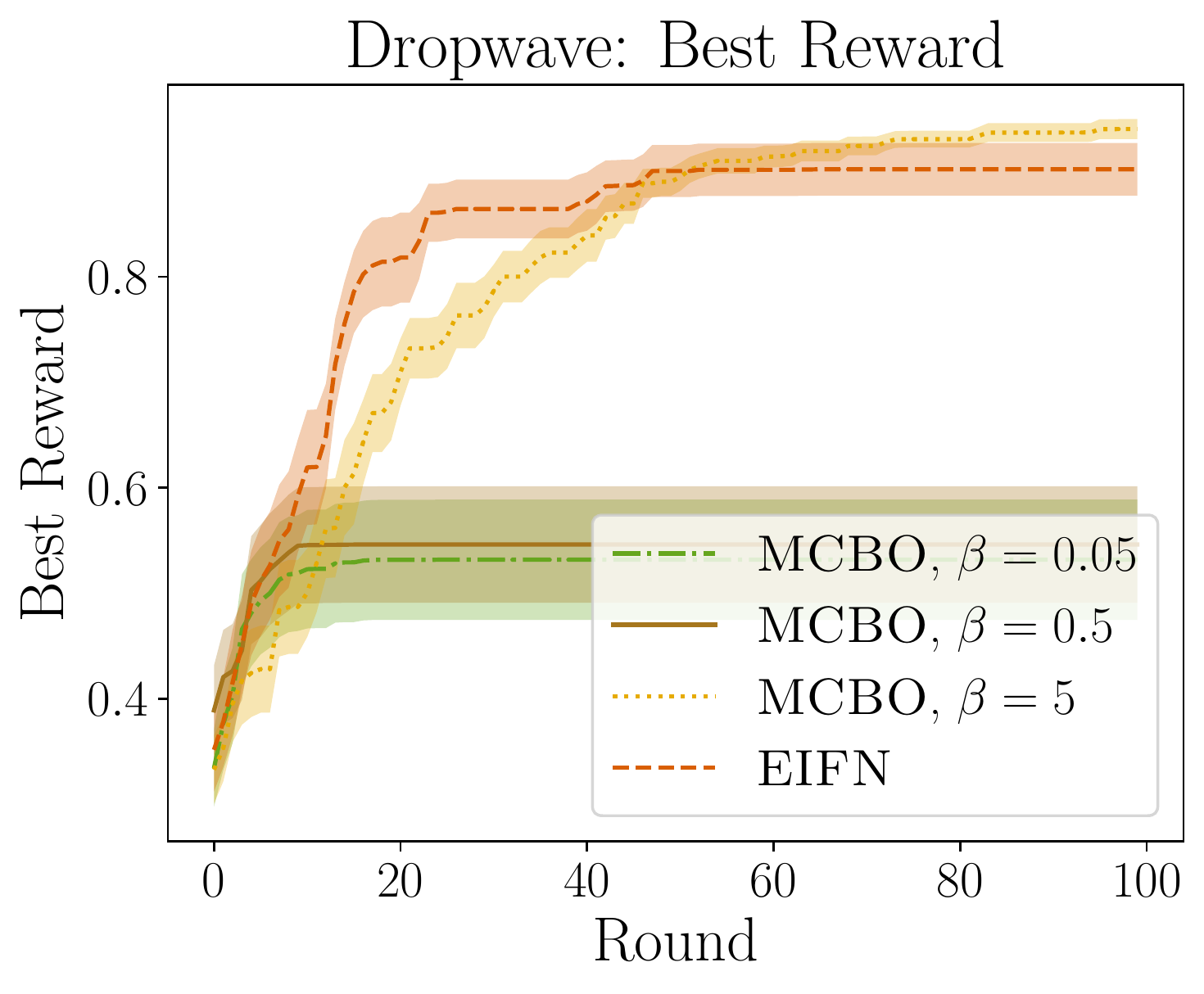}
\endminipage\hfill 
\minipage{\columnwidth / 3}
\textbf{e}
\centering
\includegraphics[width=\columnwidth ]{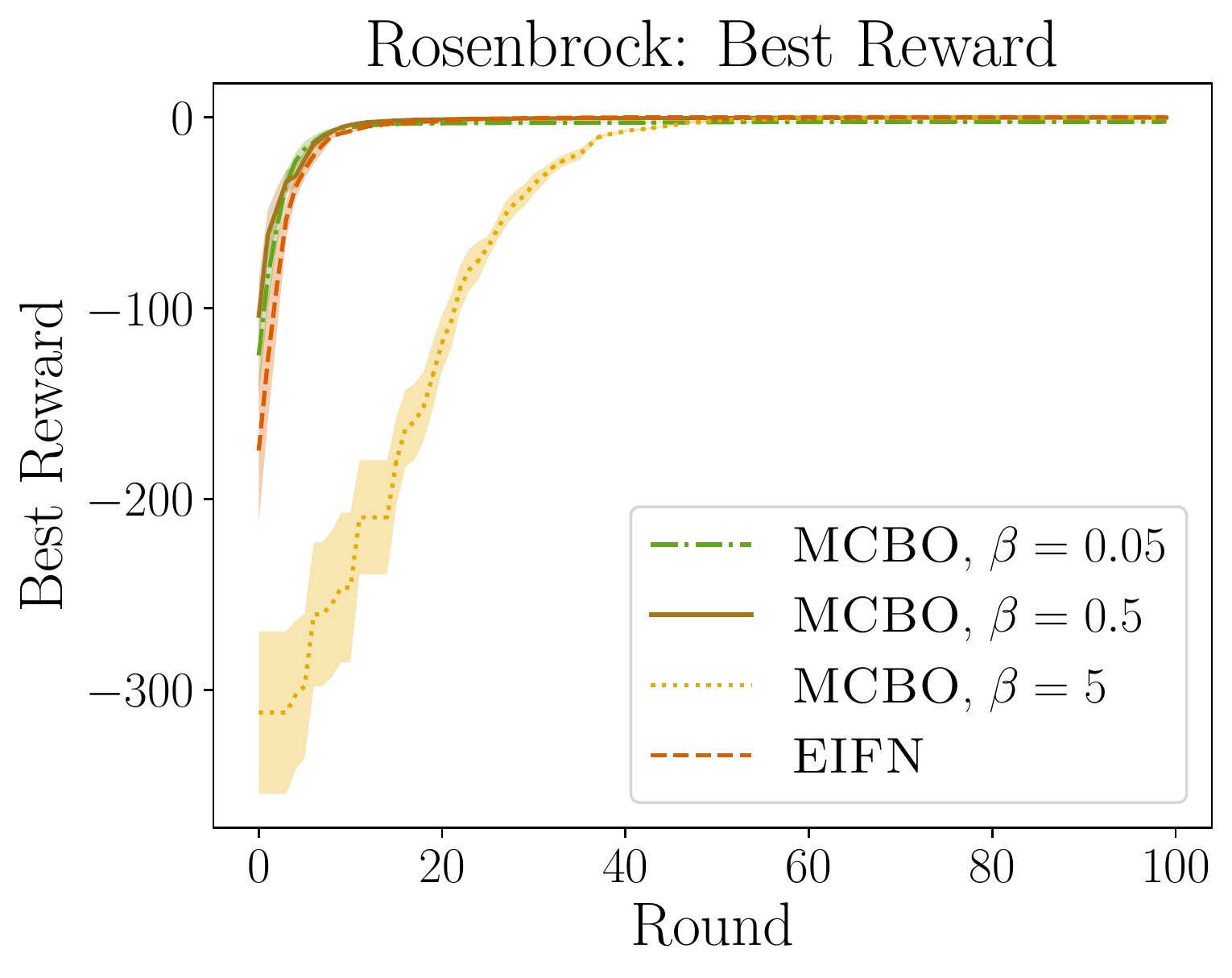}
\endminipage\hfill 
\minipage{\columnwidth / 3}
\textbf{f}
\centering
\includegraphics[width=\columnwidth ]{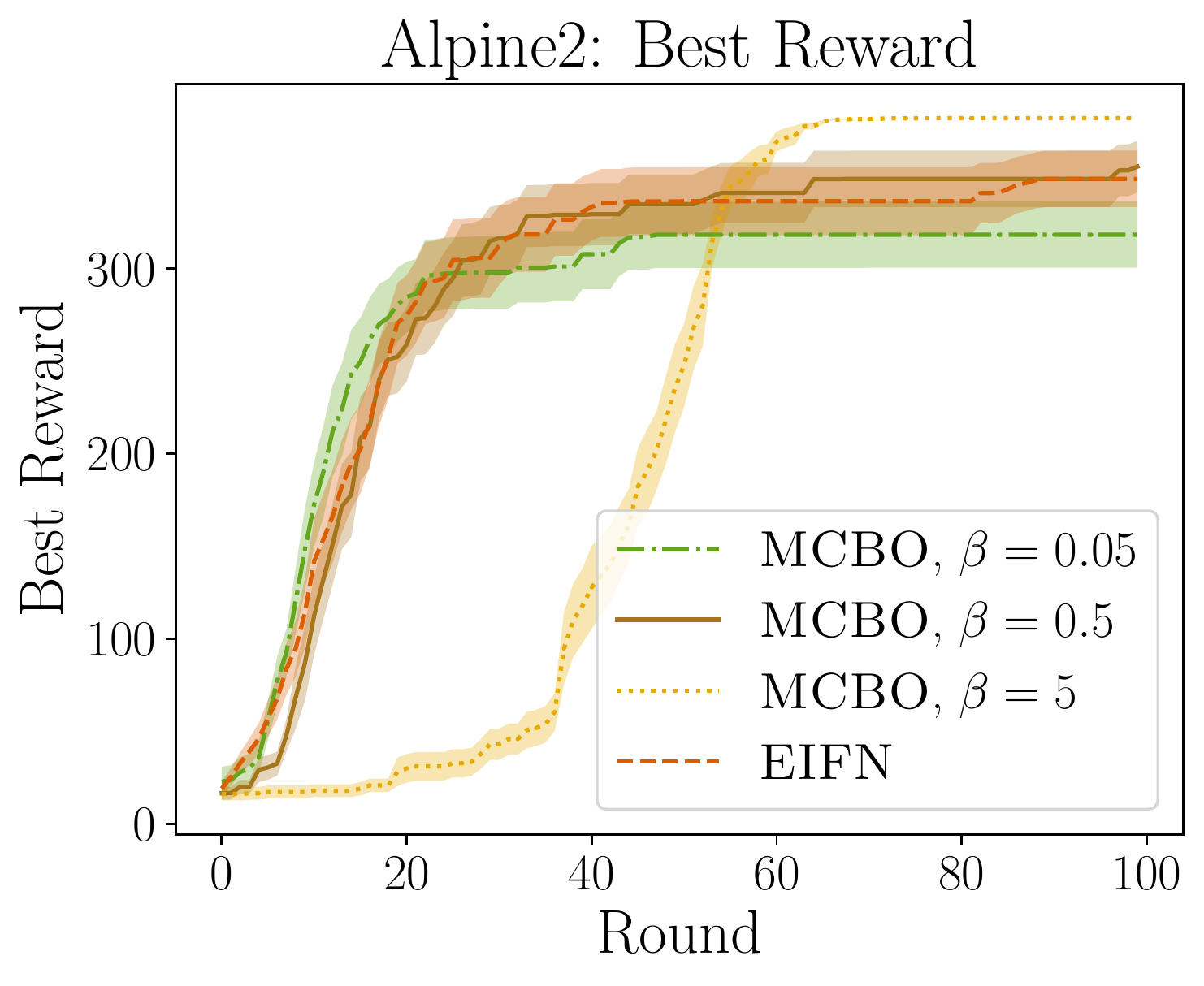}
\endminipage \hfill

\minipage{\columnwidth / 3}
\textbf{g}
\centering
\includegraphics[width=\columnwidth ]{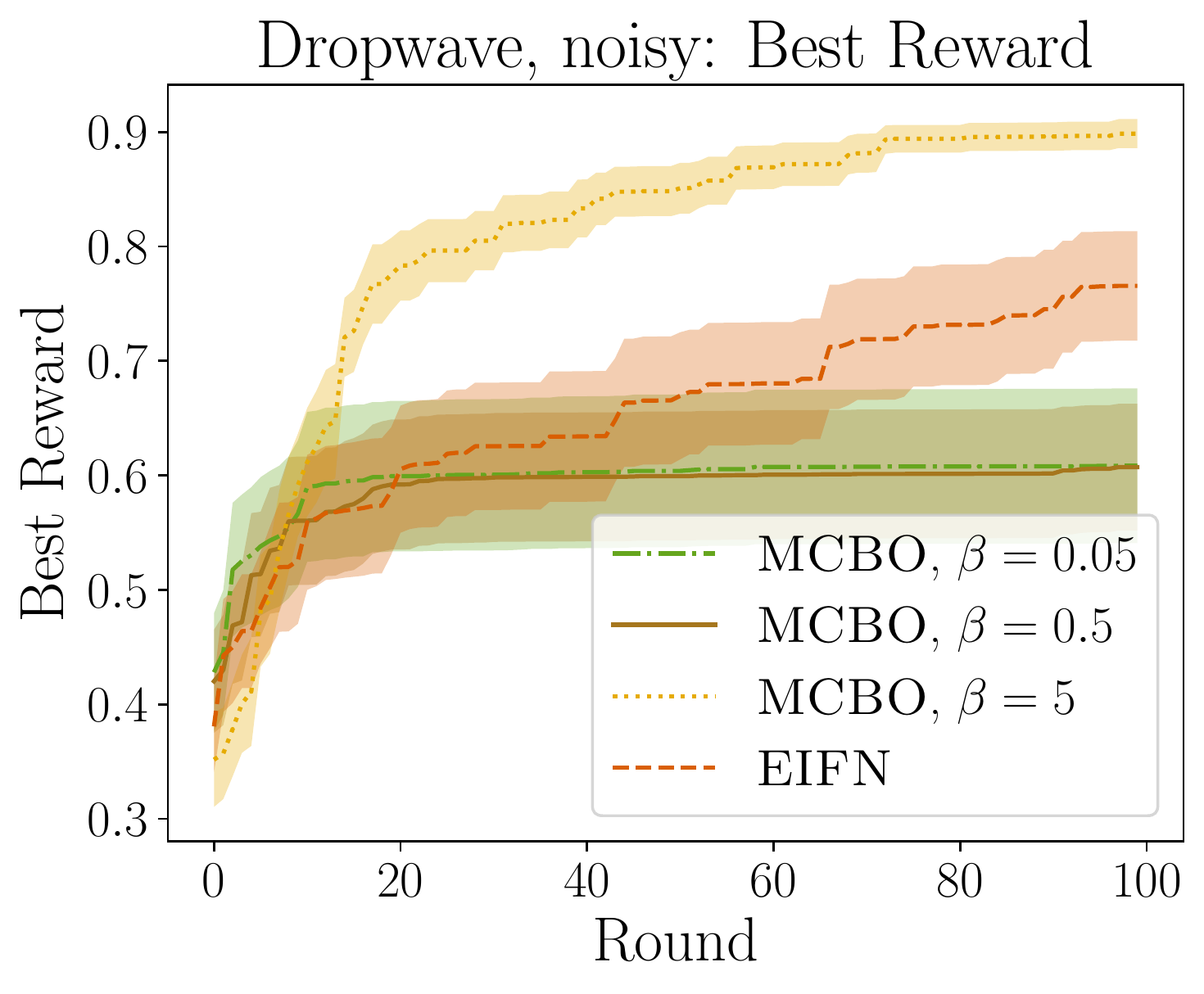}
\endminipage\hfill 
\minipage{\columnwidth / 3}
\textbf{h}
\centering
\includegraphics[width=\columnwidth ]{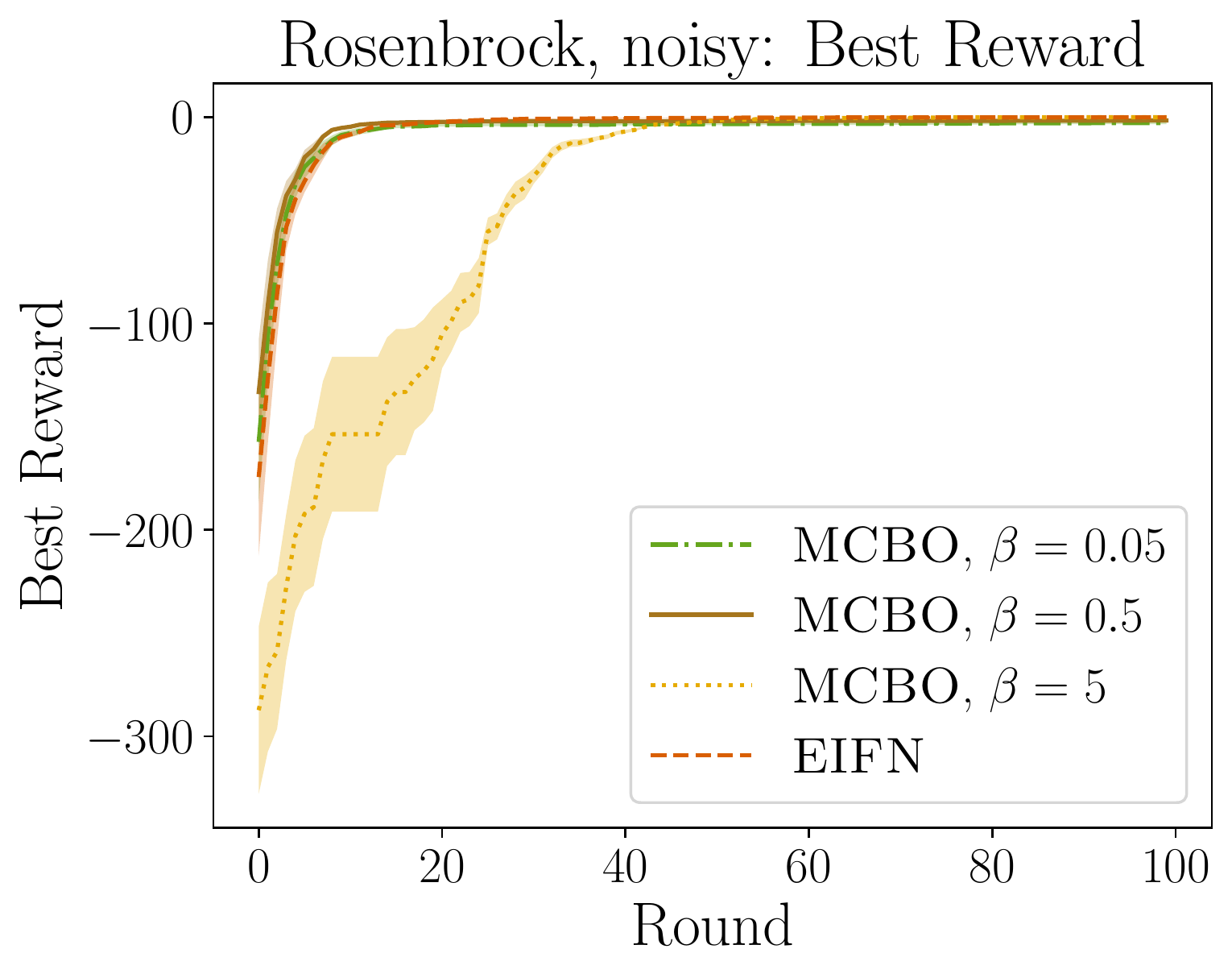}
\endminipage\hfill 
\minipage{\columnwidth / 3}
\textbf{i}
\centering
\includegraphics[width=\columnwidth ]{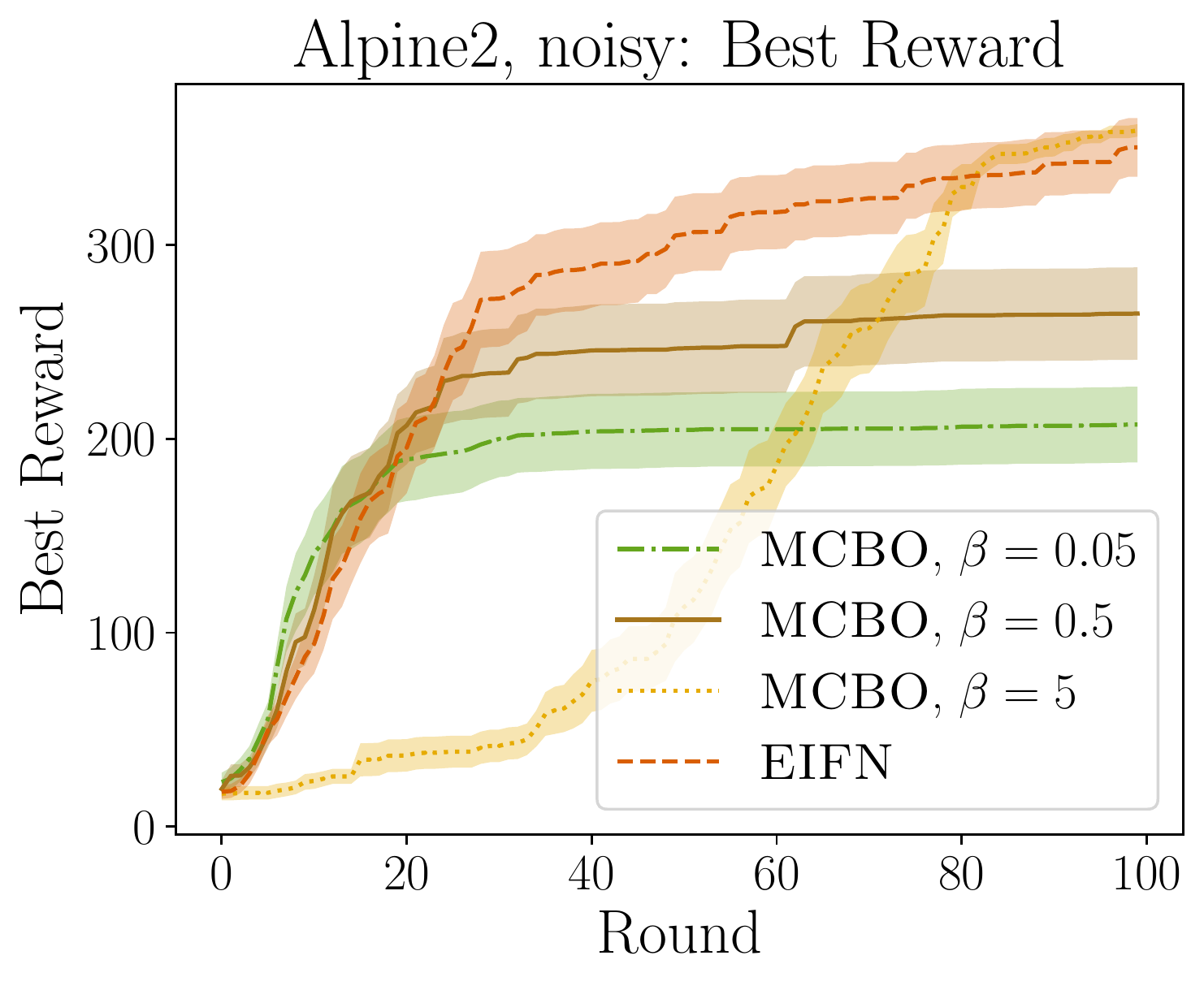}
\endminipage 
\caption{An ablation across $\beta$ where we plot best reward for all tasks.} \label{fig:beta_plots_simple}
\end{center}
\vskip -0.2in
\end{figure*}
\textbf{Best expected reward} Here we also report the best expected reward as a function of the number of rounds $T$. The best expected reward at time $T$ is defined by 

$$\underset{\a \in \{\a_t\}_{t=0}^T}{\max} \E[\omega]{Y \mid \a}.$$

This is similar to but not directly inversely related to simple regret. Simple regret algorithms require an inference procedure to select a final action after $T$ rounds of exploration. This procedure could be computationally expensive for the CBO setting. For example, a reasonable choice would be to report a final action based upon maximizing a lower confidence bound of the objective in \cref{eq:problem_statement}, however a closed form expression for this does not exist for CBO. Best expected reward instead assumes access to an oracle that computes the highest expected reward of any action the algorithm has previously chosen. Our plotting of the best expected reward is consistent with previous work on CBO \citep{agliettiCausalBayesianOptimization2020}.

When performing the same cross-validation procedure as for the average expected reward case, we find that the performance of \alg varies drastically across tasks. This is shown in Figure~\ref{fig:best_reward_plots}. Selecting $\beta$ is known to be a difficulty with UCB-based method \citep{merrill2021empirical}. Figure~\ref{fig:beta_plots_simple} shows the performance in terms of best reward for all $3$ possibilities for $\beta$. When selecting just the best $\beta$ among the $3$ tried, we find that \alg is very strong on all tasks (with the exception of best reward on ToyGraph, where it underperforms the baselines). The optimal $\beta$ can vary by an order of magnitude or more across tasks in our case. This is likely because all of our settings have very different functional relationships and graph structures. As discussed in \cref{sec:experiments}, this can be understood from our cumulative regret guarantee in \cref{theorem}. 

The plots of average and best scores for other values of $\beta$ (\cref{fig:beta_plots_cumulative,fig:beta_plots_simple}) suggest that $\beta$ can be made larger for $\alg$ to trade-off exploration vs. exploitation depending on whether simple or cumulative regret is more of a priority. Meanwhile, expected improvement methods are focused almost almost entirely on exploration. 

\begin{figure*}[t]
\begin{center}\minipage{\columnwidth / 3}
\textbf{a}
\centering
\includegraphics[width=\columnwidth ]{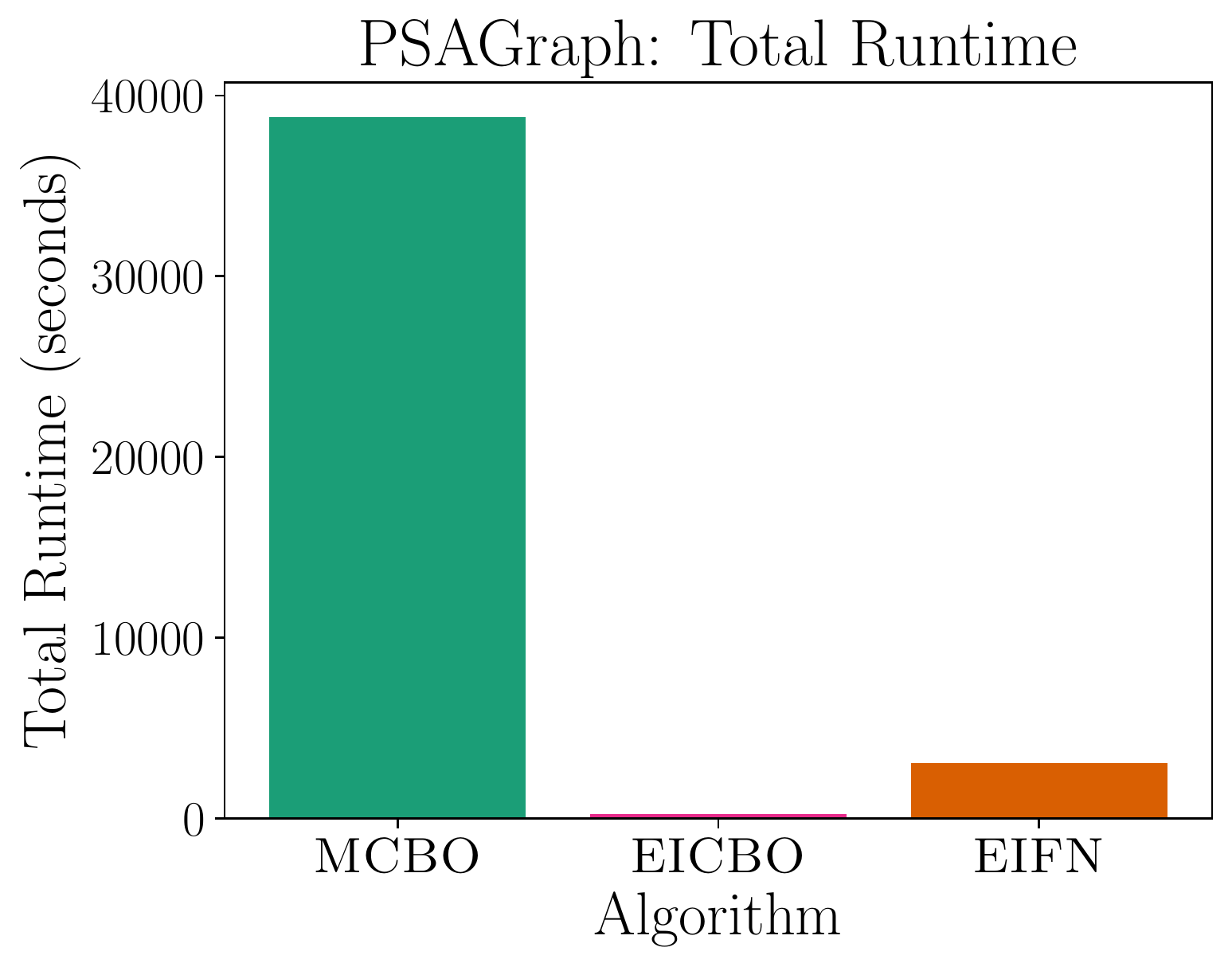}
\endminipage\hfill 
\minipage{\columnwidth / 3}
\textbf{b}
\centering
\includegraphics[width=\columnwidth ]{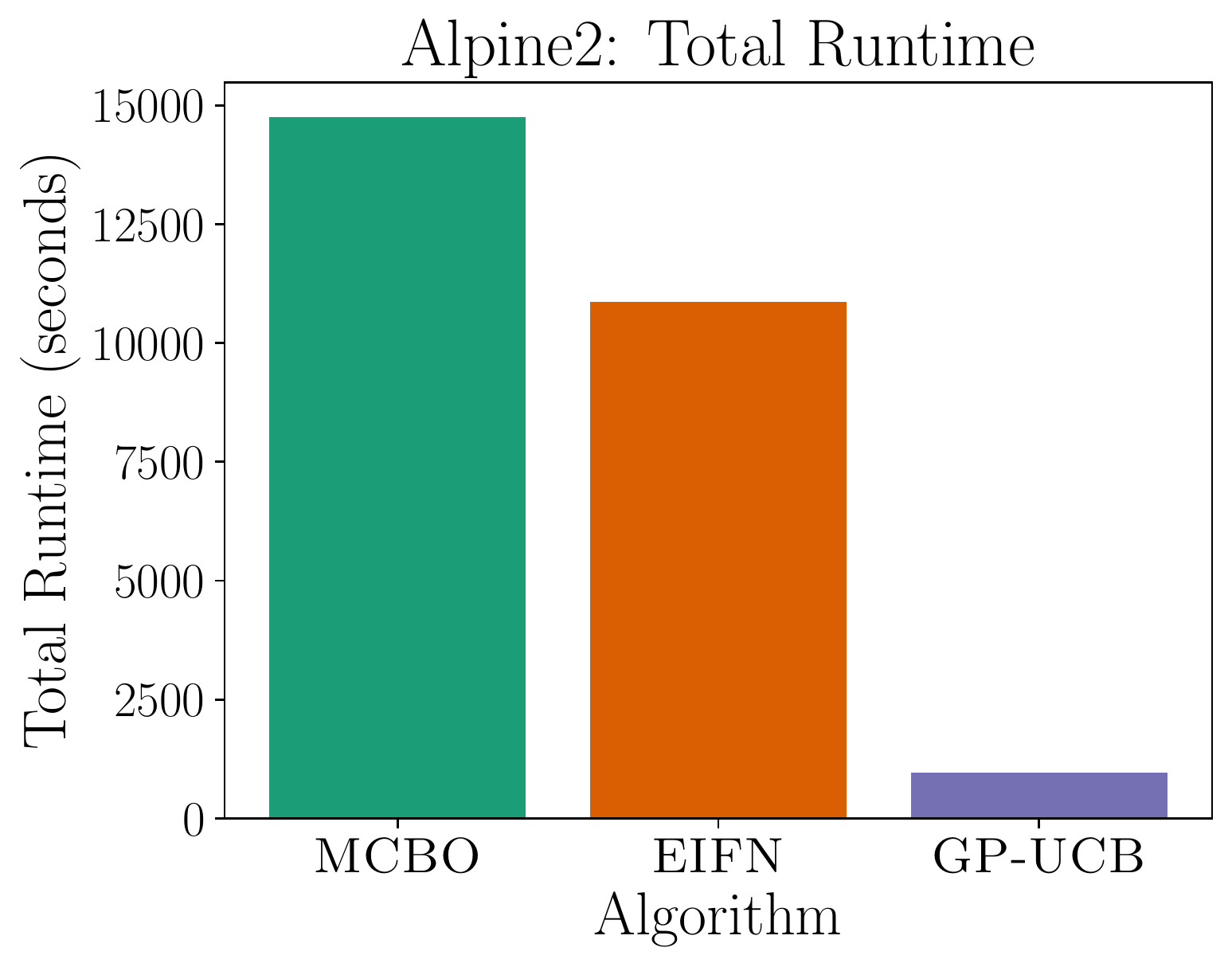}
\endminipage\hfill 
\minipage{\columnwidth / 3}
\textbf{c}
\centering
\includegraphics[width=\columnwidth ]{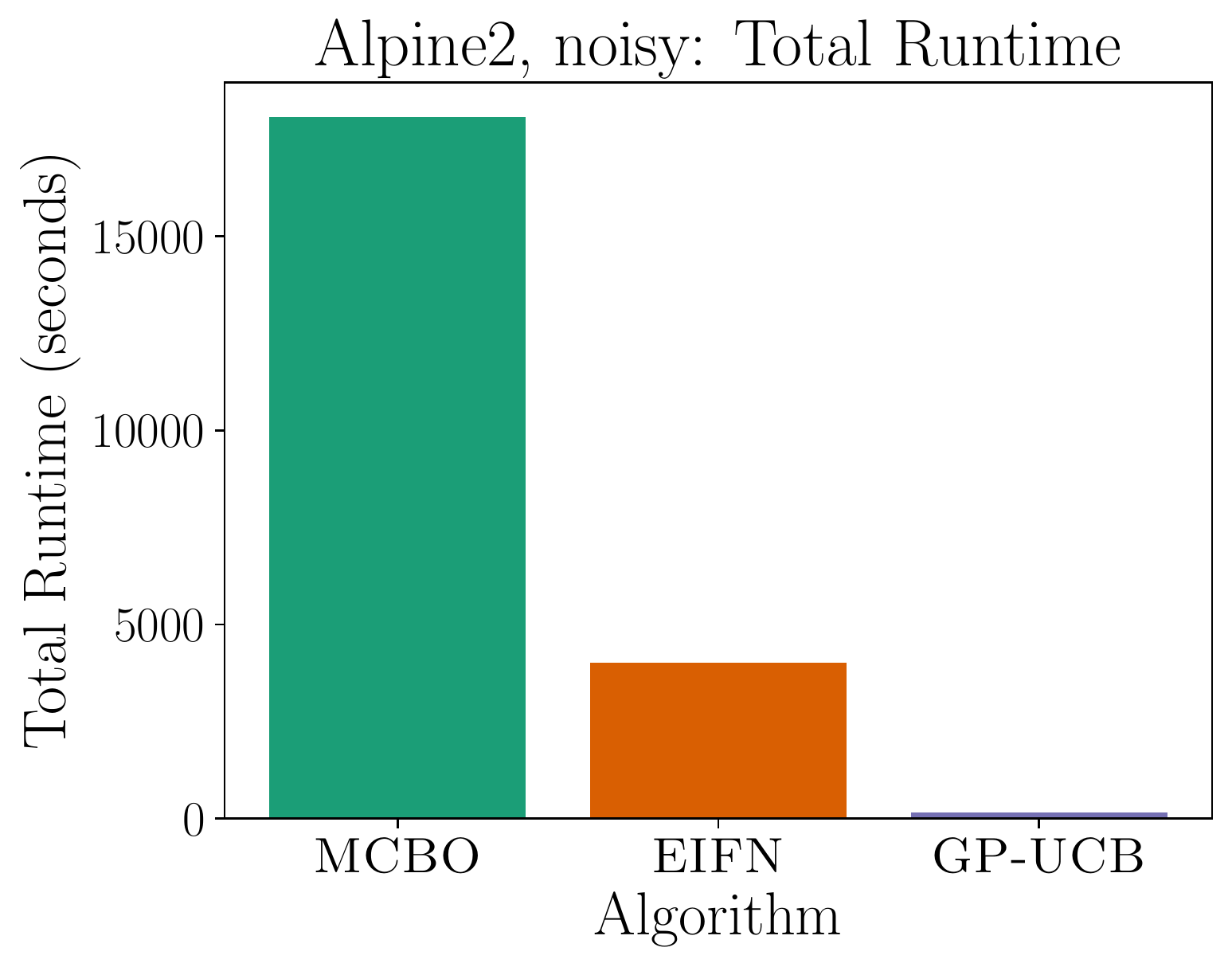}
\endminipage\hfill 

\caption{Runtimes of experiments for a \textbf{a}) CBO setting, \textbf{b}) noiseless function network setting, and \textbf{c}) noisy function network setting.} \label{fig:runtimes}
\end{center}
\vskip -0.2in
\end{figure*}
\textbf{Runtimes} In Figure~\ref{fig:runtimes} we show the total runtime of each method for all $100$ rounds. \eifn and \alg run on equivalent hardware, which has 4 times more ram than  the hardware used  for \ucb and \eicbo. \alg generally has a much longer runtime than \eifn in noisy settings where the $\eta_i$ are parameterized by neural networks. Generally in BO, because the unknown function is often assumed expensive to evaluate, we are less concerned about the time required to optimize the acquisition function and more concerned with notions of statistical efficiency such as regret. Our implementation for noisy settings could likely be sped-up with more parallelism to make it more comparable to \eifn runtimes. In noiseless settings, where the optimization methods used are roughly equivalent between \eifn and \alg, the two methods have comparable runtimes. 

\textbf{Non-monotonic regret of \eicbo} 
In \cref{fig:main_plots} (a,b) we found that \eicbo had a non-monotonic average reward, which could likely be explained by the use of an expected improvement acquisition function. We decided to clarify that the non-monotonic average reward was because of the algorithm and not because of differences in our setup compared to \citet{agliettiCausalBayesianOptimization2020}. Besides us evaluating in terms of average reward instead of best reward, compared to \citet{agliettiCausalBayesianOptimization2020} we also used smaller initial sample sizes (random samples the agent obtains before $t=0$) of observational data. This was to make the setting more challenging, since we found that with extra observational data ToyGraph and PSAGraph could be solved with very few samples. When reproducing the experiments of \cref{fig:main_plots} (a, b) with additional starting observational data, we found the same qualitative results (non-montonicity) for the average reward case. Note that this result is not inconsistent at all with the experimental results of \citet{agliettiCausalBayesianOptimization2020}, since they only evaluate in terms of best reward.

\end{document}